\documentclass{article}
\usepackage{amsmath,amssymb,amsthm,mathtools}
\usepackage{xspace}

\theoremstyle{plain}
\newtheorem{theorem}{Theorem}
\newtheorem{lemma}{Lemma}
\newtheorem{proposition}{Proposition}
\newtheorem{corollary}{Corollary}
\theoremstyle{definition}

\usepackage{latex_macros}
\renewcommand{\figdir}{.}

\providecommand{\real}{\mathbb{R}}
\providecommand{\Exp}{\mathbb{E}}

\providecommand{\mycomment}[1]{}

\usepackage{fullpage}
\usepackage{xcolor}
\IfFileExists{fontawesome.sty}{
\usepackage{fontawesome5}

}{

}

\usepackage[
colorlinks=true,
linkcolor=magenta,
citecolor=blue,
urlcolor=blue
]{hyperref}

\usepackage[capitalize,noabbrev,nameinlink]{cleveref}
\crefname{assumption}{Assumption}{Assumptions}
\Crefname{assumption}{Assumption}{Assumptions}

\usepackage{enumitem}

\newcommand{\Xvar}{\ensuremath{X}}
\newcommand{\usedim}{d}

\newcommand{\defn}{\coloneqq}

\newcommand{\lam}{\lambda}

\newcommand{\Denoise}{\mu}
\newcommand{\Info}{\mathsf{I}}

\newcommand{\Sinfo}{\mathsf{S}}
\newcommand{\Ginfo}{\mathsf{G}}
\newcommand{\Ghat}{\widehat{\Ginfo}}

\newcommand{\DenKL}{\Gamma_{\scaleto{\operatorname{prod}}{5pt}}}

\newcommand{\Law}{\mathcal{L}}

\newcommand{\KL}{\ensuremath{D_{\scaleto{\operatorname{KL}}{4pt}}}}

\newcommand{\Zvar}{Z}

\newcommand{\Zhat}{\ensuremath{\widehat{Z}}}

\newcommand{\Xhat}{\ensuremath{\widehat{X}}}

\newcommand{\ShannonInfo}{\ensuremath{\operatorname{Info}}}
\newcommand{\ShanInfo}{\ShannonInfo}

\newcommand{\Prob}{\ensuremath{\mathbb{P}}}

\newcommand{\Ent}{\ensuremath{\operatorname{Ent}}}

\newcommand{\Partition}{\ensuremath{\mathcal{P}}}

\renewcommand{\Xhat}{\widehat{X}}

\newcommand{\Alphabet}{\ensuremath{\mathcal{A}}}

\newcommand{\jind}{i}

\newcommand{\ind}{i}

\makeatletter
\let\MaskOriginalNewCommand\newcommand
\def\newcommand#1{%
\@ifundefined{\expandafter\@gobble\string#1}
{\MaskOriginalNewCommand{#1}}
{\renewcommand{#1}}}
\makeatother

\newcommand{\DenoiseHat}{\widehat{\Denoise}}
\newcommand{\DenHat}{\ensuremath{\DenoiseHat}}
\newcommand{\Info}{\ShanInfo}

\newcommand{\Dinfo}{\mathsf{D}}
\newcommand{\Ifun}{\ensuremath{C}}
\newcommand{\Bivq}{\ensuremath{\mathsf{Q}_{\scaleto{\igc}{4pt}}}}

\newcommand{\Law}{\mathcal{L}}

\newcommand{\KL}{\ensuremath{D_{\scaleto{\operatorname{KL}}{4pt}}}}

\newcommand{\rhat}{\widehat r}

\newcommand{\Zvar}{Z}
\newcommand{\Zsam}[1]{\ensuremath{Z^{(#1)}}}

\newcommand{\Zhat}{\ensuremath{\widehat{Z}}}

\newcommand{\ShannonInfo}{\ensuremath{\operatorname{Info}}}
\newcommand{\ShanInfo}{\ShannonInfo}

\providecommand{\Order}{\mathcal{O}}

\newcommand{\Prob}{\ensuremath{\mathbb{P}}}

\newcommand{\Ent}{\ensuremath{\operatorname{Ent}}}

\newcommand{\Partition}{\ensuremath{\mathcal{P}}}

\newcommand{\PartComp}{\mathsf{C}_{\scaleto{\operatorname{\igc}}{4pt}}}
\newcommand{\PartCompHat}{\widehat{\mathsf{C}}_{\scaleto{\operatorname{\igc}}{4pt}}}

\newcommand{\Xhat}{\widehat{X}}

\newcommand{\bind}{k}
\newcommand{\Btot}{K}

\newcommand{\block}{b}
\newcommand{\jind}{i}

\newcommand{\igclong}{interaction growth complexity\xspace}

\newcommand{\prd}{\textsf{PRD}\xspace}
\newcommand{\prdlong}{product-reference diffusion\xspace}

\newcommand{\dgc}{\textsf{DGC}\xspace}

\newcommand{\ugc}{\textsf{UGC}\xspace}
\newcommand{\igc}{\textsf{IGC}\xspace} \newcommand{\SR}{SR\xspace}

\newcommand{\Ber}{\textsf{Ber}\xspace}

\newcommand{\numobs}{\ensuremath{m}}

\newcommand{\qdens}{\mathsf{q}}

\newcommand{\rhohat}{\ensuremath{\widehat{\rho}}}
\newcommand{\PartHfine}{\mathsf{P}_{\scaleto{\mathrm{\igc}}{4pt}}}

\newcommand{\DenError}{\ensuremath{\mathcal{E}}}

\makeatletter \long\def\@makecaption#1#2{
\vskip 0.8ex \setbox\@tempboxa\hbox{\small {\bf #1:} #2} \parindent
1.5em \dimen0=\hsize 
\advance\dimen0 by -3em \ifdim \wd\@tempboxa >\dimen0 \hbox to \hsize{
  \parindent 0em \hfil
\parbox{\dimen0}{\def\baselinestretch{0.96}\small {\bf #1.} #2
}\hfil} \else \hbox to \hsize{\hfil \box\@tempboxa \hfil} \fi }
\makeatother

\newenvironment{researchquestion}
               {\begin{center}\begin{minipage}{0.98\textwidth}}
               {\end{minipage}\end{center}}
               \let\newcommand\MaskOriginalNewCommand

 \newcommand{\onevec}{\mathbf{1}}

\newcommand{\Uvar}{\ensuremath{U}}

 \newcommand{\rfinal}{\rtime_N}

\newcommand{\newrfinal}{R}

\newcommand{\newrinit}{{\rtime_0}}

\newcommand{\PlainKer}{\ensuremath{\mathbb{K}}}

\newcommand{\Kexact}[2]{\PlainKer_{#1, #2}}
\newcommand{\Khat}[2]{\widehat{\PlainKer}_{#1, #2}}
\newcommand{\Ktilde}[2]{\widetilde{\PlainKer}_{#1, #2}}

\newcommand{\rtime}{r}
\newcommand{\odds}{\ensuremath{\psi}}

\newcommand{\logit}{\ensuremath{\varphi}}
 \newcommand{\logodds}{\logit}

\newcommand{\revinv}{\logit^{-1}}

\newcommand{\DTC}{\ensuremath{\mathsf{DTC}}}
\newcommand{\TC}{\ensuremath{\mathsf{TC}}}

\newcommand{\Elld}{\ell_\usedim}

\newcommand{\SB}{\PartComp} \newcommand{\FP}{\PartHfine}

\newcommand{\BOUNDARY}{ \TC(\Xvar_{\rtime_0}) + \TC(\Zvar \mid
  X_\tfinal)}

\newcommand{\BOUNDARYSPEC}{ \TC(\Xvar_{\rtime_0}) + \TC(\Zvar \mid
  X_{1 - \rtime_0})}

\newcommand{\Ker}{\ensuremath{\mathbb{K}}}
\newcommand{\IntStar}{\ensuremath{\Int_{\scaleto{\operatorname{full}}{4pt}}}}
 \newcommand{\Int}{\mathcal{I}}

\newcommand{\Ratio}{\operatorname{Ratio}}
\newcommand{\ProbZ}{\ensuremath{\Prob_\Zvar}}

\newcommand{\order}{\Order}

\newcommand{\newrevinv}{r}

\newcommand{\gfun}{\ensuremath{\mathsf{g}}}

 \newcommand{\Tmap}{\ensuremath{\mathbb{T}}}

\newcommand{\rbold}{\mathbf{r}} \newcommand{\tbold}{\mathbf{t}}

 \newcommand{\evec}{\mathbf{e}}

\newcommand{\blam}{\boldsymbol{\lambda}}

\renewcommand{\time}{r} \newcommand{\tfinal}{R}

\newcommand{\rstar}{r_0}

\newcommand{\KerTil}{\ensuremath{\mathbb{\widetilde{K}}}}
\newcommand{\Refdist}{\mathbb{V}} \newcommand{\refdist}{\nu}

\newcommand{\Tfun}{T}
\newcommand{\Dfun}{D}

\newcommand{\Pmap}{\ensuremath{\mathbb{T}}}
\newcommand{\Breg}{\mathsf{B}}
\newcommand{\ehack}{e(\alpha, \beta)}

\newcommand{\Den}{\mu}
\newcommand{\DenLoo}{\Den^{\scaleto{\operatorname{LOO}}{2pt}}}

\newcommand{\BridgeAB}{\ensuremath{\mathbb{B}^{a \to b}}}

\begin{document}

\begin{center}
  {\Large\bfseries The information geometry of product-reference
    discrete diffusion: \\ Interaction growth complexity and 
    optimal scheduling}
\vspace*{0.3in}

\begin{tabular}{c}
Martin J. Wainwright \\ \texttt{mjwain@mit.edu}
\end{tabular}

\vspace*{0.2in}
\begin{tabular}{c}
Lab for Information and Decision Systems \\ Statistics and Data
Science Center \\ EECS and Mathematics, \\ Massachusetts Institute of
Technology
\end{tabular}

\vspace*{0.25in} \today
\vspace*{0.25in}

\begin{abstract}
We study a class of product-reference diffusion algorithms for
sampling from a discrete distribution.  We show that their sampling
performance can be characterized using a path-based measure of data
geometry that we call the \emph{interaction growth complexity}
({\textsf{IGC}\xspace}).  We show that a bivariate
{\textsf{IGC}\xspace} kernel gives an exact representation of both the
KL discretization error and a simple one-step upper bound. The simpler
univariate {\textsf{IGC}\xspace} density can be used to study the
effect of stepsize choices on the iteration complexity required to
obtain $\varepsilon$-accurate samples in KL divergence.  Samplers that
traverse the path with equi-spaced steps in
log-squared-reliability-odds have performance that depends on the
aggregate {\textsf{IGC}\xspace} mass, whereas refined choices of
stepsizes have a lower complexity depending on a square-root
functional.  In the fine-grid limit, both of these characterizations
become sharp.  We also allow general product reference distributions
and show that the reference law can substantially reshape the
{\textsf{IGC}\xspace} profile and the resulting sampling complexity;
in particular, references far from both the uniform and the data
marginals can yield dimension-dependent improvements.  Finally, the
aggregate {\textsf{IGC}\xspace} mass admits bounds in terms of total
correlation and dual total correlation, thereby connecting the
pathwise geometry to classical measures of multivariate dependence.
\end{abstract}
\end{center}



\section{Introduction}
Sampling from complex high-dimensional distributions is a basic
computational problem arising throughout statistics, machine learning,
and the sciences.  It provides the foundation for Monte Carlo
methods~\cite{RobCase04,RubKroe08}, enables uncertainty quantification
in Bayesian inference~\cite{GelEtAl13,BroEtAl11}, and is a central
component of modern generative modeling~\cite{CheEtAl24}.
Diffusion-based methods approach this problem
by constructing a family of progressively corrupted distributions,
together with a reverse procedure that removes the corruption and
ultimately produces a sample from the target
law~\cite{SohEtAl15,SonErmo19,HoEtAl20,SonEtAl21,CheEtAl23c,
  CheEtAl23a,BenEtAl24}.  Much of the early theory was developed for
continuous state spaces, where Gaussian perturbations provide the
canonical forward process and the reverse dynamics are governed by
Gaussian denoising~\cite{HoEtAl20,SonEtAl21}.  For distributions with
discrete support, however, Gaussian perturbations are no longer
natural.  This fact has motivated a growing literature on discrete
diffusion models, in which the forward process is instead built from
Markovian corruption kernels on the underlying alphabet, and sampling
is carried out by approximately reversing these transitions (e.g., see
the papers~\cite{HooEtAl21,AusEtAl21,CamEtAl22,LouEtAl24,ShiEtAl24b,
  SahooEtAl2024MDLM} and references therein).  Understanding the
computational and statistical properties of such discrete diffusion
schemes is the focus of this paper.

\paragraph{Concurrent work:}  
Concurrent and independent work of Dmitriev, Huang and
Wei~\cite{DmiHuaWei26b}, posted on August 24, 2026 as this manuscript
was being finalized, studies related questions for discrete diffusion
sampling within a CTMC formalism.  We plan to coordinate with the
authors in a future revision to clarify the connections between the
two approaches.


\subsection{Our contributions}

Let us provide a high-level overview of the contributions of this
paper.  As with our previous papers on Gaussian
diffusion~\cite{Wai_DGC} and masking diffusion~\cite{Wai_UGC}, this
paper is motivated by two broad questions.  Tailoring their form to
the current setting, we ask:
\begin{researchquestion}
{\bf{Q1:}} Can the performance of discrete diffusion sampling be
explained and quantified, in some generality, by a measure tied to
information and data geometry?  \par\smallskip
{\bf{Q2:}} Is it possible to exploit such geometric measures to
design, optimize and certify sampling schemes?
\end{researchquestion}

In our previous papers~\cite{Wai_DGC,Wai_UGC}, we showed that some
fruitful answers to both questions can be given by specifying a a
suitable form of complexity for the path from ``noise'' to a fresh
sample.  For Gaussian diffusion and masked diffusion, we referred to
these notions as denoising growth complexity (\dgc), and unmasking
growth complexity (\ugc), respectively.  Both complexity measures are
defined in information-theoretic terms, via the second derivative of a
mutual information along the path.  Moreover, they lead to a geometric
understanding of the performance of sampling algorithms ({\bf{Q1}}),
while also providing a platform for optimizing and certifying their
performance ({\bf{Q2}}).

The main contribution of this paper is to extend this type of theory
via the notion of \emph{interaction growth complexity}, or \igc for
short.  This complexity measure applies to a standard class of
discrete diffusion samplers (e.g.,~\cite{HooEtAl21,AusEtAl21}) that
use product approximations.  They traverse a noising path terminating
at an arbitrary product-reference distribution, using a sequence of
product approximations along the path, so that we refer to them as
\emph{product-reference diffusion} samplers, or \prd for short. The
uniform distribution and an absorbing-state reference distribution
($\delta$-function at a single state) are two standard choices of the reference
distribution, but our treatment allows for an arbitrary choice.  Our
analysis shows that the log-odds of the squared reliability (\SR) is a
natural parameter, and we define an \igc complexity that is based on
the off-diagonal components of the Hessian of the mutual information
along the path (see equation~\eqref{EqnDefnGfun}).  We show that a
simple \prd algorithm, one which traverses the path with steps of
constant spacing in log-\SR-odds, has iteration complexity governed by
the aggregate \igc mass.  This sampler adapts to low-dimensional
geometric structure, since the aggregate \igc mass can be upper
bounded by the classical $\TC/\DTC$ measures of multivariate
dependence~\cite{Wat60,Han78} that are sensitive to it.  The stepsizes
for this simple scheme are oblivious, requiring no knowledge of this
structure nor clean samples for estimation.  However, if in addition we
are given access to clean samples from the target distribution, then
we show how the structure of the \igc path can be estimated via a
natural Bregman divergence that lies at the core of our theory.
Tailoring stepsize choices to this geometric structure leads to
samplers with improved and certified performance.

The \igc theory, while sharing many features with its \dgc and \ugc
precedents, differs in an important and interesting way.  For both
Gaussian diffusion and masking diffusion, the complexity structure is
essentially \emph{univariate} in nature.  Both the exact KL
discretization error and one-step error bounds can be expressed as
functions of a univariate (\dgc or \ugc) function.  In this paper, we
put forth a univariate \igc density that provides similar intuition
and guarantees, but note that the true complexity structure is
\emph{bivariate} in nature.  This bivariate nature can be seen from
the definition~\eqref{EqnDefnGfun} of the \igc function, which
involves mixed partial derivatives of a mutual information.
Section~\ref{SecBivQdens} is devoted to an in-depth exploration of
this bivariate structure, which connects to the $\TC$ and $\DTC$
measures of multivariate dependence.

\subsection{Related work}
\label{SecRelated}

Here we provide an overview of some broad strands of related work,
with particular focus on those most closely related to our analysis of
discrete diffusion.  Sohl-Dickstein et al.~\cite{SohEtAl15} described
diffusion-style generative modeling for both continuous and discrete
state spaces.  Later work focused more specifically on categorical
data, and developed multinomial and structured forms of discrete
diffusion processes~\cite{HooEtAl21,AusEtAl21}.  Campbell et
al.~\cite{CamEtAl22} placed discrete diffusion in a continuous-time
Markov chain (CTMC) framework; the work of Benton et
al.~\cite{BenEtAl24} provided a broader perspective on denoising
Markov models.  Analogues of score matching and score-based modeling
were developed by Meng et al.~\cite{MenEtAl22} and Lou et
al.~\cite{LouEtAl24}; the latter introduced the score-entropy loss
that plays an important role in training.  While not the focus of this
paper, we note that discrete diffusion based on masking has received
substantial
attention~\cite{ShiEtAl24b,SahooEtAl2024MDLM,OuEtAl25,DmiEtAl26,CheConLi26}.

In the current paper, we focus on a standard class of
product-reference diffusion samplers (see
equations~\eqref{EqnGeneralRef} and~\eqref{EqnGeneralRefTransition}
for the noising path), and we allow the product-reference distribution
to have arbitrary coordinate-dependent components.  This class falls
within the broader transition-matrix design space of discrete
diffusion elucidated in past work~\cite{AusEtAl21,CamEtAl22}.  The
product-reference updates that we analyze involve replacing an
intractable joint reverse transition by simultaneous coordinatewise
updates.  This type of approximation is shared by continuous-time
Markov chain (CTMC) approaches, beginning with classical
$\tau$-leaping, with later adaptations to discrete
diffusion sampling by Campbell et al.~\cite{CamEtAl22}.  Lou et
al.~\cite{LouEtAl24} introduced Tweedie $\tau$-leaping and showed,
under exact scores, that its tokenwise independent transition is
KL-optimal among $\tau$-leaping strategies.  While we do not adopt the
CTMC formalism, the product kernel~\eqref{EqnProductKernel} in this
paper has the same KL optimality property: for each input state, it is
the product of the exact one-coordinate marginals of the finite
reverse bridge.  Consequently, its one-step KL defect is a
(conditional) total correlation, and we make essential use of this
structure in our analysis.

In recent work, Gourevitch et al.~\cite{GouEtAl26} clarify important
choices in how to parameterize the product kernel for uniform discrete
diffusion, and in particular the distinct roles played by the ordinary
one-site posterior and its leave-one-out counterpart.  At the
population level, the appropriate use of either quantity recovers the
same oracle one-site reverse marginals, and hence the same product
reverse kernel.  Our theory analyzes the KL discretization error of
this oracle product kernel, independently of the particular
parameterization used to realize it.  See the discussions preceding
and following~\Cref{ThmMaster} for more details.

In terms of convergence guarantees for discrete diffusion, there is
now a substantial non-asymptotic theory, including stochastic-integral
and Girsanov analyses~\cite{RenEtAl25,ZhaCheGu25}, the DMPM framework
of Pham et al.~\cite{PhaEtAl25}, and analyses of $\tau$-leaping,
Euler, and Tweedie-$\tau$-leaping under uniform or absorbing
dynamics~\cite{LiangEtAl25,LiangEtAl25b,ConDurPha25}.  Most related to
our development is recent work that has made important progress in
giving general geometric characterizations of sampling performance,
showing how they adapt to low-dimensional latent structure.  Focusing
on masking diffusion, two recent papers~\cite{CheConLi26,DmiEtAl26}
connected sampling performance to the total correlation ($\TC$) and
dual total correlation ($\DTC$) measures of multivariate
dependence~\cite{Wat60,Han78}.  Dmitriev et al.~\cite{DmiEtAl26}
provided a sharp analysis of a class of modified $\tau$-leaping CTMC
samplers. For masking diffusion, they showed that its performance is
governed by a refined measure that is upper bounded by the minimum of
the $\TC/\DTC$ measures.  On the other hand, for a non-masking
sampler, they proved an upper bound that scales with
dimension. Intriguingly, they also complemented this upper bound with
an algorithmic lower bound, showing that the particular CTMC scheme
analyzed in their paper fails to adapt to low-dimensional structure.
Thus, they raised the natural open question as to whether an
alternative sampler for discrete diffusion might adapt to
low-dimensional structure.  In this paper, we show that \prd samplers
adapt to $\TC/\DTC$ structure, thereby providing an
affirmative answer to this question.

Our set-up and theory allow for a general choice of reference
distribution, and this choice has also been studied empirically.
Austin et al.~\cite{AusEtAl21} compare several forward corruption
kernels for discrete diffusion, finding substantial performance
differences between \emph{uniform} and \emph{absorbing-state} noise.
DiGress~\cite{VigEtAl23} instead uses a terminal reference that is
\emph{matched} to the one-coordinate marginals, and reports
improvements over uniform corruption in graph generation.  Other work
compares uniform, marginal-matched, and absorbing transitions for
graph diffusion~\cite{LaaEtAl25}, as well as uniform and absorbing
mechanisms across other discrete-data domains and sampling
regimes~\cite{SchEtAl25}. Collectively, there is considerable
empirical evidence that the choice of terminal reference can have
significant effects on sampling performance.  Our theoretical
framework provides a crisp way of studying and characterizing this
choice, as discussed in~\Cref{SecReference}.  Notably, we show that
there is no simple ordering relation among these choices. Our method
is constructive in nature: we exhibit an ensemble of distributions in
which the gap between the marginal-matched reference and uniform/absorbing-state
choices grows as $\sqrt{\usedim}$ in dimension, and another ensemble
in which the absorbing-state choice exhibits a $\log \usedim$
deficiency relative to the uniform choice.  This analysis gives
insight into what geometric structure of the underlying $\ProbZ$
controls the effect of the terminal reference, and suggests directions
for future work.

Finally, at a higher level, our analysis has important structural
connections with our previous work~\cite{Wai_DGC,Wai_UGC} on Gaussian
and masking diffusion, respectively. At the core of all three papers
is the identification of a complexity measure (\dgc, \ugc, or \igc)
that simultaneously yields an exact representation of the one-step KL
error and leads to geometric bounds on it. The same complexity measure
controls the KL error associated with grids that are equi-spaced in a
privileged time parameter, and dictates how the step sizes should be
optimized. Much of the analysis across these three settings therefore
follows a common template, revealing structural connections between
diffusion samplers whose technical details are otherwise substantially
different.


\section{Overview: Interaction growth complexity and sampling}
\label{SecOverview}

The main ideas of this paper can be conveyed at a relatively high
level, and this section is devoted to this end.  We begin by
describing a standard class of product-reference diffusion (\prd)
samplers. Given a fixed reference distribution $\refdist$ that
factorizes across coordinates, any \prd sampler generates samples by
tracing the path from the reference $\refdist$ to the target
distribution $\ProbZ$, using product-based approximations to the exact
transition kernels at each step. We define the notion of interaction
growth complexity, which is a way of tracking the evolution of
information along this path.  Its structure both reveals geometric
properties of the target distribution, and controls the complexity of
sampling algorithms based on kernel-product approximations.  The
purpose of this section is to give a conceptual overview; subsequent
sections provide the technical content that underlies this view.

\subsection{General-reference diffusion}

Given a discrete alphabet $\Alphabet$ and a dimension $\usedim$, our
goal is to draw a sample $Z \in \Alphabet^\usedim$ from its unknown
probability distribution $\ProbZ$.  In this paper, we study a simple
class of discrete diffusion samplers, which move progressively from
the original distribution $\ProbZ$ to a reference distribution.  In
particular, throughout the paper, we use $\refdist = (\refdist_1,
\ldots, \refdist_\usedim)$ to denote a collection of
$\usedim$-marginals, possibly different from coordinate to coordinate,
that define the product measure $\otimes_{i=1}^\usedim \refdist_i$ on
the space $\Alphabet^\usedim$.  In the simplest case, known as uniform
discrete diffusion, we simply set each $\refdist_i$ to be uniform over
the alphabet $\Alphabet$.

For coordinate $i$ and squared reliability\footnote{To explain the
terminology, for any function $f: \Alphabet \rightarrow \real$ that is
mean zero under the reference distribution $\refdist_i$, we have
$\Exs[ f(X_{i,r}) \mid Z_i] = \sqrt{r} f(Z_i)$ for each $i = 1,
\ldots, \usedim$, so that any given coordinate is attenuated by the
reliability $\sqrt{r}$.  Our choice of squared reliability for
parameterizing the path, while unorthodox, turns out to be revealing
since it is the natural parameter for data-processing inequalities for
this channel (see~\Cref{LemDPI}) and leads to exponential sandwiches of
information-theoretic objects (see~\Cref{LemBivSandwich}).  }  $r \in
[0,1]$, we define the $\refdist$-reference transition via
\begin{subequations}
\begin{align}
  \label{EqnGeneralRef}
  \Refdist_{i,r}(x_i \mid z_i) & \defn \sqrt{r} \Ind[x_i = z_i] + (1 -
  \sqrt{r}) \refdist_i(x_i).
\end{align}
Note that the endpoint $r = 0$ corresponds to a Markov transition that
produces the reference distribution $\refdist$.  Moreover, it is easy to
see that we have the semigroup identity $\Refdist_r \Refdist_s =
\Refdist_{rs}$ for any pair $(r,s) \in (0,1)$.

We now use this semigroup to define a stochastic process $\{\Xvar_r
\mid r \in [0, 1]\}$.  It has endpoint $\Xvar_1 \sim \Zvar$, and moves
downwards in squared-reliability (\SR) time to the reference endpoint
$\Xvar_0 \sim \otimes_{i=1}^\usedim \refdist_i$.  For every $0 < a < b
\leq 1$, conditional on $\Xvar_b$, the coordinates of $\Xvar_a$ are
independent and satisfy
\begin{align}
  \label{EqnGeneralRefTransition}
  \underbrace{\Prob(\Xvar_{a, \jind} = x \mid \Xvar_{b, \jind} =
    y)}_{\equiv \Refdist_{a/b}(x \mid y)} = \sqrt{\frac{a}{b}}
  \mathbf{1}\{x = y\} + \big (1 - \sqrt{\frac{a}{b}} \big)
  \refdist_{\jind}(x).
\end{align}
Stated more simply in words, the transition from time $b \to a$ is
obtained by sampling each coordinate $\Xvar_{a, \ind}$ independently,
where we retain the value $\Xvar_{b, i}$ with probability $\sqrt{a/b}$
and replace it with a fresh sample from the reference distribution
$\refdist_i$ with probability $1 - \sqrt{a/b}$.  Note that the
semigroup identity ensures that $\Refdist_{a/b} \Refdist_{b/c} =
\Refdist_{a/c}$ so that a transition from $c$ to $b$ followed by that
from $b$ to $a$ equals the direct transition from $c$ to $a$.  This
consistency makes $\{\Xvar_r \mid r \in [0, 1]\}$ a genuine
continuous-parameter Markov process rather than a collection of
separately defined marginals.

So as to simplify our analysis in the sequel, we have chosen to define
the noising process in reverse time, moving from $\rtime = 1$ to
$\rtime = 0$.  Accordingly, sampling is carried out by approximating
the \emph{forward evolution} from a sample $\Xvar_0$ drawn from the
product-reference distribution to the final sample $\Xvar_1
\equiv \Zvar$.  A straightforward calculation shows that, for any pair
\mbox{$0 < a < b < 1$,} the probability transition from $X_a$ to $X_b$
is defined by the kernel
\begin{align}
\label{EqnKerExact}  
  \Ker_{a, b}(y \mid x) & \defn \Prob(\Xvar_b = y \mid \Xvar_a = x) =
  \frac{p_b(y)}{p_a(x)} \prod_{i=1}^\usedim \Refdist_{i, a/b}(x_i \mid
  y_i),
\end{align}
\end{subequations}
where $p_r: (\Alphabet)^\usedim \rightarrow [0,1]$ is the marginal
probability mass function of $X_r$.

In general, the exact kernel~\eqref{EqnKerExact} is computationally
intractable, since the state space has cardinality that grows
exponentially in $\usedim$.  In this paper, we analyze a standard
approximation in which we instead sample from a family of product
kernels $\KerTil_{a, b}$, obtained, for each fixed $x$, by taking the
product of the one-coordinate marginals of the exact conditional
kernel $\Ker_{a, b}(\cdot \mid x)$; for example, see Lou et
al.~\cite{LouEtAl24}.  We refer to this sampler as the \emph{\prdlong}
sampler, or \prd for short, since it is based on a product
approximation to a reference-based diffusion process.  By changing the
reference distribution $\refdist$, we obtain different types of
algorithms; for instance, choosing a uniform $\refdist$ gives the
standard uniform diffusion sampler~\cite{HooEtAl21,AusEtAl21};
choosing $\refdist$ to be a $\delta$-function on a particular state
yields an absorbing-state sampler~\cite{AusEtAl21}; and matching
$\refdist$ to the marginal distributions of $\ProbZ$ yields a
marginal-matched sampler~\cite{VigEtAl23}.  Our theory applies to all
of these choices.  See~\Cref{SecAlg} for a precise description of the
class of algorithms that we analyze, and~\Cref{SecReference} for
some insight into the effects of different references on
sampler performance.


\subsection{Introducing the interaction growth complexity (\igc)}
\label{SecIGC}

We now introduce the complexity measure that plays the central role in
this paper.  It involves the second partial derivatives of the mutual
information between the latent variable $\Zvar \in \Alphabet^\usedim$,
and the vector \mbox{$\Xvar_\tbold = (X_{1, t_1}, \ldots, X_{\usedim,
    t_\usedim}) \in \Alphabet^\usedim$} along an inhomogeneous version
$\tbold \defn (t_1, \ldots, t_\usedim)$ of the squared-reliability
path.  More precisely, we define the function $\gfun:[0,1] \rightarrow
\real$ via
\begin{subequations}
\begin{align}
  \label{EqnDefnGfun}
\gfun(r) & \defn - \sum_{i \neq j} \left. \frac{\partial^2}{\partial
  t_i \partial t_j} \Info(Z; X_{\tbold}) \right|_{\tbold = r \onevec},
\end{align}
where $\Info(Z; X_{\tbold})$ denotes the mutual information between
$Z$ and $X_{\tbold}$.  Using this function, the \igclong assigns a
positive mass to any interval $[a,b] \subset (0,1)$ via the weighted
integral
\begin{align}
\label{EqnDefnIGC}
\Ginfo(a, b) & \defn \int_a^b \rtime (1 - \rtime) \gfun(\rtime)
\, d\rtime.
\end{align}
We refer to this weighted measure as the \igc mass function.
Equivalently, so as to remove the explicit weighting, we can define
the log-squared-reliability density $\qdens$ in the coordinates
$\logodds(r) = \log(r/(1-r))$.  It has the explicit form
\begin{align}
\label{EqnDefnQdens}
\qdens(\lam) & \defn r^2(\lam) \big (1 - r(\lam) \big)^2 \; \gfun
\big(r(\lam) \big) \qquad \mbox{where $r(\lam) = \revinv(\lam) \defn
  \frac{e^{\lam}}{1 + e^\lam}$.}
\end{align}
\end{subequations}
Note that we have $\Ginfo(a, b) = \int_{\logodds(a)}^{\logodds(b)}
\qdens(\eta) \, d\eta$ by construction.

In terms of the shorthand \SR for squared-reliability, the log-\SR
density $\qdens$ is the central object of this paper.  In the limit of
fine stepsizes, it determines the exact KL discretization error up to
a factor of two. It gives explicit upper bounds on practical
algorithms, both those based on stepsizes with constant spacing in
log-\SR-odds, and those with stepsizes optimized to minimize error.
Let us provide a high-level overview of these conclusions here, along
with pointers to the relevant technical sections.

To streamline presentation, we focus on squared-reliability intervals
of the form $[\rstar, 1 - \rstar]$, which transform to the
centered\footnote{The choice of $\Elld$ only affects the
initialization and termination error in a minor way: as we show in the
sequel, the choice $\Elld = \Theta(\log(\usedim |\Alphabet|)$ is
sufficient to drive these errors to zero at rate
$\operatorname{poly}(\frac{1}{\usedim |\Alphabet|})$.}  intervals
$[-\Elld, \Elld]$ under the log-\SR-odds transform $\Elld = \log\left(
\frac{1 - \rstar}{\rstar} \right)$.  With this set-up, we can
summarize our main guarantees as follows:
\begin{subequations}
\begin{itemize}[leftmargin=1em, topsep=2pt, itemsep=2pt,
  parsep=0pt, partopsep=0pt]
\item The simplest form of a \prd sampler traverses the interval
  $[-\Elld, \Elld]$ using steps with constant spacing in log-\SR-odds.
  As a simple corollary of~\Cref{ThmMaster}, we prove that this simple
  scheme has iteration complexity that scales as
  \begin{align}
\label{EqnDefnSingBlock}    
    \SB & \defn 2 \Elld \int_{-\Elld}^{\Elld} \qdens(\lam) \, d\lam.
  \end{align}
Moreover, as the grid size converges to zero, this iteration
complexity estimate is sharp up to a constant prefactor, in the sense
that the \emph{exact KL discretization error} exhibits the same
scaling.  See~\Cref{ThmKL} for this claim.  Moreover, we show how to
bound $\SB$ in terms of the classical complexity measures~\cite{Wat60,Han78}
known as total correlation ($\TC$) and dual total correlation
($\DTC$).  This shows how even this simple single-block sampler can
adapt to the intrinsic geometry of $\ProbZ$.

\item As we show below in~\Cref{SecGeometry}, for ``interesting''
  random vectors $\Zvar \sim \Prob_Z$, the density $\qdens$ has a rich
  geometric structure.  This suggests that better samplers can be
  obtained by tailoring the stepsizes to the structure of $\qdens$,
  with finer steps in portions where $\qdens$ is rapidly varying.  We
  exhibit a sequence of practical samplers, using progressive
  refinement with constant spacing blocks, whose iteration complexity
  converges to the \emph{fine partition limit}
    \begin{align}
      \label{EqnDefnFinePart}
      \FP & \defn \left (\int_{-\Elld}^{\Elld} \sqrt{\qdens(\lam)}
      \, d\lam\right)^2,
    \end{align}
\end{itemize}
\end{subequations}

Taken collectively, these results lead to a simple characterization of
the potential benefits of optimizing stepsizes in any
product-reference sampler.  In particular, the gains are characterized
by the ratio
\begin{align}
\label{EqnDefnRatio}
  \Ratio(\ProbZ) & \defn \frac{\SB}{\FP} \; = \; \frac{2 \Elld
    \int_{-\Elld}^{\Elld} \qdens(\lam) \, d\lam}{ \left
    (\int_{-\Elld}^{\Elld} \sqrt{\qdens(\lam)} \, d\lam\right)^2} \;
  \stackrel{(i)}{\geq} \; 1,
\end{align}
where the lower bound (i) follows from the Cauchy--Schwarz inequality,
with equality holding if and only if $\qdens$ is constant.  When $\qdens$
is unevenly distributed, then the ratio becomes much larger than one.
As we show in the next section, such uneven distributions arise in
many settings, and often reflect interesting geometric structure of
the distribution.

Finally, our explicit characterization of \prd sampling complexity also leads
to insights into the broader design space of sampling algorithms:
\begin{itemize}[leftmargin=1em, topsep=2pt, itemsep=2pt,
    parsep=0pt, partopsep=0pt]
\item Our streamlined notation hides the fact that the \igc density
  $\qdens \equiv \qdens_\refdist$ depends on the choice of reference
  distribution $\refdist$, since it is embedded within the
  ``noising'' process~\eqref{EqnGeneralRef} that defines the \igc
  complexity.  In practice, the choice of $\refdist$ is an important
  design parameter, with empirical work showing that it has
  substantial effects on sampler performance. For each choice of
  reference $\refdist$, we have both a coarse
  complexity~\eqref{EqnDefnSingBlock} and a fine-partition
  complexity~\eqref{EqnDefnFinePart}.  By studying these complexities,
  we can obtain insights into the differences among samplers using
  uniform, absorbing-state, and marginal-matched references.
  See~\Cref{SecRelated} for some representative results.
\end{itemize}
  
\subsection{The data geometry of the \igc density}
\label{SecGeometry}

Since the \igc density controls the complexity of product-reference
diffusion, it is natural to wonder about what it reveals about the
underlying geometry.  In this section, we describe three simple
ensembles, each of which is chosen to illustrate a different
qualitative feature of the \igc density.

\subsubsection{Noisy repeated bit}
\label{SecNoisyRepeated}

We begin with a very simple example, also studied in our past
work~\cite{Wai_UGC} on masking algorithms.  For any $p \in [0,1]$, we
use $V \sim \Ber(p)$ to denote a Bernoulli random variable with
$\Prob(V = 1) = p$ and $\Prob(V = 0) = 1 - p$.

Consider a binary random vector $\Zvar = (\Zvar_1, \ldots,
\Zvar_\usedim) \in \{0, 1 \}^\usedim$ constructed in the following
way.  We first draw a latent variable $\Uvar \sim \Ber(1/2)$, and
then, with $\oplus$ denoting addition modulo two, we set
\begin{align}
\label{EqnNoisyBit}
\Zvar_\jind & = \Uvar \oplus W_\jind \qquad \mbox{for each $\jind = 1,
  \ldots, \usedim$,}
\end{align}
where each $W_\jind$ is an independent $\Ber(\eta)$-variable for some
$\eta \in [0, 1/2]$.  By construction, we have $\Prob(\Zvar_\jind =
\Uvar) = 1 - \eta$, hence our use of the term ``noisy repeated bit''.
For any $\eta \in [0, 1/2]$, the coordinates of $Z$ are conditionally
independent given $U$, but for any $\eta \in [0, 1/2)$, they are not
  independent, due to the shared structure.

Using the uniform reference measure to define the path, we computed
the $\igc$ density $\qdens$ for this ensemble at dimension $\usedim =
128$ and varying choices of the cross-over parameter $\eta \in (0,
1/2]$.
\begin{figure}[htb!]
\begin{center}
\begin{tabular}{@{}ccc@{}}
  \widgraph{0.3\textwidth}{\figdir/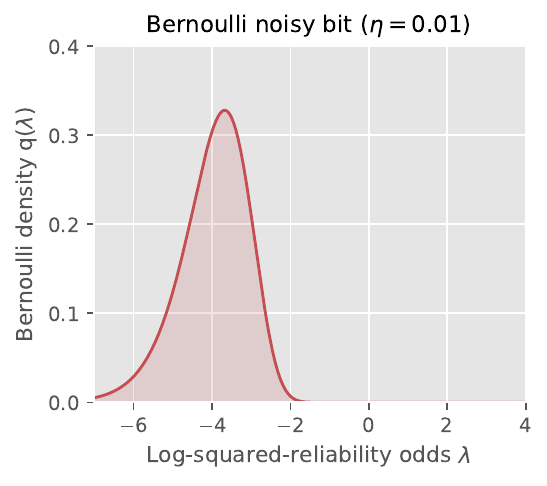}
  &
  \widgraph{0.3\textwidth}{\figdir/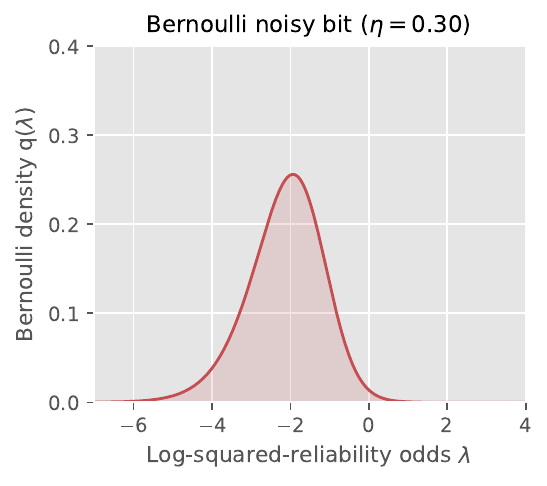}
  &
  \widgraph{0.3\textwidth}{\figdir/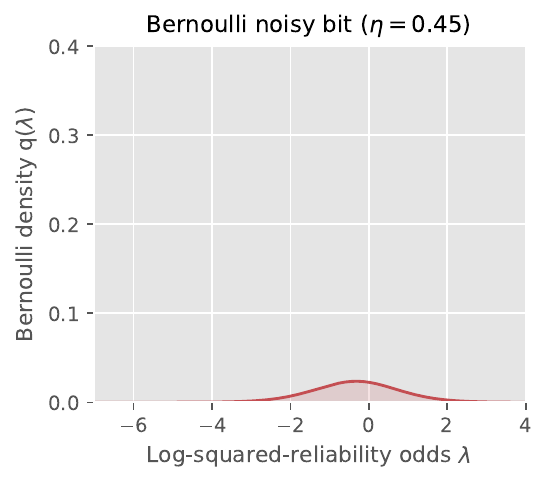}
  \\
  (a) & (b) & (c)
\end{tabular}
\caption{Plots of the \igc density $\qdens$ for the noisy repeated bit
  model in equation~\eqref{EqnNoisyBit}, for three different settings
  of the noise level $\eta \in \{0.01, 0.30, 0.45 \}$.  The \igc
  density captures the decaying dependence on the latent bit $U$ as
  $\eta$ evolves towards the endpoint $1/2$.}
\label{FigNoisyBit}
\end{center}
\end{figure}
As can be seen in~\Cref{FigNoisyBit}, when the cross-over probability
$\eta$ is small, the \igc density has a pronounced peak, reflecting
the strong dependence induced by the shared latent bit $\Uvar$. As
$\eta$ increases towards $1/2$, this peak becomes progressively
smaller, corresponding to the weakening of the common latent
structure. In the limiting case $\eta = 1/2$, each coordinate is an
independent $\Ber(1/2)$ variable, so that $\Prob_Z$ is itself a
product distribution and the \igc density vanishes identically.  As we
show in~\Cref{CorUpperTC}, the integrated \igc mass is upper bounded
by the total correlation
\begin{align*}
\TC(Z) & \defn \big \{ \sum_{i=1}^d \Ent(Z_i) \big \} - \Ent(Z),
\end{align*}
which approaches zero as $\ProbZ$ approaches a product distribution.
Overall, the evolution of $\qdens$ across the three panels illustrates
the adaptivity of the \igc density to the dependence structure of the
target.


\subsubsection{Curie--Weiss model}

Next we turn to a classical model from statistical physics, known as
the Curie--Weiss model~\cite{EllNew78,Ell85}, which captures a
phenomenon known as spontaneous magnetization.  It defines a family of
distributions over the spin-hypercube $\{-1, +1 \}^\usedim$ via the
probability mass function
\begin{align}
\label{EqnCurieWeiss}
\ProbZ(z_1, \ldots, z_\usedim; \beta) & \propto \exp \big\{
\frac{\beta}{2 \usedim} \big(\sum_{i=1}^{\usedim} z_i \big)^2 \big\},
\end{align}
where $\beta \in \real$ is a parameter.  For physical reasons, the sum
$\sum_{i=1}^d z_i$ is referred to as the magnetization.  At $\beta =
0$, we have a product distribution, whereas as $\beta \rightarrow
-\infty$, the model assigns increasing mass to balanced configurations
with $\sum_{i=1}^d z_i = 0$.  On the other side, as $\beta$ increases
along the positive line, it is known to have a phase transition at
$\beta = 1$.  (In particular, for $\beta < 1$, the magnetization is
symmetrically distributed around zero, whereas for $\beta > 1$, its
distribution becomes approximately bimodal, with mass concentrated
near two oppositely magnetized states.)
\begin{figure}[htb!]
\begin{center}
  \begin{tabular}{@{}cccc@{}}
    \widgraph{0.22\textwidth}{\figdir/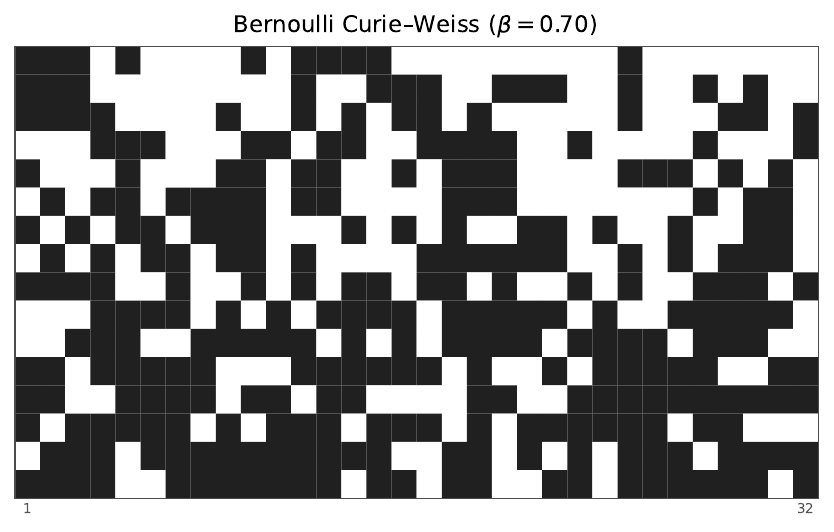}
    &
    \widgraph{0.22\textwidth}{\figdir/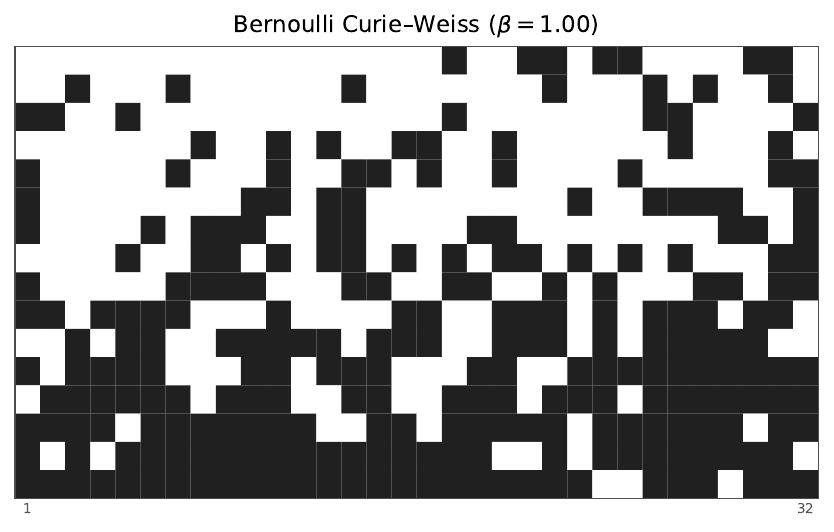}
    &
    \widgraph{0.22\textwidth}{\figdir/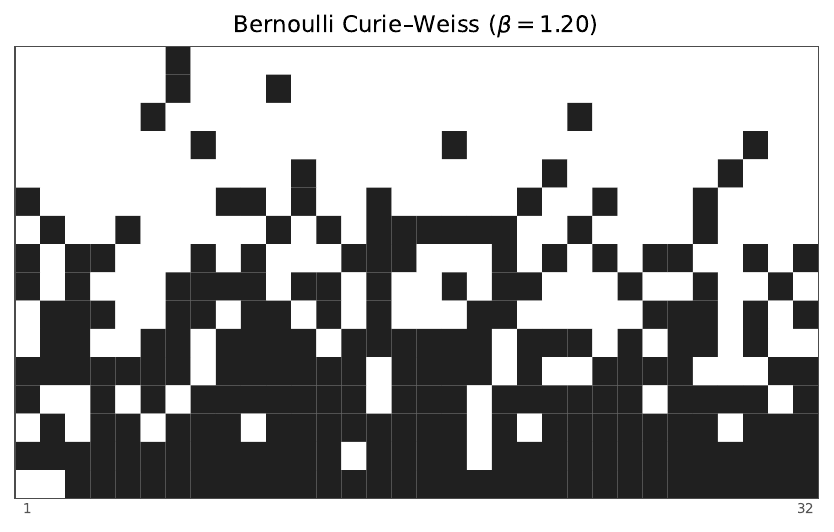}
    &
    \widgraph{0.22\textwidth}{\figdir/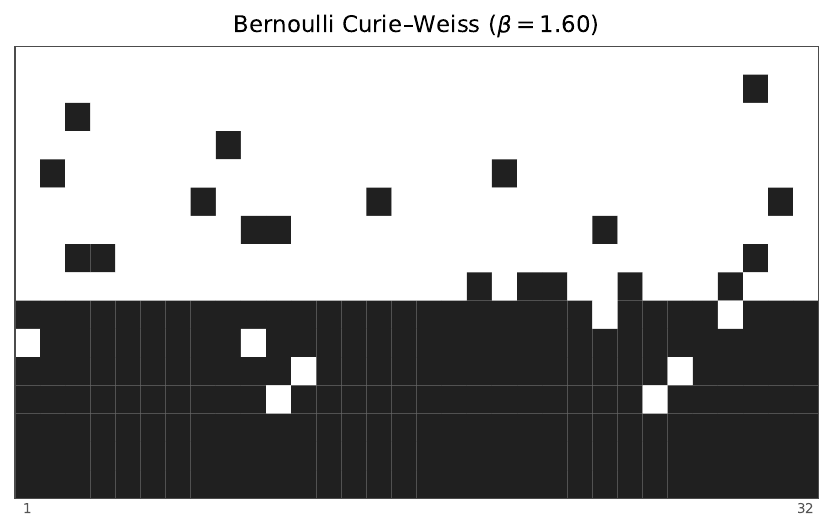}
    \\
    \widgraph{0.25\textwidth}{\figdir/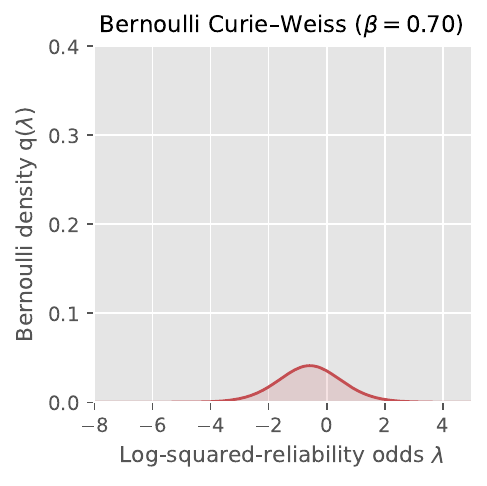} &
    \widgraph{0.25\textwidth}{\figdir/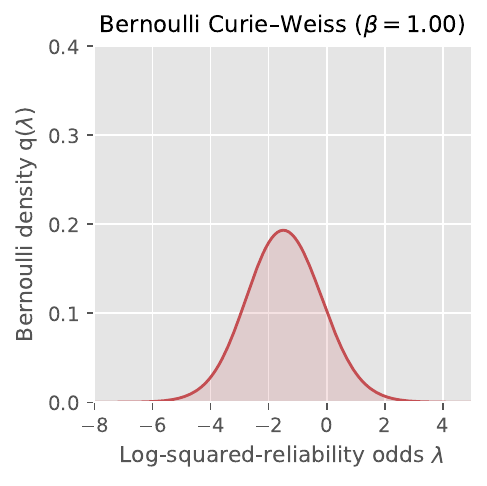} &
    \widgraph{0.25\textwidth}{\figdir/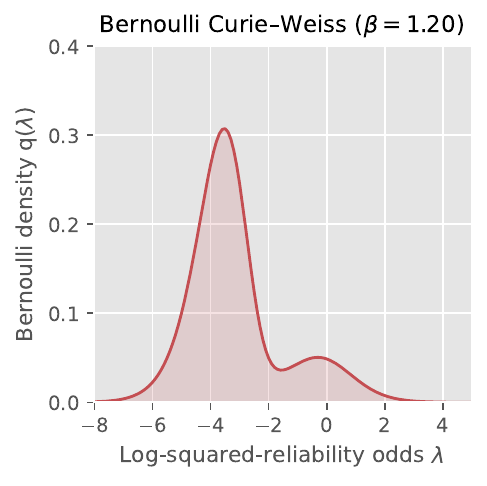} &
    \widgraph{0.25\textwidth}{\figdir/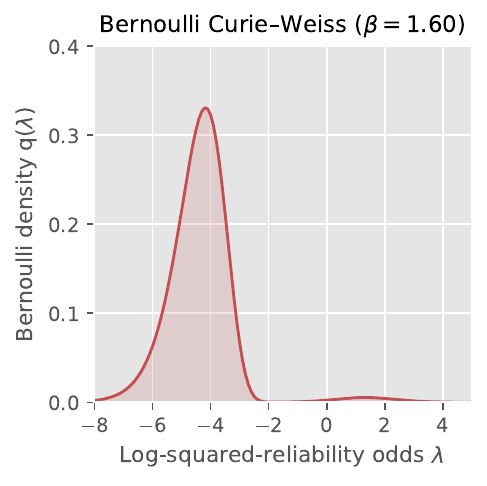}
\end{tabular}
\caption{Illustration of the $\igc$ density for the Curie--Weiss
  model~\eqref{EqnCurieWeiss}.  Bottom panels from left to right
  correspond to $\beta \in \{0.70, 1.00, 1.20, 1.60 \}$.  For each
  choice of $\beta$, the matrix presents a collection of random
  samples $Z \sim \ProbZ$, one in each row.}
\label{FigCurieWeiss}
\end{center}
\end{figure}
Finally, for large positive values of $\beta$, the distribution
becomes close to bimodal, with nearly all mass aligned on states with
absolute magnetization $|\sum_{i=1}^d z_i| = d$.

In~\Cref{FigCurieWeiss}, we plot the log-\SR density $\qdens$ from
equation~\eqref{EqnDefnQdens} associated with the
model~\eqref{EqnCurieWeiss} for four different choices of $\beta \in
\{0.70, 1.00, 1.20, 1.60 \}$.  Above each $\qdens$-plot, we show a
$\numobs \times \usedim$-matrix of samples, where each of the
$\numobs$ rows corresponds to a sample $Z \in \{-1, 1\}^\usedim$ drawn
from the associated model.  These plots illustrate how the \igc
density reflects the changing dependence structure as a function of
the \mbox{Curie--Weiss} parameter $\beta$.  In the subcritical regime,
illustrated by $\beta = 0.70$, the samples exhibit relatively weak
dependence, and the associated \igc density is correspondingly modest.
As $\beta$ approaches and crosses the critical value $\beta = 1$, the
dependence strengthens and the \igc density develops a pronounced
mode; at the same time, the samples begin to separate into two groups
with opposite magnetizations.  For $\beta = 1.20$, the \igc density
becomes bimodal, suggesting two distinct scales of dependence along
the diffusion path: the global choice between the two magnetized
phases, and the residual fluctuations of configurations within each
phase.  Finally, for the larger value $\beta = 1.60$, the \igc density
again becomes approximately unimodal.  At this point, the within-phase
fluctuations are much smaller, leaving the global choice between the
two strongly magnetized components as the dominant source of
dependence.


\subsubsection{Hierarchical mixture model}

As our final example, we revisit a model class first introduced in our
previous paper~\cite{Wai_UGC} on masking complexity.  It involves a
random vector $Z \in \{0,1 \}^\usedim$ over the Boolean hypercube that
places mass on $2^{L}$ prototype configurations for an integer $L \ll
\usedim$.  More interestingly, these binary prototypes are related via
a latent tree structure, and the integer $L$ corresponds to the number
of levels in the tree.

\begin{figure}[htb!]
\begin{center}
\begin{tabular}{@{}cc@{}}
\widgraph{0.60\textwidth}{\figdir/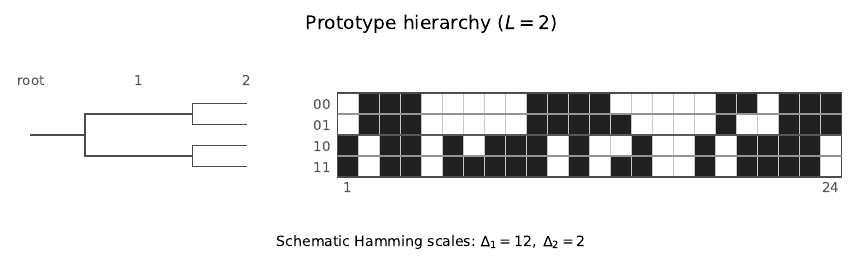}
&
\widgraph{0.28\textwidth}{\figdir/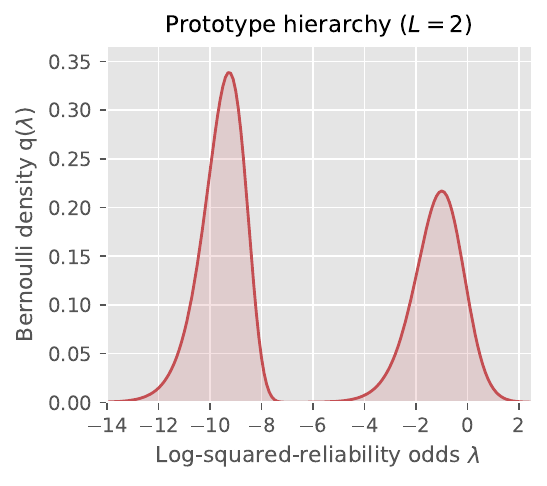}
\\
(a) & (b) \\
\widgraph{0.60\textwidth}{\figdir/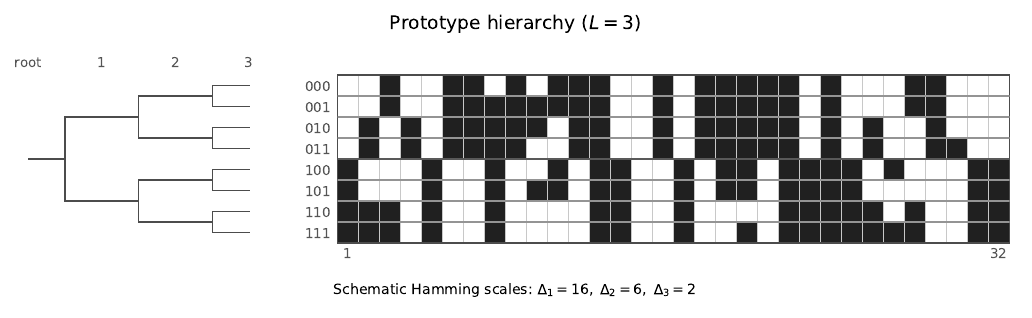}
&
\widgraph{0.28\textwidth}{\figdir/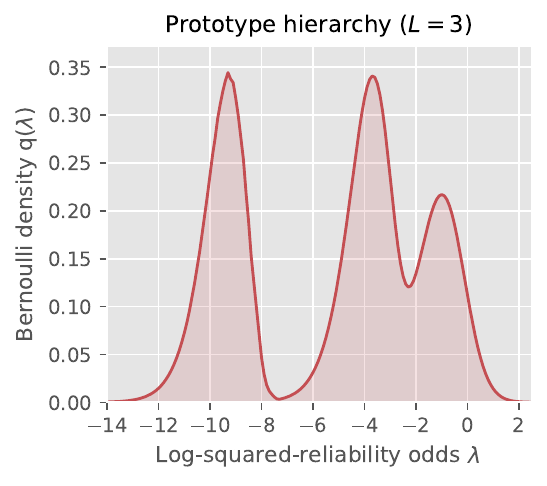}
\\
(c) & (d) \\
%
%
\end{tabular}
\caption{Plots of the \igc densities $\qdens$ associated with various
  types of hierarchical binary-prototype mixtures.  The rows correspond
  to hierarchy depths $L = 2$ and $L = 3$.  Left panels (a) and (c)
  show the leaf prototypes, while right panels (b) and (d) show the
  corresponding densities $\qdens$.}
\label{FigHierarchicalMixture}
\end{center}
\end{figure}

In terms of the shorthand $M = 2^L$, let $\{V^1, \ldots, V^M \}$
denote the binary prototype vectors.  Each prototype is defined by a
path down a binary tree of depth $L$; see the left column
of~\Cref{FigHierarchicalMixture} for some examples.  The geometry of
the prototypes is related to the tree structure in the following way.
For any pair $V^a$ and $V^b$ of prototypes, suppose that they first
diverge at level $\ell \in \{1, \ldots, L\}$ of the tree; our
construction then ensures that $V^a$ and $V^b$ are separated by
Hamming distance $\Delta_\ell$, for some pre-specified set of
distances $\{\Delta_\ell \}_{\ell =1}^{L}$.  This leads to a total
dimension $d = \sum_{\ell=1}^{L} \Delta_\ell$.  Panel (a)
of~\Cref{FigHierarchicalMixture} shows a toy example with $L = 2$,
$(\Delta_1, \Delta_2) = (12, 2)$, and dimension $d = 12 + 2 = 14$.
This set-up leads to $M = 2^L = 4$ prototype vectors in the Boolean
hypercube $\{0,1\}^{14}$.  To track them, we assigned the index
labels $\{00, 01, 10, 11 \}$, plotted as rows in the corresponding
black-white matrix.  Observe that the pair of prototypes indexed by
$00$ and $11$ diverge at level $\ell = 1$ of the tree, and their
Hamming distance is equal to $\Delta_1 = 12$.  Similarly, the
prototypes indexed by $00$ and $01$ diverge at level $\ell = 2$ of the
tree, and have Hamming distance $\Delta_2 = 2$.  Panel (c) illustrates
a larger construction with $L = 3$, still in a toy dimension $d = 34$
for illustrative purposes.

Panels (b) and (d) show the \igc densities for $L = 2$ and $L = 3$
models, using larger separations and dimensions: the $L = 2$ model has
dimension $\usedim = 32776$ and Hamming separations $(\Delta_1,
\Delta_2) = (32768, 8)$, whereas the $L = 3$ model has dimension
$\usedim = 33288$, and Hamming separations $(\Delta_1, \Delta_2,
\Delta_3) = (32768, 512, 8)$.  The \igc density
for the $L = 2$ model has two distinct peaks in
log-\SR-odds, and these two peaks have a very natural interpretation.
The prototypes at level $\ell = 1$ differ by Hamming distance
$\Delta_1 = 32768$, so their ambiguity is resolved at a fairly early
denoising time.  Only at a later time does the second peak appear,
corresponding to resolving the ambiguity between pairs separated by
Hamming distance $\Delta_2 = 8$. Each of the $L = 3$ peaks for the \igc
density in panel (d) has an analogous interpretation.


\section{Product-reference diffusion and KL guarantees}
\label{SecMain}

In this section, we begin in~\Cref{SecAlg} by describing the class of
sampling algorithms that we analyze.  They are based on product
approximations to the transition kernel~\eqref{EqnKerExact}, and can
be parameterized either in terms of the single-site denoisers or
their leave-one-out versions.  In~\Cref{SecKLBound}, we present a
general upper bound on the Kullback--Leibler (KL) error of the
sampler: it involves increments of the \igc density, combined with
prefactors in terms of the squared-reliability (SR) odds ratio.  This
geometric factor is key, as it identifies stepsizes with constant
spacing in log-\SR-odds as canonical.  We derive a corollary for this
simple stepsize choice. It gives guarantees in terms of the $\TC/\DTC$
measures, and as discussed in~\Cref{SecReference}, it also provides
insight into the choice of reference distribution $\refdist$.
Finally, \Cref{SecBivQdens} is devoted to an exploration of the
bivariate \igc kernel, which gives exact relations for the KL error,
the univariate \igc upper bound, and the $\TC/\DTC$ measures.

\subsection{Sampling using product-reference diffusion (\prd)}
\label{SecAlg}

For $0 < a < b < 1$, we let $\Ker_{a, b}(\cdot \mid x) \defn \Law(X_b
\mid X_a = x)$ denote the exact kernel of the reference-to-output
process, as previously defined in equation~\eqref{EqnKerExact}.  A
standard approximation is based on the product kernel
\begin{align}
  \label{EqnProductKernel}
  \Ktilde{a}{b}(y \mid x) & \defn \prod_{\jind = 1}^d \Prob_i^{a \to
    b}(y_i \mid x),
\end{align}
where $\Prob_i^{a \to b}(y_i \mid x) \defn \Prob(X_{b,i} = y_i \mid X_a
= x)$.

\paragraph{Different parameterizations:}
The transition kernel $\Prob_i^{a \to b}$ can be parameterized either
via the single-site posterior $\Den_{i,a}(w, x) \defn \Prob(Z_i = w
\mid X_a = x)$, or via its leave-one-out version $\DenLoo_{a,i}(w,
x_{-i}) \defn \Prob(Z_i = w \mid X_{a,-i} = x_{-i})$.  Here $X_{a,-i}$
denotes the vector obtained from $X_a$ by removing coordinate $i$.
For the ordinary posterior, conditioning on $Z_i$ gives the one-site
bridge representation
\begin{subequations}
\begin{align}
  \label{EqnDenoiserBridge}
  \Prob_i^{a \to b}(y \mid x) & = \sum_{w \in \Alphabet}
  \BridgeAB_{i}(y \mid x_i, w) \Den_{i,a}(w, x), \qquad \mbox{where}
  \\
  \label{EqnDefnOneSiteBridge}
  \BridgeAB_i(y \mid x_i, w) & \defn \Prob(X_{b,i} = y \mid X_{a,i} =
  x_i, Z_i = w) = \frac{\Refdist_{i,b}(y \mid w) \Refdist_{i,a/b}(x_i
    \mid y)} {\Refdist_{i,a}(x_i \mid w)}.
\end{align}
\end{subequations}
Here $\Refdist$ was defined in equation~\eqref{EqnGeneralRef}.
See~\Cref{AppLOO} for the parameterization in terms of the LOO
posterior $\DenLoo$.


\paragraph{\prd sampling updates:}

Any product-reference diffusion (\prd) sampler operates on a
collection of single-site transitions
$\{ \Prob_i^{a \to b}, i = 1, \ldots, \usedim \}$ for each
squared-reliability time pair $0 < a < b < 1$.  Given an integer
iteration number $N \geq 2$, it takes as input a grid of the form
\begin{align}
  \label{EqnGrid}
  0 < \time_0 < \time_1 < \cdots < \time_N = \tfinal < 1.
\end{align}
The sampler then proceeds through the following three steps:
\begin{center}
\begin{tcolorbox}[
  width=0.95\linewidth,
  title={Product-reference diffusion sampler},
  colbacktitle=blue!18,
  coltitle=black,
  fonttitle=\bfseries,
  toptitle=1.5mm,
  bottomtitle=1.5mm,
  colback=blue!6,
  colframe=blue!45!black,
  boxrule=0.5pt,
  arc=2mm,
  boxsep=0pt,
  left=2mm,
  right=2mm,
  top=1.5mm,
  bottom=1.5mm,
  before skip=0pt,
  after skip=0pt,
  before upper={\setlength{\parskip}{0pt}%
    \setlength{\abovedisplayskip}{4pt}%
    \setlength{\belowdisplayskip}{4pt}%
    \setlength{\abovedisplayshortskip}{3pt}%
    \setlength{\belowdisplayshortskip}{3pt}}
]
\textbf{1. Initialization.}
At squared reliability $\time_0$, draw
\begin{subequations}
\begin{align}
  \label{EqnPRDInit}
  \Xhat^0 & \sim \bigotimes_{i = 1}^{\usedim} \Law(X_{\time_0, i}).
\end{align}

\textbf{2. Product updates.}
For $k = 1, \ldots, N$, update the coordinates independently
according to
\begin{align}
  \label{EqnPRDUpdate}
  \Law(\Xhat^k \mid \Xhat^{k - 1} = x)
  & \defn \bigotimes_{i = 1}^{\usedim}
  \Prob_i^{\time_{k - 1} \to \time_k}(\mathord\cdot \mid x).
\end{align}

\textbf{3. Completion.}
From $\tfinal$ to the clean endpoint, draw
\begin{align}
  \label{EqnPRDCompletion}
  \Law(\Xhat^{N + 1} \mid \Xhat^N = x)
  & \defn \bigotimes_{i = 1}^{\usedim}
  \Law(Z_i \mid X_{\tfinal} = x), \qquad \mbox{and return $\Zhat = \Xhat^{N + 1}$.}
\end{align}
\end{subequations}
\end{tcolorbox}
\end{center}
Thus, for a fixed iteration budget $N$, the key design parameter---and
one for which our theory provides guidance---is the design of the stepsize
grid~\eqref{EqnGrid}.  \Cref{ThmMaster} provides an upper bound on the
KL error for any grid, whereas~\Cref{CorSingleBlock} specializes to
an equi-spaced grid.  Our later theory in~\Cref{SecMultiBlock}
develops adaptive multi-block grids.

\subsection{Controlling the Kullback--Leibler error}
\label{SecKLBound}

A key question is to understand the accuracy of the final output
$\Zhat = \Xhat^{N + 1}$ relative to the target distribution.  A standard measure
of accuracy is the Kullback--Leibler (KL) divergence given by
\begin{align}
\KL(\ProbZ \| \Prob_{\Zhat}) & \defn \sum_{z \in \Alphabet^d}
\ProbZ(z) \log \Big( \frac{\ProbZ(z)}{\Prob_{\Zhat}(z)} \Big).
\end{align}
We begin in~\Cref{SecGeneral} with a general upper bound that applies
to any stepsize grid~\eqref{EqnGrid}.  In~\Cref{SecSingleBlock}, we
specialize it to stepsizes that are equi-spaced in the log-\SR-odds
parameter, and derive concrete results in terms of the aggregate \igc
complexity.

\subsubsection{A general guarantee}
\label{SecGeneral}

The main result of this section provides a KL error upper bound that
involves the \igc increments
\begin{align}
  \label{EqnDefnGinfoTwo}
  \Ginfo(a, b) & \defn \int_a^b r (1 - r) \gfun(r) \, dr.
\end{align}
associated with a given interval.
\mygraybox{
\begin{theorem}
\label{ThmMaster}
For any grid~\eqref{EqnGrid} of the interval $[\time_0, \tfinal]$, the
\prd sampler returns output $\Zhat$ such that
\begin{align}
  \label{EqnMaster}
  \KL( \Prob_Z \| \Prob_{\Zhat}) & \leq 2 \; \sum_{j = 0}^{N - 1} \big \{
  \frac{\odds(\time_{j + 1})}{\odds(\time_j)} - 1 \big \}
  \Ginfo(\time_j, \time_{j + 1}) + \BOUNDARY,
\end{align}
where $\odds(t) \defn \frac{t}{1 - t}$ is the odds of the squared
reliability $t$.
\end{theorem}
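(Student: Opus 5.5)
We will prove the bound by writing the KL error as a sum of per-step defects, identifying each defect with a conditional total correlation, and then comparing that total correlation to the \igc mass of the step through an exponential sandwich in log-\SR-odds.

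\textbf{Chain rule decomposition.} Compare the joint law of the exact chain $(X_{\time_0}, X_{\time_1}, \ldots, X_{\tfinal}, Z)$ with the joint law of the sampler $(\Xhat^0, \ldots, \Xhat^{N+1})$. By data processing and the KL chain rule,
\begin{align*}
\KL(\ProbZ \| \Prob_{\Zhat}) \leq \KL\Big(\Law(X_{\time_0}) \,\Big\|\, \bigotimes_i \Law(X_{\time_0,i})\Big) + \sum_{j=0}^{N-1} \Exp\, \KL\big(\Ker_{\time_j,\time_{j+1}}(\cdot\mid X_{\time_j}) \,\|\, \Ktilde{\time_j}{\time_{j+1}}(\cdot \mid X_{\time_j})\big) + \text{(completion term)}.
\end{align*}
The first term is exactly $\TC(X_{\time_0})$. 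Because the product kernel multiplies the exact one-site marginals of the conditional law, each step defect equals the conditional total correlation $\TC(X_{\time_{j+1}} \mid X_{\time_j})$. For the same reason the completion term is $\TC(Z \mid X_\tfinal)$. This yields the boundary terms in~\eqref{EqnMaster}.

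\textbf{Per-step defect as a bivariate \igc quantity.} Write $a=\time_j$ and $b=\time_{j+1}$. The Markov structure, together with coordinatewise independence of the noising given $Z$, should let us express
\begin{align*}
\TC(X_b \mid X_a) = \sum_i \Info(X_{b,i}; X_{b,-i} \mid X_a) - (\text{higher-order corrections}).
\end{align*}
A cleaner route is to use the inhomogeneous path $X_\tbold$. Here $\TC(X_b\mid X_a)$ is a second-order finite difference of $\tbold \mapsto \Info(Z; X_\tbold)$-type quantities between $a\onevec$ and $b\onevec$. By the fundamental theorem of calculus, it becomes a double integral over $(s,t)\in[a,b]^2$ of the mixed partials $-\partial_{t_i}\partial_{t_j}\Info$ for $i\neq j$, evaluated at inhomogeneous points. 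In other words, we represent the step defect as $\int\!\!\int_{[a,b]^2}$ of a bivariate \igc kernel $\Bivq(s,t)$ whose diagonal $s=t$ recovers $\gfun$, with the right Jacobian weights.

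\textbf{Sandwich and one-step bound.} The key analytic step is an exponential sandwich (\Cref{LemBivSandwich}, with the data-processing lemma \Cref{LemDPI}): the kernel at $(s,t)$ is controlled by the diagonal value at $\max(s,t)$ or $\min(s,t)$, with a multiplicative factor given by the odds ratio $\odds(t)/\odds(s)$. Integrating one variable then collapses the double integral to $\int_a^b r(1-r)\gfun(r)\,dr$ times a factor $\big(\odds(b)/\odds(a)-1\big)$, and the factor $2$ comes from symmetrizing the square $[a,b]^2$ over its two triangles. Concretely, we aim to show
\begin{align*}
\TC(X_b\mid X_a) \leq 2\big(\odds(b)/\odds(a)-1\big)\,\Ginfo(a,b).
\end{align*}
Summing this over $j$ completes the proof.

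\textbf{Main obstacle.} We expect the hardest step to be this representation-plus-sandwich argument. Turning the conditional total correlation into an integral of off-diagonal mixed partials of $\Info(Z;X_\tbold)$ requires an identity of the form $\TC(X_b\mid X_a)=\sum_{i}\Ent(X_{b,i}\mid X_a) - \Ent(X_b\mid X_a)$. Expanding each entropy via the reference channel is natural in \SR-coordinates, so that the semigroup $\Refdist_r\Refdist_s=\Refdist_{rs}$ makes the $t_i$-derivatives tractable. We would attempt this first by proving monotonicity of the mixed partials along rays in log-\SR-odds, which is what the sandwich lemma supplies.
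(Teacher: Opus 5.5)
Your outline matches the paper's route: the chain rule with the two boundary terms, the identification $\DenKL(a,b) = \TC(\Xvar_b \mid \Xvar_a)$, a bivariate-kernel representation of that defect, and then the upper half of \Cref{LemBivSandwich}. The gap is the central representation step. You leave it unproven, and you misstate it. It is an exact identity plus the fundamental theorem of calculus; the sandwich lemma does not supply it. Because it is misstated, the constant bookkeeping you build on it is also wrong.

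The step defect is \emph{not} an integral over the square $[a,b]^2$. Write $U = \Xvar_b$ and $V = \Xvar_a$. Because the refresh channel acts independently on each coordinate, $\Ent(V_i \mid U_i, V_{-i}) = \Ent(V_i \mid U_i)$, and hence
\begin{align*}
\TC(U \mid V) = \sum_{i=1}^{\usedim} \big\{ \Ent(U_i, V_{-i}) - \Ent(V) \big\} - \big\{ \Ent(U) - \Ent(V) \big\}.
\end{align*}
This is the sum of the $\usedim$ single-coordinate increments of $\tbold \mapsto \Ent(\Xvar_\tbold)$ minus its diagonal increment. Its off-diagonal mixed partials coincide with those of $\Info(\Zvar; \Xvar_\tbold)$, because $\Ent(\Xvar_\tbold \mid \Zvar)$ is a sum of one-coordinate terms.

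Your ``second-order finite difference'' instinct is right, but this is not a rectangle difference. Write both increments as integrals in coordinate $i$'s time and subtract. The others' time then runs only from $a$ up to coordinate $i$'s time. In log-\SR-odds, the result is an integral over the triangle $\logodds(a) \le \eta \le \xi \le \logodds(b)$ of the \emph{asymmetric} kernel $\Bivq(\eta,\xi) = \sum_i \partial_1\partial_2 \Ifun_i(\xi,\eta)$ (\Cref{LemExactKL}). In this kernel, coordinate $i$ sits at the later time $\xi$ and the rest at the earlier time $\eta$. The complementary triangle is the $\DTC$-type piece of \Cref{PropBivIntegral} and is not part of the step defect.

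This changes two things in your final step.

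First, the factor $2$ in \eqref{EqnMaster} does not come from symmetrizing over two triangles. It is the constant in the sandwich bound $\Bivq(\eta,\xi) \le 2 e^{|\xi-\eta|} \Bivq(\eta,\eta)$, inherited from $R(0) \le 2\,$$\ehack$ in \Cref{LemDPI}. A constant-one sandwich is not available from that lemma: in the notation of its proof, $R(0)/$$\ehack$ tends to $2$ as $s \to 0$ and $t \to 1$. Even if you crudely bounded the triangle by the square using $\Bivq \ge 0$, no doubling would arise. The $\xi$-integral of $e^{|\xi-\eta|}$ over the whole interval is $e^{\logodds(b)-\eta} + e^{\eta - \logodds(a)} - 2 \le \odds(b)/\odds(a) - 1$.

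Second, you cannot choose the diagonal point in the comparison as ``$\max$ or $\min$''. We have $\partial_1\partial_2 \Ifun_i(\xi,\eta) = \lim_{h \downarrow 0} h^{-1} \Exs[\mathsf J_\xi(\pi_{i,\eta+h}, \pi_{i,\eta})]$. \Cref{LemDPI} only moves the index of $\mathsf J$ for a \emph{fixed} pair of leave-one-out posteriors. It therefore compares only with $\Bivq(\eta,\eta)$, the diagonal at the leave-one-out time, which is the minimum on the relevant triangle.

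With these corrections, integrate $\xi$ over $[\eta, \logodds(b)]$. This gives $2\int_{\logodds(a)}^{\logodds(b)} (e^{\logodds(b)-\eta} - 1)\,\qdens(\eta)\,d\eta \le 2\{\odds(b)/\odds(a) - 1\}\Ginfo(a,b)$, which is \Cref{LemGeoOneStep}. Summing over steps finishes the proof.
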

}
The bound~\eqref{EqnMaster} involves initialization and termination
costs defined in terms of the total correlation, and a conditional
version thereof.  Letting $\Ent$ denote the (Shannon) entropy, the
initialization cost is given by
\begin{align}
\TC(X_{\rstar}) & \defn \big \{\sum_{i=1}^d \Ent(X_{\rstar,i}) \big \} -
\Ent(X_{\rstar}),
\end{align}
which corresponds to the KL divergence between the joint
distribution of $X_{\rstar}$ and the product of its marginals,
consistent with our choice of initialization~\eqref{EqnPRDInit}.
The termination cost $\TC(Z \mid X_{\rfinal})$ arises from the
completion step~\eqref{EqnPRDCompletion}.

These two costs are straightforward to control by suitable choices of
the pair $r_0$ and $\rfinal$.  For the symmetric choice $\rfinal = 1 -
\rstar$, we are guaranteed to have
\begin{align}
\label{EqnBoundaryBound}  
\BOUNDARYSPEC & \leq c \, \rstar \big( \usedim |\Alphabet| \big)^3
\qquad \mbox{for a universal constant $c > 0$,}
\end{align}
so that setting $\rstar = (\usedim |\Alphabet|)^{-k}$ for $k \geq 4$
drives this error to zero at rate $(\usedim
|\Alphabet|)^{k-3}$. Importantly, the sampler complexity is only
logarithmic in $\rstar$.  Consequently, the bulk of our analysis
focuses on characterizing the KL discretization sum in the
bound~\eqref{EqnMaster}.

\paragraph{Learned posterior distributions:}
The statement of~\Cref{ThmMaster} uses the exact single-site
posteriors, whereas in practice, these oracle objects are replaced by
versions that are estimated from data.
Here we describe how replacing the ordinary posterior
$\Den_{i,a}$ with a learned version $\DenHat_{i,a}$ leads to an additional
error term in the upper bound~\eqref{EqnMaster}.
To be clear,
the LOO parameterization can be handled in
exactly the same way using the corresponding bridge
from~\Cref{AppLOO}.

For each coordinate $i = 1, \ldots, \usedim$ and time $a$, let
$\DenHat_{i,a}$ be an estimate of the true one-site posterior
$\Den_{i,a}$.  Passing this estimate through the one-site
bridge~\eqref{EqnDenoiserBridge} yields a \emph{learned one-site
kernel}
\begin{subequations}
\begin{align}
\label{EqnLearnedKernel}  
  \Khat{a}{b}^{(i)}(y \mid x) \defn \sum_{w \in \Alphabet}
  \BridgeAB_i(y \mid x_i, w) \DenHat_{i,a}(w \mid x).
\end{align}
This collection of learned one-site kernels defines the corresponding
learned product kernel $\Khat{a}{b}(\cdot \mid x) \defn \bigotimes_{i
  = 1}^{\usedim} \Khat{a}{b}^{(i)}(\cdot \mid x)$.

By the bridge representation~\eqref{EqnDenoiserBridge}, the true
one-coordinate marginal of $\Ker_{a,b}(\cdot \mid x) \defn \Law(X_b
\mid X_a = x)$ is precisely $\Prob_i^{a \to b}(\cdot \mid x)$.
Consequently, the approximation $\Ktilde{a}{b}(\cdot \mid x)$ is the
product of the true one-coordinate marginals. Via the KL projection
identity for product distributions, we are guaranteed to have
\begin{align*}
\KL\big( \Ker_{a,b}(\cdot \mid x) \,\big\|\, \Khat{a}{b}(\cdot \mid x)
\big) & = \KL\big( \Ker_{a,b}(\cdot \mid x) \,\big\|\,
\Ktilde{a}{b}(\cdot \mid x) \big) \quad + \quad \sum_{i = 1}^{\usedim}
\KL\big( \Prob_i^{a \to b}(\cdot \mid x) \,\big\|\, \Khat{a}{b}^{(i)}(\cdot
\mid x) \big).
\end{align*}
Thus learning the posterior contributes an additive error on top of
the exact-posterior discretization error.  Averaging over $X_a$,
define the block denoiser penalty
\begin{align}
  \label{EqnDenoiserError}
\DenError_{\mathrm{den}}(a, b) & \defn \sum_{i = 1}^{\usedim}
\Exs_{X_a}\left[ \KL\left( \Prob_i^{a \to b}(\cdot \mid X_a)
  \,\middle\|\, \Khat{a}{b}^{(i)}(\cdot \mid X_a) \right) \right],
\end{align}
\end{subequations}
which is then added to the upper bound~\eqref{EqnMaster}
in~\Cref{ThmMaster}.  Note that the estimated posteriors
$\DenHat_{i,a}$ enter this expression via the bridge, as in
equation~\eqref{EqnLearnedKernel}.


\subsubsection{Single block consequence and KL iteration complexity}
\label{SecSingleBlock}

We now derive a consequence of~\Cref{ThmMaster} that applies to a
single-block sampler, meaning one that uses a non-adaptive stepsize
schedule across the entire interval.  Here we specialize to the
symmetric interval $\IntStar \defn [\rstar, 1 - \rstar]$, which leads
to the log-\SR-odds image $[-\Elld, \Elld]$ where $\Elld \defn
\log\left( \frac{1 - \rstar}{\rstar} \right)$.  In particular, it
applies to an $N$-step grid that is uniform in log-\SR-odds:
\begin{align}
  \label{EqnSingleBlockGrid}
  \logodds(r_j) & = -\Elld + \frac{2 \Elld j}{N}, \qquad j = 0, 1,
  \ldots, N.
\end{align}
The following result summarizes the resulting guarantee: \mygraybox{
  \begin{corollary}
    \label{CorSingleBlock}
    For any choice of $\rstar \in (0, 1/2)$ and iteration number $N
    \geq 2 \Elld$, applying the \prd sampler with the stepsize
    protocol~\eqref{EqnSingleBlockGrid} over the interval $[\rstar, 1
      - \rstar]$ returns output $\Zhat$ such that
\begin{align}
  \label{EqnSingleBlockBound}
  \KL(\Prob_Z \| \Prob_{\Zhat}) & \leq \frac{8 \Elld}{N}
  \Ginfo(\rstar, 1 - \rstar) + \BOUNDARYSPEC \qquad \mbox{where
    $\Elld = \log\left( \frac{1 - \rstar}{\rstar} \right)$.}
\end{align}
  \end{corollary}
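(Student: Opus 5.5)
The corollary follows by applying \Cref{ThmMaster} directly to the grid~\eqref{EqnSingleBlockGrid}. On this grid the endpoints are $\time_0 = \rstar$ and $\tfinal = \time_N = 1 - \rstar$, since $\logodds(\rstar) = -\Elld$ and $\logodds(1-\rstar) = \Elld$. The boundary term in~\eqref{EqnMaster} is therefore exactly $\BOUNDARYSPEC$. It remains to control the discretization sum.

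The central observation is that $\logodds = \log \odds$. Consequently, for every step $j$ we have
\begin{align*}
\frac{\odds(\time_{j+1})}{\odds(\time_j)} = \exp\big(\logodds(\time_{j+1}) - \logodds(\time_j)\big) = e^{2\Elld/N},
\end{align*}
which is a constant independent of $j$. The sum in~\eqref{EqnMaster} then collapses to
\begin{align*}
2\,(e^{2\Elld/N} - 1) \sum_{j=0}^{N-1} \Ginfo(\time_j, \time_{j+1}) = 2\,(e^{2\Elld/N}-1)\,\Ginfo(\rstar, 1-\rstar),
\end{align*}
using additivity of the integral~\eqref{EqnDefnGinfoTwo} over adjacent intervals. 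We also use that $\Ginfo \geq 0$ on each piece, which holds because the paper calls it a positive mass. Nonnegativity matters only in that the common factor is applied to nonnegative terms; the equality itself uses additivity alone.

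It remains to bound $e^{x} - 1$ with $x = 2\Elld/N$. The hypothesis $N \geq 2\Elld$ gives $x \in (0,1]$. By convexity of $e^x$ on $[0,1]$, the chord bound gives $e^x - 1 \leq (e-1)x \leq 2x$. Therefore $2(e^x - 1) \leq 4x = 8\Elld/N$, which yields~\eqref{EqnSingleBlockBound}.

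No step is a real obstacle once \Cref{ThmMaster} is available. The only points needing care are confirming that the grid endpoints coincide with those in the boundary term, and checking that the elementary inequality $e^x - 1 \leq 2x$ holds on the full range permitted by $N \geq 2\Elld$. The constant $2$ suffices there, since $e - 1 < 2$.
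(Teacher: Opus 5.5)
Your proof is correct and follows the paper's argument: apply \Cref{ThmMaster}, note that every step of the grid has the same odds multiplier $e^{2\Elld/N}$, collapse the sum using additivity of $\Ginfo$, and finish with $e^x - 1 \leq 2x$ on $[0,1]$. One small correction: the nonnegativity of $\Ginfo(\rstar, 1-\rstar)$ is needed in that final step, where you replace $e^{2\Elld/N}-1$ by $4\Elld/N$; the collapse of the sum uses only additivity.
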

}
\begin{proof}
Since every step has the same log-\SR-odds multiplier,
\Cref{ThmMaster} and the additivity of $\Ginfo$ give
\begin{align*}
  \KL(\Prob_Z \| \Prob_{\Zhat}) & \leq 2 \left\{ \exp\left(
    \frac{2 \Elld}{N}\right) - 1 \right\} \Ginfo(\rstar, 1 -
    \rstar) + \BOUNDARYSPEC.
\end{align*}
Using the elementary bound $e^x - 1 \leq 2 x$ for $x \in [0, 1]$ and
the assumed lower bound $N \geq 2 \Elld$ yields the claim.
\end{proof}

Let us now explore the iteration complexity of the single-block \prd
sampler, meaning the number of iterations $N(\varepsilon)$ required to
achieve KL error at most $\varepsilon$.  In particular, suppose that
we use the symmetric interval $[\rstar, 1 - \rstar]$ with $\rstar
\defn \frac{\varepsilon}{2 c \big( \usedim |\Alphabet| \big)^3}$.
Then inequality~\eqref{EqnBoundaryBound} gives $\BOUNDARYSPEC \leq c
\, \rstar \big( \usedim |\Alphabet| \big)^3 \leq \varepsilon/2$.
The \prd sampler, when
implemented with the equi-spaced grid~\eqref{EqnSingleBlockGrid},
yields an output $\Zhat$ such that $\KL(\ProbZ \| \Prob_{\Zhat}) \leq
\varepsilon$ using at most
\begin{subequations}
\begin{align}
\label{EqnIterationComplexity}
  N(\varepsilon) & = c_0 + c_1 \frac{ \Ginfo(\rstar, 1 -
    \rstar)}{\varepsilon} \log \Big( \frac{\usedim
    |\Alphabet|}{\varepsilon} \Big)\quad
\end{align}
iterations, for universal constants $(c_0, c_1)$.  Here we have used
the fact that $\log(1/\rstar) = \Order\big(\log(\frac{\usedim
  |\Alphabet|}{\varepsilon})\big)$ for our specified choice of
$\rstar$.  Recalling the definition~\eqref{EqnDefnQdens} of the \igc
density $\qdens$, we have
\begin{align}
\Ginfo(\rstar, 1 - \rstar) & = \int_{-\Elld}^{\Elld}
\qdens(\lam) \, d\lam,
\end{align}
which justifies the single-block scaling~\eqref{EqnDefnSingBlock}
asserted in~\Cref{SecOverview}.

As we show in the next section (see~\Cref{CorUpperTC}), the aggregate
\igc complexity is upper bounded as
 \begin{align}
   \label{EqnUpperTCNew}
\Ginfo(\rstar, 1 - \rstar) \; \leq \Ginfo(0,1) \, & \leq 2 \; \min \{
\TC(Z), \; \DTC(Z) \},
 \end{align}
\end{subequations}

As an important consequence, the single-block sampler inherits any
bound that can be obtained for $\min \{ \TC(Z), \DTC(Z) \}$.  These
multivariate dependence measures adapt to the structure of $Z$.
Notably, in their analysis of a CTMC masking sampler, Dmitriev et
al.~\cite{DmiEtAl26} explored a diverse range of examples for which
the min-$\TC/\DTC$ value can be computed or bounded in terms of
low-dimensional latent structure.  Among other interesting examples,
they analyzed the $\TC/\DTC$ structure of hidden Markov models with
slowly varying latent states, stochastic block models, and quantized
latent models.  Thus, by combining our bounds with their analysis, we
can assert that similar low-dimensional guarantees hold for \prd
samplers.

For their masking-related functional, called the effective total
correlation, Dmitriev et al.~\cite{DmiEtAl26} also observed that the
min-$\TC/\DTC$ bound can be very conservative.  In particular, via a
construction using independent blocks, they gave an example where
there is an $\Omega(d)$ gap between their complexity measure and the
min-$\TC/\DTC$ bound.  We can adapt their argument to the current setting,
using our aggregate \igc measure $\Ginfo(0,1)$ as the base, to show
that the same gap persists.


\subsubsection{Effects of varying the reference $\refdist$}
\label{SecReference}

The choice of reference distribution $\refdist$ is an important design
choice, and past empirical
work~\cite{AusEtAl21,VigEtAl23,LaaEtAl25,SchEtAl25} has shown that
this choice can have substantial effects on sampler performance.  Here
we provide theoretical insight into why the choice of reference
matters, and when it can actually lead to dimension-dependent
differences between different \prd samplers.

\begin{figure}[h]
\begin{center}
  \begin{tabular}{ccc}
    \widgraph{0.33\textwidth}{\figdir/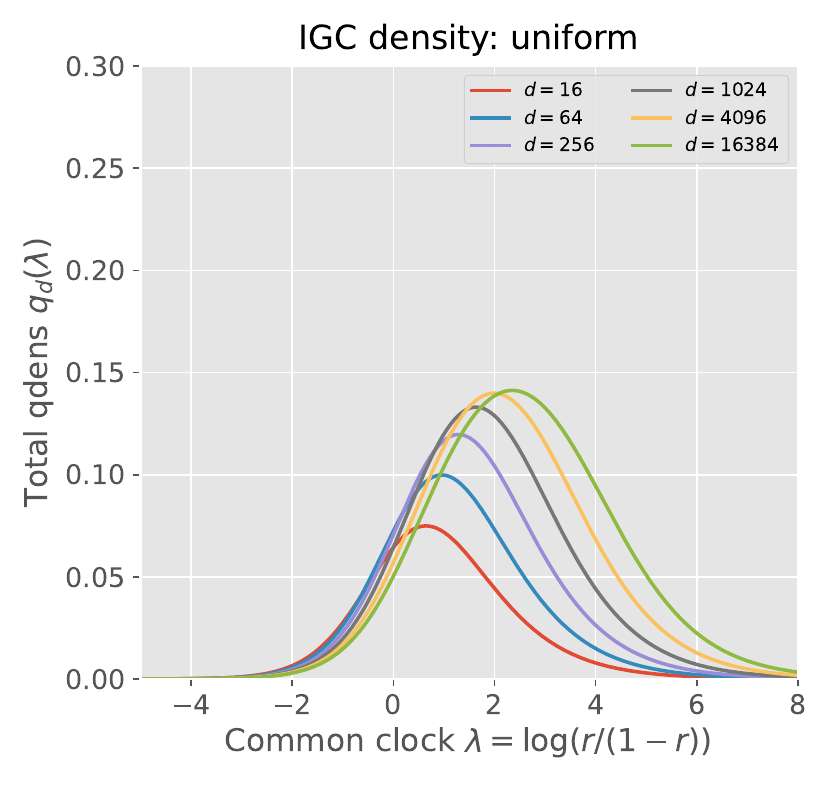}
    &
    \widgraph{0.33\textwidth}{\figdir/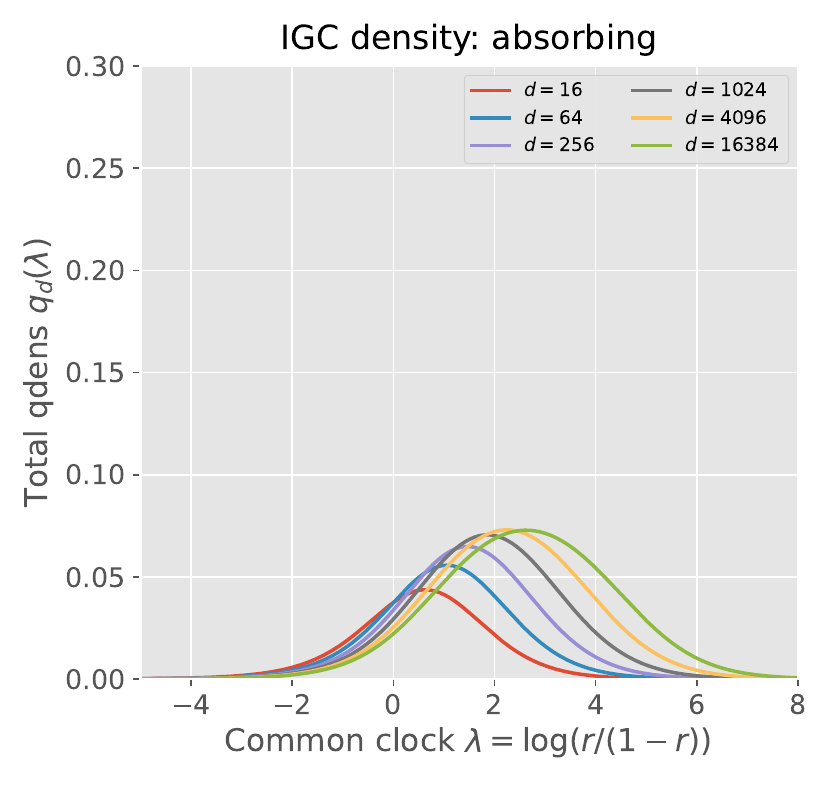}
    &
    \widgraph{0.33\textwidth}{\figdir/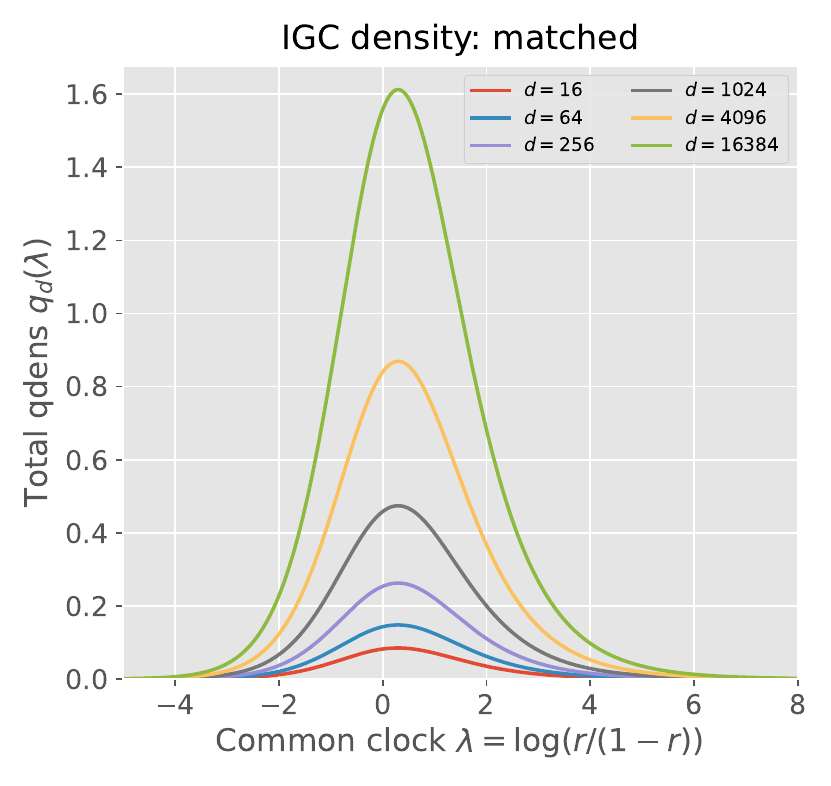}
    \\
    (a) & (b) & (c)
  \end{tabular}
\caption{Log-odds \igc densities $\qdens$ for Ensemble~A, with
  $\epsilon_\usedim = \usedim^{-1/2}$, under (a) the uniform reference, (b) the
  absorbing-state reference, and (c) the marginal-matched reference.}
\label{FigForbidQdens}
\end{center}
\end{figure}

More precisely, we consider three different choices: the
\emph{uniform} reference; the \emph{absorbing-state} reference; and
the \emph{marginal-matched} reference, in which $\refdist$ corresponds
to a product of the marginals of $\ProbZ$.  For each of these three
choices, we use the \igc representation and an associated family of
binary distributions to establish two important facts:
\begin{itemize}[leftmargin=1em, topsep=2pt, itemsep=2pt,
  parsep=0pt, partopsep=0pt]
  \item The choice of reference distribution can have a
    dimension-dependent effect on the iteration complexity of \prd
    samplers.
  \item There is no universal performance ordering among different
    reference distributions.
\end{itemize}

Let us begin by describing a general family of distributions over a
binary random vector $(Z_1, \ldots, Z_\usedim) \in \{0,1 \}^\usedim$.
Suppose that the dimension $\usedim$ is even, and we partition the
coordinates into $\usedim/2$ disjoint pairs.  The pairs are
independent and identically distributed, with each block $(Z_{2k-1},
Z_{2k})$ drawn from the bivariate PMF $P_\epsilon$ on $\{0, 1\}^2$
given by
\begin{align*}
  P_\epsilon(0, 0) & = \epsilon, & P_\epsilon(0, 1) & = 0, &
  P_\epsilon(1, 0) & = \frac{1 - \epsilon}{2}, & P_\epsilon(1, 1) & =
  \frac{1 - \epsilon}{2}.
\end{align*}
Thus, the configuration $(0, 1)$ is forbidden within every pair.  The
one-coordinate marginals are $\Ber(1 - \epsilon)$ and $\Ber((1 -
\epsilon)/2)$ for the first and second coordinates of each pair,
respectively.

\begin{figure}[h]
\begin{center}
  \begin{tabular}{cc}
    \widgraph{0.45\textwidth}{\figdir/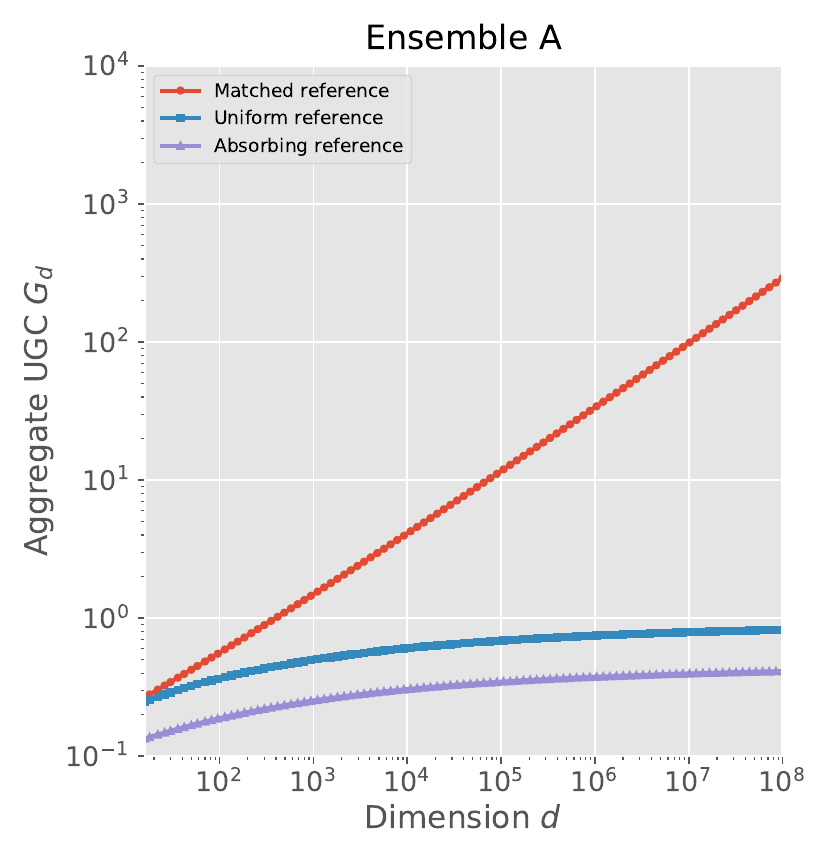} &
    \widgraph{0.45\textwidth}{\figdir/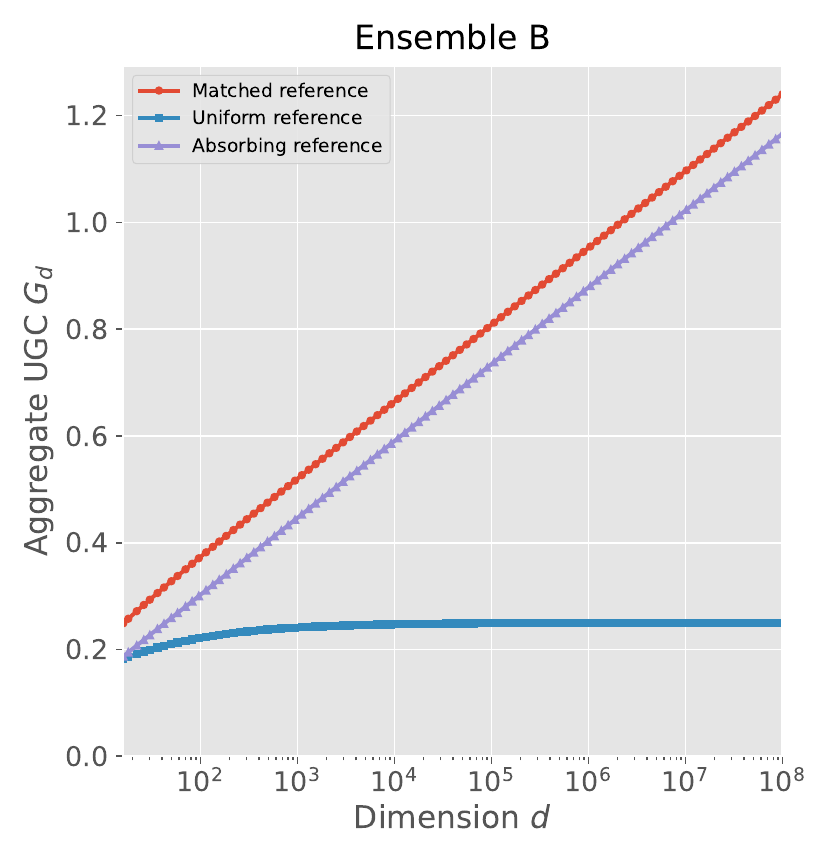}
    \\
    (a) & (b)
  \end{tabular}
\caption{Dimension dependence of the aggregate \igc for different
  choices of reference.  Panel~(a) shows Ensemble~A, with
  $\epsilon_\usedim = \usedim^{-1/2}$, for which the marginal-matched
  reference becomes increasingly unfavorable.  Panel~(b) shows
  Ensemble~B, with $1 - \epsilon_\usedim = \usedim^{-1}$, for which
  the uniform reference has constant-order \igc whereas the
  \igc for the absorbing-state reference grows as $\Theta(\log \usedim)$.}
\label{FigForbidScale}
\end{center}
\end{figure}

For our three choices of reference distribution, when specialized to
this ensemble, the marginal-matched choice is $\nu_1 = \Ber(1 -
\epsilon)$ and $\nu_2 = \Ber((1 - \epsilon)/2)$, the uniform choice is
$\nu_1 = \nu_2 = \Ber(1/2)$, and the absorbing-state choice is $\nu_1 = \nu_2 =
\delta_0$.  We use the same family $P_\epsilon$ to exhibit both
dimension-dependent scaling, and no uniform ordering among the
choices.

\paragraph{Ensemble A:}
For the first ensemble, we take $\epsilon_\usedim \defn
\usedim^{-1/2}$.  With this choice, the exceptional configuration $(0,
0)$ becomes increasingly rare with dimension, while the first marginal
approaches a point mass at one and the second approaches $\Ber(1/2)$.
This is the parameter regime used for the \igc densities
in~\Cref{FigForbidQdens} and for the scaling experiment in panel~(a)
of~\Cref{FigForbidScale}.  In this regime, the marginal-matched
reference is the unfavorable choice, with numerical results showing
its aggregate \igc growing as $\Theta(\sqrt{d})$ in dimension $d$
relative to the uniform and absorbing-state references.

\paragraph{Ensemble B:}
For the second ensemble, we instead approach the absorbing state.
Writing $\delta_\usedim \defn 1 - \epsilon_\usedim$, we take
$\delta_\usedim \defn \frac{1}{\usedim}$ and $\epsilon_\usedim = 1 -
\frac{1}{\usedim}$, so that the bivariate PMF takes the form
\begin{align*}
  P_{\epsilon_\usedim}(0, 0) & = 1 - \frac{1}{\usedim}, &
  P_{\epsilon_\usedim}(0, 1) & = 0, & P_{\epsilon_\usedim}(1, 0) & =
  \frac{1}{2 \usedim}, & P_{\epsilon_\usedim}(1, 1) & = \frac{1}{2
    \usedim}.
\end{align*}
The target therefore concentrates around the absorbing configuration
$(0, 0)$ as $\usedim$ grows.  Nevertheless, the absorbing-state
reference is substantially less favorable than the uniform reference:
as shown in panel~(b) of~\Cref{FigForbidScale}, the aggregate \igc for
the uniform reference remains of constant order, whereas that of the
absorbing-state reference exhibits a growth rate of $\Theta(\log
\usedim)$.


\subsection{The bivariate \igc kernel and its structure}
\label{SecBivQdens}

As noted in the introduction, our analysis of \prd sampling differs
from our previous analyses of Gaussian and masking diffusion
samplers~\cite{Wai_DGC,Wai_UGC} in an important way.  In particular,
while the latter samplers have a complexity structure that is
essentially univariate in nature, the \igc setting leads to a
bivariate structure.  This section is devoted to an exploration of the
bivariate \igc kernel that underlies our analysis.

Let $\newrevinv(\eta) = \frac{e^\eta}{1 + e^\eta}$ be the
transformation from log-\SR-odds time back to squared reliability
time.  For each coordinate $i = 1, \ldots, \usedim$, we define the
log-\SR-odds cut mutual information
\begin{subequations}
\begin{align}
\label{EqnDefnBivIfun}
\Ifun_i(\xi, \eta) \defn \Info\big( \Xvar_{\newrevinv(\xi), i};
\Xvar_{\newrevinv(\eta), -i} \big) \; = \;
\TC\big(\Xvar_{\newrevinv(\xi), i}, \Xvar_{\newrevinv(\eta), -i}\big)
- \TC\big(\Xvar_{\newrevinv(\eta), -i}\big).
\end{align}
Here $\partial_1$ and $\partial_2$ denote differentiation in the first
and second arguments of $\Ifun_i$, respectively.  Taking the mixed
partial derivative across the two arguments and then aggregating these
derivatives across coordinates yields the \emph{bivariate \igc kernel}
\begin{align}
  \label{EqnDefnBivQdens}
  \Bivq(\eta, \xi) & \defn \sum_{i = 1}^{\usedim} \partial_1
  \partial_2 \Ifun_i(\xi, \eta).
\end{align}
\end{subequations}
When evaluated on the diagonal, it turns out that we have the equivalence
$\qdens(\eta) = \Bivq(\eta, \eta)$, so that it is intimately connected
to the univariate \igc density. See~\Cref{LemExactKL} for the proof
of this equivalence.

\subsubsection{Integral representations using $\Bivq$}
\label{SecBivRepresentations}

Naturally, a key quantity in the proof of~\Cref{ThmMaster} is the
one-step KL discretization error, given by
\begin{subequations}
\begin{align}
\label{EqnDefnExactKL}  
\DenKL(p, q) & \defn \Exs_{X_p}\left[ \KL\left( \Kexact{p}{q}(
  \mathord\cdot \mid X_p ) \,\middle\|\, \Ktilde{p}{q}( \mathord\cdot
  \mid X_p ) \right) \right].
\end{align}  \end{subequations}
We now show how $\Bivq$ is fundamental in that it simultaneously
provides integral representations of the KL one-step
error~\eqref{EqnDefnExactKL}, the \igc increment, and the (dual) total
correlation functions.
\mygraybox{
  \begin{proposition}
\label{PropBivIntegral}    
For any pair $0 < a < b < 1$, we can write the KL product-reference
error over the interval $[a,b]$ as
\begin{subequations}
  \begin{align}
    \label{EqnBivKL}
    \DenKL(a, b) & = \int_{\logodds(a)}^{\logodds(b)}
    \int_\eta^{\logodds(b)} \Bivq(\eta, \xi) \, d\xi \, d\eta.
  \end{align}
  Moreover, the associated \igc increment takes the form
  \begin{align}
    \label{EqnBivIGC}
    \Ginfo(a, b) & = \int_{\logodds(a)}^{\logodds(b)} \Bivq(\eta,
    \eta) \, d\eta.
  \end{align}
  Finally, the total correlation (\TC) and dual total correlation
  (\DTC) have the representations
  \begin{align}
    \label{EqnBivTC}
    \TC(\Zvar) \stackrel{(\TC)}{=} \int_{-\infty}^{\infty}
    \int_\eta^\infty \Bivq(\eta, \xi) \, d\xi \, d\eta, \qquad
    \mbox{and} \qquad \DTC(\Zvar) \stackrel{(\DTC)}{=}
    \int_{-\infty}^{\infty} \int_{-\infty}^\eta \Bivq(\eta, \xi) \,
    d\xi \, d\eta.
  \end{align}
\end{subequations}    
  \end{proposition}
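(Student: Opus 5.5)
The plan is to reduce all four identities to one family of two-parameter functions,
\begin{align*}
F(\xi, \eta) \defn \sum_{i=1}^{\usedim} \Ifun_i(\xi, \eta),
\end{align*}
and to integrate the mixed partial $\partial_1 \partial_2 \Ifun_i$ over triangles, evaluating everything via the fundamental theorem of calculus. Throughout, write $A = \logodds(a)$ and $B = \logodds(b)$. Two structural facts drive everything.

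\textbf{Fact (a).} Conditionally on $Z$ the coordinates evolve independently. So for times $s \le t$, $X_{s,i}$ is obtained from $X_{t,i}$ through the one-site channel $\Refdist_{i,s/t}$, independently of everything else. This gives the Markov chain $X_{a,i} - X_{b,i} - (X_{a,-i}, X_{b,-i})$ whenever $a < b$.

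\textbf{Fact (b).} We have $\TC(X) = \Info(X_i; X_{-i}) + \TC(X_{-i})$. Consequently, along an inhomogeneous path $\tbold$, the derivative $\partial_{t_i} \TC(X_\tbold)$ equals the derivative of $\Info(X_{t_i,i}; X_{\tbold_{-i}})$ in its first argument. In log-\SR-odds coordinates, this gives
\begin{align*}
\frac{d}{d\eta}\, \TC\big(X_{\newrevinv(\eta)}\big) = \sum_i \partial_1 \Ifun_i(\eta, \eta).
\end{align*}

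\emph{Step 1: KL representation~\eqref{EqnBivKL}.} By the KL projection identity, $\DenKL(a,b)$ is the conditional total correlation $\TC(X_b \mid X_a)$. I will rewrite this conditional total correlation using the entropy decompositions
\begin{align*}
H(X_b \mid X_a) &= H(X_b) - H(X_a) + \sum_i H(X_{a,i} \mid X_{b,i}), \\
H(X_{b,i} \mid X_a) &= H(X_{b,i} \mid X_{a,i}) - \Info(X_{b,i}; X_{a,-i} \mid X_{a,i}).
\end{align*}
The first holds because the channel factorizes. The last term simplifies, via the Markov chain in Fact (a), to
\begin{align*}
\Info(X_{b,i}; X_{a,-i} \mid X_{a,i}) = \Info(X_{b,i}; X_{a,-i}) - \Info(X_{a,i}; X_{a,-i}).
\end{align*}
Collecting terms, I expect
\begin{align*}
\DenKL(a,b) = \TC(X_b) - \TC(X_a) - \sum_i \big[\Ifun_i(B, A) - \Ifun_i(A, A)\big].
\end{align*}
On the other side, I integrate over the triangle $A \le \eta \le \xi \le B$, doing the $\eta$-integral first:
\begin{align*}
\int_A^B \int_A^{\xi} \partial_2 \partial_1 \Ifun_i \, d\eta \, d\xi = \int_A^B \partial_1 \Ifun_i(\xi, \xi) \, d\xi - \Ifun_i(B, A) + \Ifun_i(A, A).
\end{align*}
Summing over $i$ and applying Fact (b) turns the diagonal integral into $\TC(X_b) - \TC(X_a)$, which matches the expression above.

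\emph{Step 2: diagonal identity and~\eqref{EqnBivIGC}.} Conditional independence given $Z$ yields
\begin{align*}
\Info(Z; X_\tbold) = \sum_i \Ent(X_{t_i,i}) - \TC(X_\tbold) - \sum_i \Ent(X_{t_i,i} \mid Z_i).
\end{align*}
Only $\TC$ contributes to the mixed partials in $i \neq j$, so $\gfun(r) = \sum_{i \neq j} \partial_{t_i} \partial_{t_j} \TC(X_\tbold)$ evaluated at $\tbold = r\onevec$. By Fact (b), $\partial_{t_i}\TC$ is the first-argument derivative of $\Ifun_i$. Differentiating next in all $t_j$ with $j \neq i$ is exactly $\partial_2$ on the diagonal. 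Changing variables $t = \newrevinv(\cdot)$ coordinatewise contributes the factor $r(1-r)$ twice; the second-derivative term of the change of variables appears only for repeated indices, so it does not enter. This gives $\qdens(\eta) = \Bivq(\eta, \eta)$, and~\eqref{EqnBivIGC} is then the definition of $\Ginfo$ rewritten in $\eta$-coordinates.

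\emph{Step 3: $\TC$ and $\DTC$.} For~(\TC), I let $A \to -\infty$ and $B \to +\infty$ in Step 1. We have $\TC(X_b \mid X_a) \to \TC(Z \mid X_0) = \TC(Z)$, since $X_0$ is independent of $Z$. For~(\DTC), I integrate $\Bivq$ over the whole plane. By the fundamental theorem of calculus, this equals $\sum_i \Ifun_i(\infty, \infty) = \sum_i \Info(Z_i; Z_{-i})$, because $\Ifun_i$ vanishes whenever either argument is $-\infty$. I then use the classical identity $\sum_i \Info(Z_i; Z_{-i}) = \TC(Z) + \DTC(Z)$ and subtract the (\TC) triangle.

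The main obstacle I anticipate is analytic rather than algebraic. Continuity of the boundary terms at $\pm\infty$ should follow from the explicit affine dependence of the channel on $\sqrt{r}$. The delicate part is justifying differentiability of $\Ifun_i$ and Fubini on the unbounded domain, together with the interchange of limits and integrals. For this I would first establish integrability, for instance by proving (\TC)/(\DTC) on $[A,B]^2$ and passing to the limit via monotone or dominated convergence using the finite alphabet.
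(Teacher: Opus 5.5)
Your proposal is correct. Steps 1 and 2 essentially reproduce the paper's argument. Your entropy bookkeeping lands on the same intermediate identity $\DenKL(a,b) = \{T(\beta)-T(\alpha)\} - \sum_i \{A_i(\beta)-A_i(\alpha)\}$ used in the proof of \Cref{LemExactKL}, since $A_i(\beta)-A_i(\alpha) = C_i(\beta,\alpha)-C_i(\alpha,\alpha)$ by your Fact (b). Your diagonal computation is the same conditional-independence observation that the paper phrases through the surface $F$.

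The genuine difference is Step 3. In \Cref{SecProofBivIntegral}, the paper proves two separate pathwise derivative identities, $T'(\xi)=\sum_i \partial_1 C_i(\xi,\xi)$ and $D'(\eta)=\sum_i \partial_2 C_i(\eta,\eta)$ with $D(\eta) \defn \DTC(X_{r(\eta)})$. It turns each into a one-sided marginal of $\Bivq$ using $C_i(\xi,-\infty)=C_i(-\infty,\eta)=0$, and then integrates along the line. You instead obtain $(\TC)$ as the exhausting limit $[a,b]\uparrow[0,1]$ of \eqref{EqnBivKL}, reading $\TC(Z)=\TC(Z\mid X_0)$ as the one-shot product error from pure reference to data. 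You obtain $(\DTC)$ by telescoping the full-plane integral to $\sum_i \Info(Z_i;Z_{-i}) = \TC(Z)+\DTC(Z)$ and subtracting the upper triangle.

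Your route is more economical, since it needs no derivative computation for $\DTC$ along the path. It also explains the TC/DTC complementarity as an upper/lower split of a single plane mass. The paper's route is self-contained and independent of the KL identity, and it displays directly the ``first argument versus second argument'' symmetry of $C_i$ on the diagonal.

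For the limit interchanges you flag, the clean justification is that each $\partial_1\partial_2 C_i$ is nonnegative. This follows from \eqref{EqnMixed}, because $\mathsf J_\xi \geq 0$ by data processing. Tonelli and monotone convergence then handle both the triangle exhaustion and the half-plane split. The paper's route needs comparable care when it sends $\partial_1 C_i(\xi,\alpha)\to 0$ as $\alpha\to-\infty$.
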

}
\noindent We prove the exact KL representation~\eqref{EqnBivKL} and
the \igc representation~\eqref{EqnBivIGC} as part of the proof
of~\Cref{ThmMaster}; in particular, see~\Cref{LemExactKL}.  Finally,
we prove the two representations~\eqref{EqnBivTC}
in~\Cref{SecProofBivIntegral}.


\subsubsection{Exponential $\Bivq$-structure and its consequences}
\label{SecIGCDecay}

As part of the proof of~\Cref{LemExactKL}, we show that the univariate
\igc density $\qdens(\eta) = \Bivq(\eta, \eta)$ corresponds to the
diagonal of the bivariate \igc kernel.  This equivalence underlies the
\igc increment relation~\eqref{EqnBivIGC}.  We next isolate a key
structural property of the bivariate \igc kernel $\Bivq$ that controls
this difference, and thereby leads to upper bounds on the aggregate
\igc mass $\Ginfo(0,1)$ in terms of total correlation and dual total
correlation.

A key property of the log-\SR-odds parameterization is that the
bivariate $\igc$ kernel at an off-diagonal value $\Bivq(\eta, \xi)$
has a sandwich that is exponential in the difference $|\eta - \xi|$.
More precisely, we have:
\mygraybox{
\begin{lemma}[Exponential sandwich for bivariate \igc density]
\label{LemBivSandwich}
For every $\eta, \xi \in \real$, we have the sandwich relation
\begin{align}
  \label{EqnBivSandwich}
  \frac{1}{2} e^{-|\xi - \eta|} \Bivq(\eta, \eta) &
  \stackrel{(i)}{\leq} \Bivq(\eta, \xi) \; \stackrel{(ii)}{\leq} 2
  e^{|\xi - \eta|} \Bivq(\eta, \eta).
\end{align}
\end{lemma}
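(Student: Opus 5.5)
The plan is to factor out the log-\SR-odds change of variables explicitly, and then control what remains using the degradation (semigroup) structure of the channel~\eqref{EqnGeneralRef} together with the data-processing inequality of~\Cref{LemDPI}.

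\emph{Step 1: factor out the Jacobian.} Write $s = \newrevinv(\xi)$ and $t = \newrevinv(\eta)$, and use $\frac{d}{d\xi} = s(1-s)\frac{d}{ds}$. This gives
\begin{align*}
\partial_1 \partial_2 \Ifun_i(\xi, \eta) = s(1-s)\, t(1-t)\, h_i(s,t), \qquad h_i(s,t) \defn \frac{\partial^2}{\partial s\, \partial t} \Info\big(\Xvar_{s,i}; \Xvar_{t,-i}\big).
\end{align*}
The weight $w(\lam) = \newrevinv(\lam)\big(1-\newrevinv(\lam)\big) = e^{\lam}/(1+e^{\lam})^2$ satisfies
\begin{align*}
e^{-|\xi-\eta|} \leq \frac{w(\xi)}{w(\eta)} \leq e^{|\xi-\eta|},
\end{align*}
because $\frac{1+e^{\eta}}{1+e^{\xi}}$ lies between $1$ and $e^{\eta-\xi}$. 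The diagonal value $\Bivq(\eta,\eta)$ carries the weight $w(\eta)^2$, so this ratio already accounts for the exponential part of~\eqref{EqnBivSandwich}. It remains to prove that
\begin{align*}
H(s,t) \defn \sum_i h_i(s,t) \qquad \text{lies within a factor of } 2 \text{ of } H(t,t)
\end{align*}
(possibly after trading part of the exponential budget between the two steps).

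\emph{Step 2: an explicit formula for $h_i$.} Conditionally on $\Xvar_{t,-i}$, the law of $\Xvar_{s,i}$ is the mixture $\sqrt{s}\,\Den^{\rm LOO}_{i}(\cdot \mid \Xvar_{t,-i}) + (1-\sqrt{s})\,\refdist_i$. Hence $\Info(\Xvar_{s,i}; \Xvar_{t,-i})$ depends on $s$ only through $\sqrt{s}$ multiplying a centered likelihood ratio. Differentiating in $s$ yields an expectation, over the $t$-indexed side information, of a $\chi^2$/Bregman-type quantity in the LOO posterior; differentiating in $t$ then produces the growth of that quantity as $\Xvar_{t,-i}$ becomes more reliable. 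I would aim to write $h_i(s,t) = \Exs[\phi_{\sqrt{s}}(\Delta_{t,i})]$ for an explicit nonnegative convex $\phi$, where $\Delta_{t,i}$ is the centered LOO posterior increment.

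\emph{Step 3: compare $s$ with $t$.} For fixed $t$, the dependence on $s$ enters only through the mixing weight $\sqrt{s}$. I would first try to show that the ratio $\phi_{\sqrt{s}}(\cdot)/\phi_{\sqrt{t}}(\cdot)$ lies between $\sqrt{s/t}$-type factors multiplied by $\frac{1-\sqrt{t}}{1-\sqrt{s}}$-type factors. Each of these is at most $2\,e^{|\xi-\eta|}$ in odds form; the constant $2$ comes from $1+\sqrt{u} \in [1,2]$ when converting $1-u$ to $1-\sqrt{u}$. The direction $s<t$ is handled by the semigroup degradation $\Xvar_{s,i} = \Refdist_{s/t}\Xvar_{t,i}$ together with~\Cref{LemDPI}. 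The direction $s>t$ is handled by the reverse pointwise mixture bound $\Refdist_{i,s}(x \mid w) \leq \sqrt{s/t}\,\Refdist_{i,t}(x \mid w)$ for $x \neq w$, together with its complementary bound.

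\emph{Main obstacle.} Step 3 is the hardest part. It requires a pointwise, not merely averaged, comparison of the mixed second derivative as the reliability parameter varies, while tracking the constants so that the Jacobian factor and the mixture factor combine into exactly $2e^{|\xi-\eta|}$ rather than its square. If the pointwise route fails, the fallback is to prove monotonicity of $s \mapsto \sqrt{s}\,H(s,t)$ and of $s \mapsto H(s,t)/(1-\sqrt{s})$, and to read off both inequalities (i) and (ii) from these two monotonicity facts.
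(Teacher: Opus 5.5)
The gap is Step~3. It carries the whole content of the lemma, and you leave it as an intention; moreover, the target you set for it in Step~1 is false.

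Your Step~2 is essentially the paper's reduction. The law of $X_{s,i}$ given $X_{t,-i}$ is the $\sqrt{s}$-mixture of the leave-one-out posterior with $\refdist_i$. The chain rule along $X_{\newrevinv(\xi),i}\to X_{\newrevinv(\eta+h),-i}\to X_{\newrevinv(\eta),-i}$ then gives $\partial_1\partial_2\Ifun_i(\xi,\eta)=\lim_{h\downarrow 0}h^{-1}\mathbb{E}\big[\mathsf J_\xi(\pi_{i,\eta+h},\pi_{i,\eta})\big]$, where the pair of posteriors does not depend on $\xi$. What remains is a two-sided comparison of $\mathsf J_\xi(p,q)$ with $\mathsf J_\eta(p,q)$, for a fixed pair and uniformly over pairs.

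However, $H(s,t)$ is not within a factor $2$ of $H(t,t)$. In squared-reliability time, $\frac{d}{ds}\KL\big(\sqrt s\,p+(1-\sqrt s)\refdist\,\big\|\,\sqrt s\,q+(1-\sqrt s)\refdist\big)$ is a Bregman divergence whose generator has diagonal Hessian entries $\frac12(P_a+\refdist_a)/P_a^2$, with $P_a=\sqrt s\,z_a+(1-\sqrt s)\refdist_a$. As $x=z_a/\refdist_a$ ranges over $[0,\infty)$, the ratio of these entries at $s$ versus $t$ ranges between $\sqrt{t/s}$ (as $x\to\infty$) and $\frac{(2-\sqrt s)(1-\sqrt t)^2}{(1-\sqrt s)^2(2-\sqrt t)}$ (at $x=0$). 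Both endpoints are unbounded; for instance, with $\refdist=\delta_0$ the entries on non-reference symbols equal $1/(2\sqrt s\,z_a)$ and blow up as $s\downarrow 0$. So the exponential is genuinely shared between the Jacobian and the mixture factor: as $s\downarrow 0$ the Jacobian contributes $s/t$ and the mixture factor $\sqrt{t/s}$. Factoring out $w(\xi)w(\eta)$ is harmless, since the Jacobian does not depend on $x$. What you then need is the \emph{exact} range of the mixture ratio.

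None of the tools you name for Step~3 produces that range.
\begin{itemize}
\item \emph{Degradation plus ordinary DPI.} The relation $X_{s,i}=\Refdist_{s/t}X_{t,i}$ with the ordinary data-processing inequality only yields monotonicity of $\mathsf K_\lambda$, i.e.\ $\mathsf J_\lambda\ge 0$. Comparing derivatives at two different times is a second-order statement. It is exactly the reverse DPI of \Cref{LemDPI}, and proving it is the heart of the matter.
\item \emph{Pointwise likelihood-ratio bounds.} Bounds between $\Refdist_{i,s}$ and $\Refdist_{i,t}$, applied factor by factor to $(P_a+\refdist_a)/P_a^2$, combine worst cases attained at opposite ends: $x\to\infty$ for the numerator, $x=0$ for the denominator. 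With the Jacobian this gives about $\frac{\sqrt s(1+\sqrt s)^2}{\sqrt t(1+\sqrt t)^2}e^{\xi-\eta}$ for $\xi>\eta$. That grows like $e^{\frac32(\xi-\eta)}$ as $t\downarrow 0$, which is precisely the loss you feared.
\item \emph{Your fallback} is half right. The quantity $\sqrt s\,H(s,t)$ is indeed nondecreasing in $s$: pointwise, $\frac{d}{d\sigma}\big[\sigma(y+1)/y^2\big]=2/y^3$ with $\sigma=\sqrt s$ and $y=1+\sigma(x-1)$. This gives (i) for $\xi>\eta$ and (ii) for $\xi<\eta$. But $H(s,t)/(1-\sqrt s)$ is not monotone, since for large $x$ its weight is proportional to $1/(\sqrt s(1-\sqrt s))$. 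The correct companion is that $(1-\sqrt s)^2H(s,t)/(2-\sqrt s)$ is nonincreasing. The constant $2$ then comes from $(1+u)^2(2-u)\in[2,4]$; using $1+\sqrt u\in[1,2]$ alone would only give $4$.
\end{itemize}

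The missing idea is the content of the paper's \Cref{LemDPI}. Write $\mathsf J_\lambda(p,q)$ as the Bregman divergence of $\Psi_\lambda=\frac{d}{d\lambda}\Phi_\lambda$, and compute its diagonal Hessian $\frac{r(1-r)}{2}\,(P_a+\refdist_a)/P_a^2$ with $r=\newrevinv(\lambda)$. Then show that the ratio $R(x)$ of these entries at two levels, Jacobian included, is monotone in $x$. It is therefore pinned between its values at $x=0$ and $x\to\infty$, which are exactly the two endpoints above. This gives a Loewner-order comparison uniformly along the segment from $q$ to $p$. That uniformity is needed because the leave-one-out posterior moves by finite jumps, so the comparison must hold for arbitrary pairs $(p,q)$.
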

}
\noindent See~\Cref{SecProofLemBivSandwich} for the proof of this
claim, which is a key result in the paper.  Underlying the
comparison~\eqref{EqnBivSandwich} is a type of (reverse) data
processing inequality (DPI) that is satisfied by the relative entropy
production function (derivative of the KL divergence) for
distributions along the noising path; see~\Cref{LemDPI} for details.
For analyzing (dual) total correlation, only the lower inequality (i)
is relevant.  We make use of the upper inequality when analyzing a
data-dependent sandwich estimator in~\Cref{PropDataSingle}.

This lemma, when combined with the integral
representations~\eqref{EqnBivTC} from~\Cref{PropBivIntegral}, has an
immediate consequence:

\mygraybox{
  \begin{corollary}[Upper bounds via $\TC$ and $\DTC$]
\label{CorUpperTC}    
 The aggregate \igc complexity is upper bounded as
 \begin{align}
   \label{EqnUpperTC}
\Ginfo(0,1) & \leq 2 \; \min \{ \TC(Z), \; \DTC(Z) \}.
    \end{align}
  \end{corollary}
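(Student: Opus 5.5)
The plan is to combine the integral representations~\eqref{EqnBivTC} and~\eqref{EqnBivIGC} from \Cref{PropBivIntegral} with the lower half (i) of the exponential sandwich in \Cref{LemBivSandwich}. First, I would write the aggregate mass on the full line:
\begin{align*}
\Ginfo(0,1) = \int_{-\infty}^{\infty} \Bivq(\eta,\eta)\, d\eta,
\end{align*}
which follows from~\eqref{EqnBivIGC} by letting $a \downarrow 0$ and $b \uparrow 1$, so that $\logodds(a) \to -\infty$ and $\logodds(b) \to +\infty$. Monotone convergence applies because the integrand is nonnegative: $\qdens \geq 0$, and in any case inequality (i) forces $\Bivq(\eta,\eta) \leq 2 e^{|\xi-\eta|} \Bivq(\eta,\xi)$.

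Next, for the $\TC$ bound, I would fix $\eta$ and integrate inequality (i) over $\xi \in [\eta, \infty)$:
\begin{align*}
\int_\eta^\infty \Bivq(\eta,\xi)\, d\xi \;\geq\; \frac{1}{2}\Bivq(\eta,\eta) \int_\eta^\infty e^{-(\xi-\eta)}\, d\xi \;=\; \frac{1}{2}\Bivq(\eta,\eta).
\end{align*}
Integrating this over $\eta \in \real$ and invoking the representation $(\TC)$ in~\eqref{EqnBivTC} gives $\TC(Z) \geq \frac{1}{2}\Ginfo(0,1)$. The $\DTC$ bound follows from the identical argument with $\xi \in (-\infty, \eta]$, since $\int_{-\infty}^\eta e^{-(\eta-\xi)}\, d\xi = 1$, and the representation $(\DTC)$ then yields $\DTC(Z) \geq \frac{1}{2}\Ginfo(0,1)$. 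Taking the minimum of the two bounds gives~\eqref{EqnUpperTC}.

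The only point needing care is justifying the interchange of limits and integrals on the infinite domain. I expect this to be harmless: lower bound (i) shows $\Bivq(\eta,\xi) \geq 0$ everywhere, so Tonelli's theorem applies to all the double integrals. The representations in \Cref{PropBivIntegral} are taken as given, and those carry the substantive content.
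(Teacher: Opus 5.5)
Your proposal is correct and is essentially the paper's proof: you integrate the lower inequality (i) of \Cref{LemBivSandwich} over $\xi \geq \eta$ (resp.\ $\xi \leq \eta$), using $\int_0^\infty e^{-u}\,du = 1$, and compare with the representations~\eqref{EqnBivTC} and with $\Ginfo(0,1) = \int_{\real} \Bivq(\eta,\eta)\,d\eta$. Your Tonelli and monotone-convergence remarks are fine; the nonnegativity of $\Bivq$ they need follows from (i) at $\xi = \eta$, or directly from $\mathsf J_\xi \geq 0$ in~\eqref{EqnMixed}, which is cleaner than the way you phrased it.
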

}
\begin{proof}
We give the proof here, since it is a straightforward consequence
of~\Cref{LemBivSandwich} and~\Cref{PropBivIntegral}.  On the upper
half-plane $\xi \geq \eta$, inequality~\eqref{EqnBivSandwich}(i) gives
$\Bivq(\eta, \xi) \geq \frac{1}{2} e^{-(\xi - \eta)} \Bivq(\eta,
\eta)$.  Consequently, representation~\eqref{EqnBivTC} yields
\begin{align*}
  \TC(\Zvar) & \geq \frac{1}{2} \int_{-\infty}^{\infty} \Bivq(\eta,
  \eta) \left\{ \int_\eta^\infty e^{-(\xi - \eta)} \, d\xi \right\}
  \, d\eta \; = \; \frac{1}{2} \int_{-\infty}^{\infty} \Bivq(\eta, \eta)
  \, d\eta = \frac{1}{2} \Ginfo(0,1).
\end{align*}
The same argument on the lower half-plane gives the $\DTC$ bound, and
combining the two bounds yields the claim.
\end{proof}


\section{Multi-block analysis and certification}
\label{SecMultiBlock}

Thus far, we have exploited~\Cref{ThmMaster} only in a simple way, via
an equi-spaced grid~\eqref{EqnSingleBlockGrid} in log-\SR-odds.  As in
our past papers~\cite{Wai_UGC,Wai_DGC}, its simple structure leads
naturally to bounds based on multi-block schedules, which tailor the
spacing of the stepsizes to the underlying structure of the \igc
density.  This section is devoted to exploration of such guarantees,
along with data-dependent ways of estimating the \igc increments that
define optimal schedules.  Notably, in~\Cref{PropDataSingle}, we
introduce a sandwich estimator for the \igc increments that is based
on a novel Bregman divergence that emerges naturally from our
analysis.  This provides a practical way of estimating both \igc
increments and KL error.


\subsection{A multi-block guarantee}
\label{SecMBOracle}

Throughout this section, we use $\odds(r) \defn r/(1-r)$ for the
\SR-odds ratio, and $\logodds(r) = \log(r/(1-r))$ for the log-odds
ratio.  For a given interval $[\newrinit, \newrfinal]$ with $0 <
\newrinit < \newrfinal < 1$, suppose that we partition it into a
collection of $\Btot$ blocks, say of the form
\begin{align*}
  \Partition & \defn \big\{ [\block_\bind, \block_{\bind + 1}] \mid
  \bind = 0, \ldots, \Btot - 1 \big\}, \qquad \block_0 = \newrinit,
  \quad \block_\Btot = \newrfinal.
\end{align*}
For each block, define its log-\SR-odds length and \igc mass by
$\Sinfo_\bind \defn \logodds(\block_{\bind + 1}) -
\logodds(\block_\bind)$ and $\Ginfo_\bind \defn \Ginfo(\block_\bind,
\block_{\bind + 1})$.  Suppose that a total of $N_\bind$
equal-log-odds steps are allocated to block $\bind$. Then an
application of~\Cref{ThmMaster} gives
\begin{subequations}
\begin{align}
  \label{EqnIGCMultiUpper}
  \KL(\Prob_Z \| \Prob_{\Zhat}) & \leq 2 \sum_{\bind = 0}^{\Btot - 1}
  \left[ \exp\left\{ \frac{\Sinfo_\bind}{N_\bind} \right\} - 1 \right]
  \Ginfo_\bind + \BOUNDARY.
\end{align}

In principle, for a given iteration budget $N$, one could solve the
integer programming problem of minimizing this upper bound subject to
the constraint $\sum_{\bind = 0}^{\Btot-1} N_\bind = N$.  Apart from
constant factor differences, this turns out to be unnecessary.  In
particular, we now state a bound in terms of the \emph{\igc partition
complexity} given by
\begin{align}
  \label{EqnIGCPartComplex}
  \PartComp(\Partition) & \defn \left( \sum_{\bind = 0}^{\Btot - 1}
  \sqrt{\Sinfo_\bind \Ginfo_\bind} \right)^2.
\end{align}
We analyze the \prd scheme that traverses block $\bind$ using an
equi-spaced grid for which $1 + \rho_\bind$ is the maximum odds
multiplier for a step.  Equivalently, the log-odds spacing is at most
$\log(1 + \rho_\bind)$, where we define
\begin{align}
  \label{EqnIGCTrackMultiRho}
  \rho_\bind & \defn \min\left\{ 1,\, \frac{4
    \sqrt{\PartComp(\Partition)}}{N}
  \sqrt{\frac{\Sinfo_\bind}{\Ginfo_\bind}} \right\}, \qquad \bind = 0,
  \ldots, \Btot - 1.
\end{align}
\end{subequations}

\mygraybox{
\begin{proposition}[Constant-factor optimal guarantees for a $\Btot$-block partition]
\label{PropIGCTrackMulti}
Given an iteration budget $N \geq 2 \left\{ \Btot + 2 \big(
\logodds(\newrfinal) - \logodds(\newrinit) \big) \right\}$, the
$\Btot$-block sampler over $[\newrinit, \newrfinal]$ with maximum
within-block odds multipliers $1 + \rho_\bind$ as defined in
equation~\eqref{EqnIGCTrackMultiRho} uses at most $N$ rounds, and returns a sample
$\Zhat$ such that
\begin{align}
  \label{EqnIGCTrackMultiKL}
  \KL(\Prob_Z \| \Prob_{\Zhat}) & \leq \frac{8
    \PartComp(\Partition)}{N} + \BOUNDARY.
\end{align}
This explicit allocation is within a factor of four of the best
bound~\eqref{EqnIGCMultiUpper} for the given partition.
\end{proposition}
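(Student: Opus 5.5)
The plan is to apply \Cref{ThmMaster} block by block. We then bound the per-block terms using the explicit choice of $\rho_\bind$, and finally compare against the integer-program optimum for~\eqref{EqnIGCMultiUpper}. Within block $\bind$ we use $N_\bind \defn \lceil \Sinfo_\bind / \log(1+\rho_\bind) \rceil$ equal log-odds steps, so every step has odds multiplier $\exp(\Sinfo_\bind/N_\bind) \le 1+\rho_\bind$. \Cref{ThmMaster} together with additivity of $\Ginfo$ over the steps of a block then gives
\begin{align*}
\KL(\Prob_Z\|\Prob_{\Zhat}) \le 2\sum_{\bind} \rho_\bind \Ginfo_\bind + \BOUNDARY.
\end{align*}
Since $\rho_\bind \le \frac{4\sqrt{\PartComp}}{N}\sqrt{\Sinfo_\bind/\Ginfo_\bind}$, we get $2\sum_\bind \rho_\bind\Ginfo_\bind \le \frac{8\sqrt{\PartComp}}{N}\sum_\bind \sqrt{\Sinfo_\bind\Ginfo_\bind} = 8\PartComp/N$. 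The truncation at $1$ only decreases $\rho_\bind$, so it does not affect this bound. Blocks with $\Ginfo_\bind = 0$ are assigned $\rho_\bind = 1$.

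The step I expect to be the main obstacle is the budget count $\sum_\bind N_\bind \le N$. For $\rho\in(0,1]$ we have $\log(1+\rho)\ge \rho/2$, which gives $N_\bind \le 1 + 2\Sinfo_\bind/\rho_\bind$. We split according to whether the minimum in~\eqref{EqnIGCTrackMultiRho} is attained at $1$ or not. If $\rho_\bind = 1$, then $N_\bind \le 1+2\Sinfo_\bind$, and summing over such blocks contributes at most $\Btot + 2(\logodds(\newrfinal)-\logodds(\newrinit))$. If instead $\rho_\bind<1$, then $2\Sinfo_\bind/\rho_\bind = \frac{N}{2\sqrt{\PartComp}}\sqrt{\Sinfo_\bind\Ginfo_\bind}$, and summing over such blocks gives at most $N/2$. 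The total is therefore at most $\Btot + 2(\logodds(\newrfinal)-\logodds(\newrinit)) + N/2$. This is at most $N$ precisely under the stated hypothesis $N\ge 2\{\Btot+2(\logodds(\newrfinal)-\logodds(\newrinit))\}$. If the leftover budget is positive, we can refine steps arbitrarily; refining only shrinks the multipliers and hence the bound.

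For the factor-four optimality claim, take any allocation with $\sum_\bind N_\bind = N$. Using $e^x-1\ge x$, its bound~\eqref{EqnIGCMultiUpper} is at least $2\sum_\bind \Sinfo_\bind\Ginfo_\bind/N_\bind$. By Cauchy--Schwarz, $\big(\sum_\bind\sqrt{\Sinfo_\bind\Ginfo_\bind}\big)^2 \le \big(\sum_\bind \Sinfo_\bind\Ginfo_\bind/N_\bind\big)\big(\sum_\bind N_\bind\big)$. Hence every allocation incurs at least $2\PartComp/N$, which is within a factor four of the $8\PartComp/N$ achieved above.
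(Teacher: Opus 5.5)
Your proposal is correct and follows essentially the argument the paper relies on (it defers the proof to the masking-diffusion analysis, with the factor two from \Cref{ThmMaster} propagated): apply \Cref{ThmMaster} with $N_\bind = \lceil \Sinfo_\bind / \log(1+\rho_\bind) \rceil$ equal log-odds steps per block, bound the budget via $\log(1+\rho) \geq \rho/2$, splitting by whether the truncation at one is active, and get the factor four from $e^x - 1 \geq x$ plus Cauchy--Schwarz; this reproduces exactly the hypothesis $N \geq 2\{\Btot + 2(\logodds(\newrfinal) - \logodds(\newrinit))\}$. One bookkeeping nit: the ceiling's $+1$ terms from blocks with $\rho_\bind < 1$ belong with the $\Btot$ count, not the $N/2$ term, and your stated total $\Btot + 2(\logodds(\newrfinal) - \logodds(\newrinit)) + N/2$ already accounts for this correctly.
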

}

\noindent The proof follows \emph{mutatis mutandis} from our previous
paper~\cite{Wai_UGC}, with suitable replacements of the relevant
complexities.  The only change is that the factor two in
\Cref{ThmMaster} propagates to the bound~\eqref{EqnIGCTrackMultiKL}.

We note that partitioning can only improve the coarse complexity.
Indeed, by combining the additivity of the \igc function $\Ginfo$ with
the Cauchy--Schwarz inequality, we are guaranteed the ordering relation
\begin{align}
  \label{EqnIGCCauchy}  
  \PartComp(\Partition') & \leq \PartComp(\Partition) \; \leq \;
  \big\{ \logodds(\newrfinal) - \logodds(\newrinit) \big\}
  \Ginfo(\newrinit, \newrfinal),
\end{align}
for any pair $\Partition' \subset \Partition$ of nested partitions.
For the full symmetric interval $\IntStar$, the right-hand side is
exactly the single block complexity $\SB$ from
equation~\eqref{EqnDefnSingBlock}.  Strict improvement occurs whenever
the blockwise \igc mass is not proportional to log-odds length.

In contrast to~\Cref{CorSingleBlock}, for which the stepsize choice
required no knowledge of data-dependent parameters,
the odds-multiplier parameters~\eqref{EqnIGCTrackMultiRho} that underlie
\Cref{PropIGCTrackMulti} are based on the \igc increments $\Ginfo_\bind$
for each block.  Accordingly, in order to harness the power of the
result, we need to develop a method, assuming access to some collection
of samples from $\ProbZ$, for reliably estimating these increments.


\subsection{Bregman-based estimation of \igc increments and KL error}
\label{SecBregman}

We now turn to the key question of how to estimate the \igc
increments.  It turns out that there is a \emph{hidden Bregman
divergence}, one that is revealed by the proof of~\Cref{ThmMaster},
that can be used both to give a sandwich relation for the \igc
increments, and an exact relation for the KL error~\eqref{EqnExactKL}.

For each coordinate $i = 1, \ldots, d$, any probability vector $p$ on
$\Alphabet$ and reliability $s \in [0,1]$, we define the path
\begin{subequations}
\begin{align}
  \Tmap_{i,s}(p) & \defn s p + (1 - s) \refdist_i.
\end{align}
Note that this operator transforms $p$ along the path, parameterized
by the reliability $s$.  For each coordinate $i \in [\usedim]$ and
squared reliability $r \in (0, 1)$, we use $\DenLoo_{r,i}(\cdot,
\Xvar_{r,-i})$ to denote the random leave-one-out posterior probability
vector.

Fix $0 < a < b < 1$, and for any anchor $u \in [a, b]$, we define the
\emph{anchored \igc increment}
\begin{align}
  \label{EqnDefnAnchorIGC}
  \Dinfo_u(a, b) & \defn u (1 - u) \sum_{i = 1}^{\usedim} \Exs\left[
    \frac{d}{d u} \KL\left( \Tmap_{i,\sqrt{u}}\big(
    \DenLoo_{b,i}(\cdot, \Xvar_{b,-i}) \big) \,\middle\|\,
    \Tmap_{i,\sqrt{u}}\big( \DenLoo_{a,i}(\cdot, \Xvar_{a,-i}) \big)
    \right) \right].
\end{align}
\end{subequations}
Although this function \emph{appears} to require computing derivatives
of the KL along the path, it turns out that this KL derivative is
itself equal to a different Bregman divergence.  At a high level, this
correspondence arises because the map $\Tmap_{i,s}$ is linear in the
parameter; the KL divergence is itself a Bregman divergence; and
Bregman divergences have a linear representation in terms of their
generators.  For full details, see equation~\eqref{EqnKeyBregman} in the
proof of~\Cref{LemDPI}, and the explicit computation of the Bregman
generator in equation~\eqref{EqnBregmanGen}.  This explicit Bregman
representation is critical, because it means that we can use samples
to obtain an explicit unbiased estimate of $\Dinfo_u(a,b)$.

The anchored \igc increment turns out to have some special properties,
ones which allow us to relate it to both the exact KL error and the
\igc increment for any interval $[a,b]$.  More precisely, for any
triple $0 < a < u < b < 1$, let us introduce the shorthand $c_u(a, b)
\defn \max \big\{ \frac{\odds(u)}{\odds(a)}, \frac{\odds(b)}{\odds(u)}
\big\}$, where $\odds(r) = r/(1-r)$ is the odds ratio.

\mygraybox{
\begin{proposition}[Bregman-based estimation of KL error and \igc increments]
\label{PropDataSingle}
For any choice of $u \in [a,b]$, the \igc increment is sandwiched as
\begin{subequations}
\begin{align}
  \label{EqnDinfoIncrementEstimate}
  \frac{1}{2 c_u(a, b)} \Dinfo_u(a, b) & \leq \Ginfo(a, b) \leq 2
  c_u(a, b) \Dinfo_u(a, b).
\end{align}
Moreover, the one-step KL error~\eqref{EqnExactKL} is equal to
\begin{align}
\label{EqnDinfoKLRepresentation}
\DenKL(a, b) & = \int_a^b \frac{\Dinfo_v(a, v)}{v (1 - v)} \, dv.
\end{align}
\end{subequations}
\end{proposition}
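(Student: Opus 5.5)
The plan is to express the anchored increment $\Dinfo_u(a,b)$ through the bivariate kernel $\Bivq$, and then read off both claims from \Cref{LemBivSandwich} and \Cref{PropBivIntegral}.

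\textbf{Step 1: rewrite the KL derivative via the cut information.} For each coordinate $i$, $\Tmap_{i,\sqrt{u}}(\DenLoo_{r,i}(\cdot,X_{r,-i}))$ is exactly the law of $X_{u,i}$ given $X_{r,-i}$. Indeed, conditioning on $X_{r,-i}$ gives $Z_i$ the posterior $\DenLoo_{r,i}$, and then $X_{u,i}$ is obtained by the channel $\Refdist_{i,u}$, which acts linearly as $\Tmap_{i,\sqrt u}$. Since $X_{a,-i}$ is a degraded version of $X_{b,-i}$, the expected KL between these two conditionals is an information difference:
\begin{align*}
\Exs\,\KL\big(\Tmap_{i,\sqrt u}(\DenLoo_{b,i})\,\big\|\,\Tmap_{i,\sqrt u}(\DenLoo_{a,i})\big) = \Info(X_{u,i};X_{b,-i}) - \Info(X_{u,i};X_{a,-i}).
\end{align*}
Here $u$ enters only through the first argument. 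Passing to log-odds coordinates, the factor $u(1-u)\,d/du$ equals $d/d\xi$ with $\xi=\logodds(u)$. Hence
\begin{align*}
\Dinfo_u(a,b) = \sum_i\big[\partial_1\Ifun_i(\xi,\logodds(b)) - \partial_1\Ifun_i(\xi,\logodds(a))\big] = \int_{\logodds(a)}^{\logodds(b)}\Bivq(\eta,\xi)\,d\eta,
\end{align*}
where the argument ordering follows \eqref{EqnDefnBivQdens}. The main obstacle is justifying the interchange of expectation and $u$-derivative, and fixing the convention of which slot is "first" in $\Bivq$. I would handle the interchange with finite-alphabet smoothness, and settle the convention by checking consistency with the diagonal identity $\qdens(\eta)=\Bivq(\eta,\eta)$ from \Cref{LemExactKL}.

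\textbf{Step 2: derive the sandwich.} Set $\xi=\logodds(u)$. For $\eta\in[\logodds(a),\logodds(b)]$ we have $|\xi-\eta|\le \log c_u(a,b)$. Applying \Cref{LemBivSandwich} pointwise gives
\begin{align*}
\tfrac{1}{2c_u}\,\Bivq(\eta,\eta)\le\Bivq(\eta,\xi)\le 2c_u\,\Bivq(\eta,\eta).
\end{align*}
Integrating over $\eta$ and using \eqref{EqnBivIGC} yields $\tfrac{1}{2c_u}\Ginfo\le\Dinfo_u\le 2c_u\Ginfo$. Rearranging these two inequalities gives \eqref{EqnDinfoIncrementEstimate}.

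\textbf{Step 3: derive the KL representation.} By Step 1,
\begin{align*}
\Dinfo_v(a,v) = \int_{\logodds(a)}^{\logodds(v)}\Bivq(\eta,\logodds(v))\,d\eta.
\end{align*}
Change variables to $\xi=\logodds(v)$, using $dv/(v(1-v))=d\xi$. This gives
\begin{align*}
\int_{\logodds(a)}^{\logodds(b)}\int_{\logodds(a)}^{\xi}\Bivq(\eta,\xi)\,d\eta\,d\xi.
\end{align*}
By Fubini, this equals the region $\{\logodds(a)\le\eta\le\xi\le\logodds(b)\}$ integral in \eqref{EqnBivKL}, and hence equals $\DenKL(a,b)$. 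Fubini applies because $\Bivq\ge 0$, which follows from the lower sandwich bound and nonnegativity of $\qdens$.
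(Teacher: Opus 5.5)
Your proposal is correct and follows the paper's proof essentially step for step. The paper simply asserts $\Dinfo_u(a,b)=\int_{\logodds(a)}^{\logodds(b)}\Bivq(\eta,\xi)\,d\eta$ ``by the definitions''; your Step~1 supplies the justification already implicit in~\eqref{EqnPostIncrement}, and after that the pointwise use of \Cref{LemBivSandwich}, the Fubini swap and the change of variables match the paper's argument. One small caution is that the diagonal identity $\qdens(\eta)=\Bivq(\eta,\eta)$ cannot settle the slot convention, because it does not see the ordering; the definition~\eqref{EqnDefnBivQdens} settles it, and the ordering you wrote (with $\xi=\logodds(u)$ in the coordinate-$i$ slot) is exactly the one for which \Cref{LemBivSandwich} compares $\Bivq(\eta,\xi)$ with $\Bivq(\eta,\eta)$.
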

}
\noindent See~\Cref{SecProofPropDataSingle} for the proof.

In terms of the sandwich
relation~\eqref{EqnDinfoIncrementEstimate}, a particularly convenient
choice of $u$ is the log-odds midpoint $u_\star$, defined by the
relation $\logodds(u_\star) = \frac{\logodds(a) + \logodds(b)}{2}$.
This choice ensures that
\begin{align}
  \label{EqnDinfoMidpointEstimate}
  \frac{1}{2} \sqrt{\frac{\odds(a)}{\odds(b)}} \Dinfo_{u_\star}(a, b)
  & \leq \Ginfo(a, b) \leq 2 \sqrt{\frac{\odds(b)}{\odds(a)}}
  \Dinfo_{u_\star}(a, b),
\end{align}
so that we have the natural factor-two sandwich.

In summary, \Cref{PropDataSingle} provides a population-level
characterization of both the KL error and the \igc increment as an
expectation involving the \igc anchor functions $\Dinfo_u$.  Thus, if
we are given a collection of clean samples
$\{\Zsam{\ell}\}_{\ell=1}^m$ from $\ProbZ$, then we can use them to
generate a collection of noisy trajectories on a discretized grid of
squared-reliability times.  Plugging these samples into the
representation~\eqref{EqnDefnAnchorIGC} then yields an unbiased Monte
Carlo estimate.

In our previous work~\cite{Wai_UGC,Wai_DGC}, we proposed and analyzed
a tail-robust estimator, based on only a moment bound and a form of
truncation.  We can adapt this estimator to the current setting, and
use it to guarantee that, for a given sample size $m$, we can generate
data-dependent quantities $\Ghat_\bind$ and an upper confidence
function $\rhat_{\bind, m}(\delta)$ such that, for any $\delta \in
(0, 1)$, we have
\begin{align}
\label{EqnDefnGhat}  
  \Ginfo_\bind & \leq \Ghat_\bind + \rhat_{\bind, m}(\delta) \qquad
  \mbox{with probability at least $1 - \delta$.}
\end{align}
Note that the confidence bound $\rhat_{\bind, m}$ depends on the block
$\bind$ and the number $m$ of clean samples $\Zvar$, and under mild
moment conditions, for a fixed $\delta \in (0,1)$, we have
$\rhat_{\bind, m}(\delta) = \order(m^{-1/2})$ as $m$ grows.  This
guarantee is a direct application of our previous results, so that we
omit the details here.


\subsection{Data-certified multi-block guarantee}
\label{SecDataCert}

We are now set up to combine~\Cref{PropIGCTrackMulti}
with~\Cref{PropDataSingle} so as to obtain an end-to-end guarantee for
a multi-block \prd sampler.  Fix a partition $\Partition$ and a
failure probability $\eta \in (0, 1)$.  Compute the estimator with
failure probability $\delta \defn \eta/\Btot$ for each block.  We are
thus given data-dependent quantities $\Ghat_\bind$ and tolerances
$\rhat_\bind \equiv \rhat_{\bind,m}(\eta/\Btot)$ such that
\begin{subequations}
\begin{align}
  \label{EqnIGCConfidence}
  \Ginfo_\bind \leq \Ghat_\bind + \rhat_\bind \quad \mbox{for all
    $\bind = 0, \ldots, \Btot - 1$, with probability at least $1 -
    \eta$.}
\end{align}
Define the certified partition complexity
\begin{align}
  \PartCompHat(\Partition) & \defn \left(
    \sum_{\bind = 0}^{\Btot - 1}
    \sqrt{\Sinfo_\bind \big( \Ghat_\bind + \rhat_\bind \big)}
  \right)^2.
\end{align}
The certified odds-multiplier parameters are
\begin{align}
  \label{EqnIGCCertifiedRho}
  \rhohat_\bind & \defn \min\left\{ 1,\, \frac{4
    \sqrt{\PartCompHat(\Partition)}}{N} \sqrt{\frac{\Sinfo_\bind}{
      \Ghat_\bind + \rhat_\bind}} \right\}, \qquad \bind = 0, \ldots,
  \Btot - 1,
\end{align}
\end{subequations}
Thus, within block $\bind$, the maximum certified odds multiplier is
$1 + \rhohat_\bind$, equivalently the log-odds spacing is at most
$\log(1 + \rhohat_\bind)$.
\mygraybox{
\begin{theorem}[Data-certified multi-block guarantee]
  \label{ThmIGCCertifiedMulti}
Given an iteration budget $N \geq 2 \big \{ \Btot + 2 \big(
\logodds(\newrfinal) - \logodds(\newrinit) \big) \big\}$ and the
simultaneous confidence event~\eqref{EqnIGCConfidence}, the
$\Btot$-block sampler with maximum within-block odds multipliers $1 +
\rhohat_\bind$ determined by the parameters~\eqref{EqnIGCCertifiedRho}
has output $\Zhat$ such that
\begin{align}
  \label{EqnIGCCertifiedKL}
  \KL(\Prob_Z \| \Prob_{\Zhat}) & \leq \frac{8
    \PartCompHat(\Partition)}{N} + \BOUNDARY,
\end{align}
with probability at least $1 - \eta$.
\end{theorem}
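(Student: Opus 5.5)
On the confidence event~\eqref{EqnIGCConfidence}, I would rerun the argument behind \Cref{PropIGCTrackMulti}, with the true masses $\Ginfo_\bind$ replaced by the certified upper bounds $\widetilde{\Ginfo}_\bind \defn \Ghat_\bind + \rhat_\bind \geq \Ginfo_\bind$. The key point is that the allocation~\eqref{EqnIGCCertifiedRho} is exactly the allocation~\eqref{EqnIGCTrackMultiRho} computed with $\widetilde{\Ginfo}_\bind$ in place of $\Ginfo_\bind$. So the round count, which does not involve the true masses at all, transfers verbatim. The bound~\eqref{EqnIGCMultiUpper} is monotone in each $\Ginfo_\bind$, so we may replace $\Ginfo_\bind$ by $\widetilde{\Ginfo}_\bind$ there.

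\textbf{Step 1: round count.} In block $\bind$, use $N_\bind = \lceil \Sinfo_\bind / \log(1+\rhohat_\bind)\rceil$ equal log-odds steps. Since $\log(1+\rho)\geq \rho/2$ for $\rho\in[0,1]$, we get $N_\bind \leq 1 + 2\Sinfo_\bind/\rhohat_\bind$.

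Split into two cases according to which term attains the minimum in~\eqref{EqnIGCCertifiedRho}. If $\rhohat_\bind = 1$, the contribution is at most $2\Sinfo_\bind$. Otherwise,
\[
\frac{2\Sinfo_\bind}{\rhohat_\bind} = \frac{N \sqrt{\Sinfo_\bind \widetilde{\Ginfo}_\bind}}{2\sqrt{\PartCompHat(\Partition)}}.
\]
Summing over blocks gives
\[
\sum_\bind N_\bind \leq \Btot + 2(\logodds(\newrfinal)-\logodds(\newrinit)) + \frac{N}{2} \leq N,
\]
where the last inequality uses the assumed budget condition.

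\textbf{Step 2: KL bound.} Apply \Cref{ThmMaster} blockwise. This gives the form~\eqref{EqnIGCMultiUpper}, now with per-step odds multiplier at most $1+\rhohat_\bind$. Because each step's factor satisfies $\odds(r_{j+1})/\odds(r_j) - 1 \leq \rhohat_\bind$ and $\Ginfo$ is additive, block $\bind$ contributes at most
\[
2\rhohat_\bind \Ginfo_\bind \leq 2\rhohat_\bind \widetilde{\Ginfo}_\bind \leq \frac{8\sqrt{\PartCompHat}}{N}\sqrt{\Sinfo_\bind\widetilde{\Ginfo}_\bind}.
\]
The first inequality holds on the confidence event, and the second uses $\rhohat_\bind \leq$ the second term in the minimum. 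Summing over $\bind$ gives
\[
\frac{8\sqrt{\PartCompHat}}{N}\sum_\bind \sqrt{\Sinfo_\bind\widetilde{\Ginfo}_\bind} = \frac{8\PartCompHat(\Partition)}{N},
\]
and adding the boundary terms yields~\eqref{EqnIGCCertifiedKL}.

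\textbf{Step 3: probability.} Assuming the block estimators satisfy~\eqref{EqnDefnGhat} at level $\eta/\Btot$, a union bound over the $\Btot$ blocks gives the confidence event with probability at least $1-\eta$. Since Steps 1 and 2 are deterministic on that event, the theorem follows.

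\textbf{Main obstacle.} The delicate point is Step 2: it must use the per-step multiplier bound $\rhohat_\bind$, not the cruder $e^{\Sinfo_\bind/N_\bind}-1$, and the blocks must be stitched to the grid endpoints so that \Cref{ThmMaster} applies to one global grid. I would handle this by placing the block boundaries $\block_\bind$ as grid points, which makes the sum in~\eqref{EqnMaster} decompose exactly by blocks.
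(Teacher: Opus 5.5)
Your proposal is correct and follows the paper's (deferred) argument. On the confidence event, you rerun the proof of \Cref{PropIGCTrackMulti} with the certified upper bounds $\Ghat_\bind + \rhat_\bind \geq \Ginfo_\bind$ in place of the true masses $\Ginfo_\bind$. This works because the round count never involves the true masses and the bound~\eqref{EqnIGCMultiUpper} is monotone in each $\Ginfo_\bind$, and your Steps 1--3 check out. Your ``main obstacle'' is not really an obstacle: with $N_\bind$ equal log-odds steps, the per-step factor is exactly $e^{\Sinfo_\bind/N_\bind} - 1 \leq \rhohat_\bind$ by construction, so~\eqref{EqnIGCMultiUpper} already yields the same blockwise bound.
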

}
\noindent The proof follows by the same argument as
in~\cite{Wai_UGC}, with suitable replacements.


\subsection{The \igc density and optimal Euler discretizations}
\label{SecFine}

Finally, we can pass from finitely many blocks to the fine-partition
limit.  From the partition-refinement inequality~\eqref{EqnIGCCauchy},
by a sequence of progressively refined $\Btot$-block partitions, we
can approach the \emph{fine-partition limit}
\begin{align}
  \label{EqnDefnIGCFinePart}
  \PartHfine(\IntStar) & \defn \inf_{\Partition}
  \PartComp(\Partition),
\end{align}
where the infimum is over all finite partitions of $\IntStar$.  The
next result identifies this quantity with the square-root integral
introduced in \eqref{EqnDefnFinePart}, and shows that the same
quantity governs the sharp leading-order Euler error.

\mygraybox{
\begin{theorem}[Fine-partition limit and Euler optimality]
\label{ThmKL}
Assume that the mixed partial derivatives appearing
in~\Cref{LemExactKL} are continuous on the compact log-odds region
associated with $\IntStar$, and that $\qdens$ is strictly positive on
$[-\Elld, \Elld]$.  Then we have the equivalence
\begin{subequations}
\begin{align}
  \label{EqnIGCFinePartIdentity}
  \PartHfine(\IntStar) & = \left( \int_{-\Elld}^{\Elld}
  \sqrt{\qdens(\lam)} \, d\lam \right)^2 \equiv \FP.
\end{align}
Moreover, for the log-odds-uniform grid~\eqref{EqnSingleBlockGrid},
the sum of exact KL one-step errors satisfies
\begin{align}
  \label{EqnIGCUniformEuler}
  \sum_{j = 0}^{N - 1} \DenKL(r_j, r_{j + 1}) & = \frac{\SB}{2 N} +
  o(N^{-1}),
\end{align}
Finally, the optimal $N$-step discretization satisfies
\begin{align}
  \label{EqnIGCFineEulerOptimality}
  \inf_{\rstar = r_0 < r_1 < \cdots < r_N = 1 - \rstar} \sum_{j =
    0}^{N - 1} \DenKL(r_j, r_{j + 1}) & = \frac{\FP}{2 N} + o(N^{-1}).
\end{align}
\end{subequations}
\end{theorem}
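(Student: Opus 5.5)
The plan is to work throughout in log-\SR-odds coordinates and build everything on the exact representation~\eqref{EqnBivKL} of~\Cref{PropBivIntegral}. For a step $[a,b]$ with $\alpha = \logodds(a)$, $\beta = \logodds(b)$ and $h = \beta - \alpha$, we have
\begin{align*}
\DenKL(a,b) = \int_\alpha^\beta \int_\eta^\beta \Bivq(\eta,\xi)\, d\xi\, d\eta .
\end{align*}
By assumption the mixed partials defining $\Bivq$ are continuous on the compact region over $[-\Elld,\Elld]^2$, so $\Bivq$ is uniformly continuous there. Let $\omega(h)$ denote its modulus of continuity. Since $\Bivq(\eta,\eta) = \qdens(\eta)$, we can replace $\Bivq(\eta,\xi)$ by $\qdens(\eta)$ at cost $\omega(h)$ per point and integrate over the triangle of area $h^2/2$. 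This gives the local expansion
\begin{align*}
\DenKL(a,b) = \tfrac{h}{2}\int_\alpha^\beta \qdens(\eta)\,d\eta + O\big(h^2\omega(h)\big) = \tfrac{h^2}{2}\,\qdens(\alpha) + o(h^2),
\end{align*}
uniformly in the location of the step. This is the Euler-type estimate, and it is the only genuinely analytic input.

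For~\eqref{EqnIGCUniformEuler}, I take the uniform grid with $h = 2\Elld/N$ and sum the local expansion. The main terms give $\frac{h}{2}\int_{-\Elld}^{\Elld}\qdens = \frac{\Elld}{N}\int\qdens = \frac{\SB}{2N}$. The errors add up to $N\cdot O(h^2\omega(h)) = o(N^{-1})$.

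For the identity~\eqref{EqnIGCFinePartIdentity}, I view $\PartComp(\Partition) = (\sum_k \sqrt{\Sinfo_k \Ginfo_k})^2$ as a Riemann-type sum.
\begin{itemize}
\item \emph{Lower bound.} Cauchy--Schwarz on each block gives $\int_{\text{block}} \sqrt{\qdens} \le \sqrt{\Sinfo_k \Ginfo_k}$, so every partition satisfies $\PartComp(\Partition) \ge \FP$.
\item \emph{Upper bound.} For fine partitions, continuity of $\qdens$ gives $\sqrt{\Sinfo_k\Ginfo_k} = \Sinfo_k\sqrt{\qdens(\eta_k)} + o(\Sinfo_k)$. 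Hence $\PartComp(\Partition) \to \FP$ as the mesh tends to zero, and the infimum equals $\FP$.
\end{itemize}

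For~\eqref{EqnIGCFineEulerOptimality}, I prove matching bounds.
\begin{itemize}
\item \emph{Achievability.} Use the grid equidistributing $\sqrt{\qdens}$: choose $\lambda_j$ with $\int_{\lambda_j}^{\lambda_{j+1}}\sqrt{\qdens} = S/N$, where $S = \sqrt{\FP}$. Strict positivity and continuity of $\qdens$ make the step sizes $h_j \asymp 1/N$ uniformly. The local expansion then gives
\begin{align*}
\sum_j \tfrac{h_j^2}{2}\qdens(\lambda_j) = \sum_j \tfrac12 (S/N)^2 (1+o(1)) = \frac{\FP}{2N} + o(N^{-1}).
\end{align*}
\item \emph{Lower bound.} Take arbitrary near-optimal grids.
\begin{itemize}
\item First, show that an optimal sequence must have maximal mesh $\to 0$. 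A step of log-odds length bounded below contributes at least $c\,\min\qdens > 0$ via~\eqref{EqnBivKL} and the positivity of $\Bivq$ near the diagonal (from~\Cref{LemBivSandwich}). That is not $O(1/N)$, so such steps are excluded.
\item On fine grids, I use the local expansion to bound the error sum below by $\sum_j \frac{h_j^2}{2}\qdens(\lambda_j)(1-o(1))$. Cauchy--Schwarz with $\sum_j 1 = N$ gives
\begin{align*}
\sum_j h_j^2 \qdens(\lambda_j) \;\ge\; \frac{1}{N}\Big(\sum_j h_j\sqrt{\qdens(\lambda_j)}\Big)^2 \;\to\; \frac{\FP}{N}.
\end{align*}
\end{itemize}
\end{itemize}

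The main obstacle is making the $o(h^2)$ remainder uniform in the lower-bound step, where grids are arbitrary and may mix very small and moderately small steps. I would handle it by writing the exact block value as $\frac{h_j^2}{2}\bar q_j$, where $\bar q_j$ is an average of $\Bivq$ over the triangle. Positivity and uniform continuity then give $\bar q_j \ge (1-\epsilon)\qdens(\lambda_j)$ whenever $h_j \le \delta(\epsilon)$. The mesh argument above rules out larger steps for near-optimal grids, after which one lets $\epsilon \to 0$.
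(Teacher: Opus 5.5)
Your proposal is correct and is essentially the argument the paper invokes by reference to \cite{Wai_UGC}: the triangle representation of \Cref{LemExactKL}, with the continuity hypothesis upgrading its pointwise expansion $\DenKL \approx \tfrac{1}{2}\qdens\,\Delta^2$ to a uniform one, followed by Riemann sums and Cauchy--Schwarz. Your equidistributing grid is just the continuum limit of block-uniform allocations $N_\bind \propto \sqrt{\Sinfo_\bind \Ginfo_\bind}$ over refined partitions. Your use of \Cref{LemBivSandwich}(i) to rule out coarse steps in the lower bound supplies a detail the paper leaves implicit. The only slip is cosmetic: integrating $\qdens(\eta)$ over the triangle gives $\int_\alpha^\beta (\beta - \eta)\,\qdens(\eta)\,d\eta$ rather than $\tfrac{h}{2}\int_\alpha^\beta \qdens(\eta)\,d\eta$, but the two differ by $O(h^2\omega(h))$, so nothing changes.
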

}
\noindent
The proof is the same fine-partition argument as in our past work on
masking~\cite{Wai_UGC}.



\section{Proofs}
\label{SecProofs}

We now turn to the proofs of our main results.  The proof
of~\Cref{ThmMaster} is given in~\Cref{SecProofThmMaster}, whereas
\Cref{SecProofPropDataSingle} contains the proof
of~\Cref{PropDataSingle}.

\subsection{Proof of~\Cref{ThmMaster}}
\label{SecProofThmMaster}

Recall that $\{\Xvar_t, t \in [0,1] \}$ corresponds to the denoising
process~\eqref{EqnGeneralRefTransition}, which transitions from the
product-reference distribution $\Xvar_0 \sim \bigotimes_{i =
  1}^{\usedim} \refdist_i$ to $\Xvar_1 \sim \ProbZ$.

For any pair $0 < p < q < 1$, let $\Kexact{p}{q}$ denote the
transition kernel of this denoising process in moving from $X_p$ to
$X_q$.  On the other hand, our denoising sampler is based on the
update~\eqref{EqnPRDUpdate} and we let $\Ktilde{p}{q}$ denote the
associated transition kernel that moves from $\Xhat_p$ to $\Xhat_q$.
We introduce the shorthand notation
\begin{align}
\label{EqnDefnUnmaskKL}  
\DenKL(p, q) & \defn \Exs_{X_p}\left[ \KL\left( \Kexact{p}{q}(
  \mathord\cdot \mid X_p ) \,\middle\|\, \Ktilde{p}{q}( \mathord\cdot
  \mid X_p ) \right) \right].
\end{align}
corresponding to the averaged Kullback--Leibler (KL) discrepancy
between the two transition kernels over the interval $[p,q]$.  Here
the expectation is taken over the marginal distribution of the
variable $X_p$ along the denoising path.

By the data-processing inequality combined with the chain rule for KL
divergence, we have
\begin{align}
\label{EqnDataProcessing}
  \KL( \Prob_Z \| \Prob_{\Zhat}) & \leq \sum_{j = 0}^{N - 1}
  \DenKL(\time_j, \time_{j + 1}) + \BOUNDARY.
\end{align}
Thus, the central challenge is to bound the one-step KL errors
$\DenKL(\time_j, \time_{j + 1})$.  The following result provides the requisite
control.
\mygraybox{
\begin{lemma}
\label{LemGeoOneStep}
For any $0 < a < b < 1$, the one-step KL error satisfies the bound
\begin{align}
  \label{EqnGeoOneStep}
  \DenKL(a, b) & \leq 2 \left\{ \frac{\odds(b)}{\odds(a)} - 1 \right\}
  \Ginfo(a, b),
\end{align}
where $\odds(r) \defn r/(1 - r)$.
\end{lemma}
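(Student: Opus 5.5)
The plan is to derive the lemma from two earlier results: the exact representation~\eqref{EqnBivKL} of \Cref{PropBivIntegral} (proved in \Cref{LemExactKL}) and the upper half of the exponential sandwich in \Cref{LemBivSandwich}. Write $\alpha = \logodds(a)$ and $\beta = \logodds(b)$, so that $\odds(b)/\odds(a) = e^{\beta - \alpha}$. Starting from
\begin{align*}
\DenKL(a,b) = \int_\alpha^\beta \int_\eta^\beta \Bivq(\eta,\xi)\, d\xi\, d\eta,
\end{align*}
I will bound the inner integrand pointwise. Inequality~\eqref{EqnBivSandwich}(ii) gives $\Bivq(\eta,\xi) \leq 2 e^{\xi-\eta} \Bivq(\eta,\eta) = 2 e^{\xi-\eta}\qdens(\eta)$, using the diagonal identity $\qdens(\eta) = \Bivq(\eta,\eta)$.

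Integrating over $\xi \in [\eta,\beta]$ gives $2\{e^{\beta-\eta}-1\}\qdens(\eta)$. This is at most $2\{e^{\beta-\alpha}-1\}\qdens(\eta)$ for every $\eta \geq \alpha$, because $\qdens \geq 0$. The nonnegativity of $\qdens$ follows from the lower half of the sandwich, or from the fact that it is the diagonal of a kernel whose integrals are KL divergences. Integrating over $\eta$ then yields
\begin{align*}
\DenKL(a,b) \leq 2\{e^{\beta-\alpha}-1\}\int_\alpha^\beta \qdens(\eta)\,d\eta,
\end{align*}
and \eqref{EqnBivIGC} identifies the last integral with $\Ginfo(a,b)$. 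This is exactly~\eqref{EqnGeoOneStep}. A slightly sharper constant is available from $\int_\alpha^\beta(e^{\beta-\eta}-1)\qdens$, but it is not needed.

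The main obstacle is that the lemma sits inside the proof of \Cref{ThmMaster}, while \Cref{PropBivIntegral} is said to be proved via \Cref{LemExactKL} within that same proof. I would therefore first establish the exact representation directly, which is the genuinely nontrivial step. Condition on $X_a = x$. The defect $\KL(\Kexact{a}{b} \| \Ktilde{a}{b})$ is the conditional total correlation $\TC(X_b \mid X_a)$. Expand it coordinatewise as a sum of conditional mutual informations $\Info(X_{b,i}; X_{b,-i} \mid X_a)$, or alternatively as a telescoping sum in an inhomogeneous time vector $\tbold$. Then write each term as a double integral of mixed partials of $\Ifun_i$ along the rectangle $\{\eta \leq \xi\}$ in log-odds. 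This uses the Markov/semigroup structure~\eqref{EqnGeneralRefTransition}, which makes coordinates conditionally independent given $Z$, so that the second differences of $\Ifun_i$ are exactly these conditional informations.

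Once that identity and its diagonal specialization are in hand, the argument above is a two-line pointwise comparison. The remaining technical risk is that \Cref{LemBivSandwich}(ii) relies on the reverse DPI of \Cref{LemDPI}, which I take as given.
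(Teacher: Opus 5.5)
Your proposal is correct and follows the paper's proof: start from the exact representation in \Cref{LemExactKL}, bound $\Bivq(\eta,\xi) \leq 2 e^{\xi-\eta}\Bivq(\eta,\eta)$ via \Cref{LemBivSandwich}(ii), integrate in $\xi$, replace $e^{\beta-\eta}-1$ by $e^{\beta-\alpha}-1$ using $\Bivq(\eta,\eta) \geq 0$ (this follows most cleanly from equation~\eqref{EqnMixed} together with $\mathsf J_\xi \geq 0$, not from the lower half of the sandwich alone), and identify $\int_\alpha^\beta \Bivq(\eta,\eta)\,d\eta$ with $\Ginfo(a,b)$. The circularity you worry about does not arise, since \Cref{LemExactKL} is proved independently of this lemma via the identity $\TC(U \mid V) = \{\TC(U)-\TC(V)\} - \sum_i \{\TC(U_i,V_{-i}) - \TC(V)\}$ with $U = \Xvar_b$ and $V = \Xvar_a$; moreover, the leave-one-out expansion $\sum_i \Info(\Xvar_{b,i};\Xvar_{b,-i}\mid \Xvar_a)$ you suggest would overcount, because that sum equals the conditional $\TC$ plus the conditional $\DTC$.
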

}
\noindent See~\Cref{SecProofLemGeoOneStep} for the proof of this
claim.

To complete the proof of~\Cref{ThmMaster}, we
apply~\Cref{LemGeoOneStep} to each of the intervals $[\time_j, \time_{j + 1}]$.
Collecting the terms yields the claim~\eqref{EqnMaster}.


\subsubsection{Proof of~\Cref{LemGeoOneStep}}
\label{SecProofLemGeoOneStep}

This proof is based on two lemmas, each of which is of independent
interest in its own right.  The first is the previously
stated~\Cref{LemBivSandwich}, which guarantees that
\begin{align}
  \label{EqnBivSandwichTwo}
  \frac{1}{2} e^{-|\xi - \eta|} \Bivq(\eta, \eta) &
  \stackrel{(i)}{\leq} \Bivq(\eta, \xi) \; \stackrel{(ii)}{\leq} 2
  e^{|\xi - \eta|} \Bivq(\eta, \eta).
\end{align}
We used the lower bound (i) previously in comparing the coarse \igc
mass to the (dual) total correlations.  In this proof, the relevant
result is the upper bound (ii).  See~\Cref{SecProofLemBivSandwich} for
the proof of~\Cref{LemBivSandwich}.

The second piece in the proof of~\Cref{LemGeoOneStep} is an exact
representation of the one-step KL error in terms of the bivariate \igc
kernel $\Bivq$.  This result is also of independent interest.
\mygraybox{
  \begin{lemma}[Exact representation of the one-step KL divergence]
\label{LemExactKL}
For any pair $0 < a < b < 1$, the one-step KL
divergence~\eqref{EqnDefnExactKL} has the exact representation
\begin{subequations}
  \begin{align}
\label{EqnExactKL}    
\DenKL(a, b) & = \int_{\lam(a)}^{\lam(b)} \int_\eta^{\lam(b)}
\Bivq(\eta, \xi) \, d\xi \, d\eta \qquad \mbox{where $\lam(r) =
  \log(r/(1-r))$.}
\end{align}
Moreover, assuming that $\Bivq$ is continuous, we have the pointwise
expansion
\begin{align}
\label{EqnPointwiseKL}
\DenKL(a, b) & = \frac{1}{2} \Bivq(\lam(a), \lam(a)) \; \Delta^2 +
o(\Delta^2) \qquad \mbox{ where $\Delta \defn \lam(b) - \lam(a)$.}
\end{align}
Finally, we have the equivalence
\begin{align}
\label{EqnQdensDiagonal}  
\qdens(\lam) & = \Bivq(\lam, \lam) \qquad \mbox{for all $\lam \in
  \real$.}
\end{align}
\end{subequations}
\end{lemma}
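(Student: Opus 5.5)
The plan is to write the one-step error as a total correlation of the exact reverse bridge and then express that total correlation through the cut informations $\Ifun_i$.

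\textbf{Step 1: reduce to a total correlation.} Since $\Ktilde{a}{b}(\cdot\mid x)$ is the product of the one-coordinate marginals of $\Kexact{a}{b}(\cdot\mid x)$, the KL projection identity gives
\begin{align*}
\DenKL(a,b) = \TC(X_b \mid X_a) = \sum_{i} \Ent(X_{b,i}\mid X_a) - \Ent(X_b \mid X_a).
\end{align*}

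\textbf{Step 2: rewrite as a sum of cut informations.} By the Markov structure, conditionally on $X_b$ the coordinates of $X_a$ are independent, and $X_{a,i}$ depends only on $X_{b,i}$. Hence $\Ent(X_a\mid X_b)=\sum_i \Ent(X_{a,i}\mid X_{b,i})$. Substituting the entropy identities $\Ent(X_b\mid X_a)=\Ent(X_b)+\Ent(X_a\mid X_b)-\Ent(X_a)$ and $\Ent(X_{b,i}\mid X_a)=\Ent(X_{b,i})+\Ent(X_a\mid X_{b,i})-\Ent(X_a)$, I expect the terms to recombine as
\begin{align*}
\TC(X_b\mid X_a) = \sum_i \big[\Info(X_{b,i};X_{b,-i}) - \Info(X_{b,i};X_{a,-i})\big] - \big[\TC(X_b)-\TC(X_a)\big].
\end{align*}
I will verify this identity by direct entropy bookkeeping. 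In particular, $\Ent(X_a\mid X_{b,i})$ splits as the $i$th coordinate term plus $\Ent(X_{a,-i}\mid X_{b,i})$, because $X_{a,i}$ and $X_{a,-i}$ are conditionally independent given $X_{b,i}$ in the relevant sense.

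The same decomposition applies to the inhomogeneous vector $X_{\tbold}$. This yields $\TC(X_r)$ expressed through the diagonal values $\Ifun_i(\lam,\lam)$ and $\TC(X_{-i})$ terms. The cleaner route is to note that, in log-odds coordinates $\alpha=\lam(a)$, $\beta=\lam(b)$,
\begin{align*}
\DenKL(a,b)=F(\beta,\beta)-F(\alpha,\beta)-\big[G(\beta)-G(\alpha)\big],
\end{align*}
where $F(\xi,\eta)=\sum_i \Ifun_i(\xi,\eta)$ (first argument for coordinate $i$) and $G(\lam)=\TC(X_{r(\lam)})$. The key sub-claim is that $G'(\eta)=\sum_i[\partial_1\Ifun_i+\partial_2\Ifun_i](\eta,\eta)/2$, or more precisely that $G(\eta)=\int^\eta\sum_i\partial_1\Ifun_i(\zeta,\zeta)\,d\zeta$ up to a constant. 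This would follow from the chain rule $\frac{d}{d\eta}\TC(X_\eta)=\sum_i \frac{\partial}{\partial t_i}\TC(X_\tbold)$ together with $\frac{\partial}{\partial t_i}\TC(X_\tbold)=\frac{\partial}{\partial t_i}\Info(X_{t_i,i};X_{\tbold,-i})$.

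\textbf{Step 3: the double integral.} With these ingredients, $\DenKL(a,b)=\int_\alpha^\beta \big[\sum_i\partial_1\Ifun_i(\zeta,\beta)-\sum_i\partial_1\Ifun_i(\zeta,\zeta)\big]\,d\zeta$. Writing the bracket as $\int_\zeta^\beta \Bivq(\zeta,\xi)\,d\xi$ gives \eqref{EqnExactKL}. Continuity of $\Bivq$ then yields \eqref{EqnPointwiseKL}, since the triangle has area $\Delta^2/2$.

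\textbf{Step 4: the diagonal identity.} For \eqref{EqnQdensDiagonal}, I will write $\Info(Z;X_\tbold)=\Ent(X_\tbold)-\Ent(X_\tbold\mid Z)$. The conditional term is additive across coordinates, so it has no mixed partials. Mixed partials of $-\Ent(X_\tbold)$ in $t_i,t_j$ agree with those of $\TC(X_\tbold)$, and by the previous derivative formula these are $\partial_{t_j}\partial_{t_i}\Info(X_{t_i,i};X_{\tbold,-i})$. Summing over $j\neq i$ gives the $\partial_2$-derivative along the diagonal of the second argument. Converting from $t$ to $\lam$ uses $dr/d\lam=r(1-r)$, which produces the factor $r^2(1-r)^2$ in $\qdens$.

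\textbf{Main obstacle.} I expect the main difficulty to be the bookkeeping in the combined Steps 2 and 4, and in particular the identity $\partial_{t_i}\TC(X_\tbold)=\partial_{t_i}\Info(X_{t_i,i};X_{\tbold,-i})$, including the sign conventions relating $-\partial^2\Info(Z;X_\tbold)$ to $\partial_1\partial_2\Ifun_i$. The step that needs care there is showing that the marginal entropy $\Ent(X_{t_i,i})$ and the $\Ent(X_{\tbold,-i})$ term drop out of the derivative in $t_i$.
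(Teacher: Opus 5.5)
Your Steps 1 and 4 are sound. Step 4 is the paper's diagonal argument written in entropy form. Your refined sub-claim $G'(\lambda)=\sum_i\partial_1\Ifun_i(\lambda,\lambda)$ is also right; your first guess with $(\partial_1+\partial_2)/2$ would instead give the path derivative of $\tfrac12(\TC+\DTC)$.

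The gap is the decomposition in Step 2: neither of your two displayed identities holds in general once $\usedim\geq 3$. A quick test is to let $a\downarrow 0$, so that $X_a$ becomes reference noise independent of $X_b$. Then $\DenKL(a,b)=\TC(X_b\mid X_a)\to\TC(X_b)$. Both of your right-hand sides instead tend to $\sum_i\Info(X_{b,i};X_{b,-i})-\TC(X_b)=\DTC(X_b)$. For $Z$ equal to $\usedim$ copies of one fair bit and $b$ near $1$, these are roughly $(\usedim-1)\log 2$ versus $\log 2$.

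Doing the bookkeeping exactly gives the following. Your first display equals $\TC(X_b\mid X_a)+[\DTC(X_b)-\TC(X_b)]-[\DTC(X_a)-\TC(X_a)]$. Your ``cleaner route'' expression $F(\beta,\beta)-F(\alpha,\beta)-[G(\beta)-G(\alpha)]$ equals the conditional dual total correlation $\Ent(X_b\mid X_a)-\sum_i\Ent(X_{b,i}\mid X_{b,-i},X_a)$.

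Step 3 then hides this through an argument transposition. Since $\Bivq(\eta,\xi)=\sum_i\partial_1\partial_2\Ifun_i(\xi,\eta)$, your bracket equals $\int_\zeta^\beta\Bivq(\xi,\zeta)\,d\xi$, not $\int_\zeta^\beta\Bivq(\zeta,\xi)\,d\xi$. Done correctly, your computation integrates $\Bivq$ over the transposed triangle, which is the $\DTC$ half as in~\eqref{EqnBivTC}. And $\Bivq$ is not symmetric, since otherwise $\TC(Z)=\DTC(Z)$ would hold for every target. The error does not show up in the expansion~\eqref{EqnPointwiseKL}, because both triangles share the diagonal value $\Bivq(\alpha,\alpha)$.

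The fix is to carry out the bookkeeping you describe rather than guess its outcome. Using $\Ent(X_a\mid X_b)=\sum_i\Ent(X_{a,i}\mid X_{b,i})$ and $X_{a,i}\perp X_{a,-i}\mid X_{b,i}$, one gets $\TC(X_b\mid X_a)=\sum_i\Ent(X_{b,i},X_{a,-i})-(\usedim-1)\Ent(X_a)-\Ent(X_b)$. This rearranges to
\begin{align*}
\DenKL(a,b) = \big[G(\beta)-G(\alpha)\big]-\sum_{i=1}^{\usedim}\big[\Ifun_i(\beta,\alpha)-\Ifun_i(\alpha,\alpha)\big].
\end{align*}
Here the rest of the vector stays frozen at the noisier time $\alpha$ while coordinate $i$ moves, and the $\TC$ increment enters with a plus sign. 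Combined with $G'(\xi)=\sum_i\partial_1\Ifun_i(\xi,\xi)$, this gives
\begin{align*}
\DenKL(a,b)=\int_\alpha^\beta\sum_i\big[\partial_1\Ifun_i(\xi,\xi)-\partial_1\Ifun_i(\xi,\alpha)\big]\,d\xi=\int_\alpha^\beta\int_\alpha^\xi\Bivq(\eta,\xi)\,d\eta\,d\xi.
\end{align*}
Swapping the order of integration yields~\eqref{EqnExactKL}; this is the paper's route. Your Step 3 argument for the pointwise expansion then goes through unchanged.
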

}
\noindent See~\Cref{SecProofLemExactKL} for the proof.

We note that~\Cref{LemGeoOneStep} is a straightforward consequence of
the upper inequality (ii) in equation~\eqref{EqnBivSandwichTwo}, and
the KL representation in~\Cref{LemExactKL}.  In particular, we 
have
\begin{align*}
  \DenKL(a, b) \; \stackrel{(i)}{=} \; \int_{\lam(a)}^{\lam(b)}
  \int_\eta^{\lam(b)} \Bivq(\eta, \xi) \, d\xi \, d\eta &
  \stackrel{(ii)}{\leq} 2 \int_{\lam(a)}^{\lam(b)} \int_\eta^{\lam(b)}
  e^{\xi - \eta} \Bivq(\eta, \eta) \, d\xi \, d\eta \\
  & = 2 \int_{\lam(a)}^{\lam(b)} \big\{ e^{\lam(b) - \eta} - 1 \big\}
  \Bivq(\eta, \eta) \, d\eta \\
  & \stackrel{(iii)}{\leq} 2 \big\{ e^{\lam(b) - \lam(a)} - 1 \big\}
  \int_{\lam(a)}^{\lam(b)} \Bivq(\eta, \eta) \, d\eta \\
  & = 2 \left\{ \frac{\odds(b)}{\odds(a)} - 1 \right\} \Ginfo(a, b),
\end{align*}
where step (i) uses~\Cref{LemExactKL}; step (ii) uses
equation~\eqref{EqnBivSandwichTwo}; and step (iii) follows from the
lower bound $\eta \geq \lam(a)$ and the non-negativity of $\Bivq(\eta,
\eta)$.


\subsubsection{Proof of~\Cref{LemExactKL}}
\label{SecProofLemExactKL}

We split our proof into three parts, one for each claim.

\paragraph{Proof of the exact representation~\eqref{EqnExactKL}:}
Introducing the shorthand $\alpha \defn \lam(a)$ and $\beta \defn
\lam(b)$, we introduce the functions $T(\xi) \defn
\TC(\Xvar_{r(\xi)})$ and $A_i(\xi) \defn \TC\big( \Xvar_{r(\xi),i},
\Xvar_{a,-i} \big)$.  By the definition~\eqref{EqnDefnBivIfun} of
$\Ifun_i$, we have $A_i'(\xi) = \partial_1 \Ifun_i(\xi, \alpha)$, and
we prove in~\Cref{SecProofBivIntegral} that $T'(\xi) = \sum_{i =
  1}^{\usedim} \partial_1 \Ifun_i(\xi,\xi)$.

To verify the decomposition, writing $U \defn \Xvar_b$ and $V \defn
\Xvar_a$, coordinatewise independence of the channel gives
\begin{align*}
  \TC(U \mid V) = \sum_{i = 1}^{\usedim} \Ent(U_i \mid V) - \Ent(U
  \mid V) & = \sum_{i = 1}^{\usedim} \Ent(U_i, V_{-i}) - (\usedim - 1)
  \Ent(V) - \Ent(U) \\ & = \big\{ \TC(U) - \TC(V) \big\} - \sum_{i =
    1}^{\usedim} \big\{ \TC(U_i, V_{-i}) - \TC(V) \big\},
\end{align*}
where the second equality follows by cancellation of the terms
$\sum_{i = 1}^{\usedim} \Ent(V_i \mid U_i)$.  Combined with the
identity $\DenKL(a, b) = \TC(\Xvar_b \mid \Xvar_a)$, substituting the
definitions of $T$ and $A_i$ yields
\begin{align*}
  \DenKL(a, b) = \big\{ T(\beta) - T(\alpha) \big\} - \sum_{i =
    1}^{\usedim} \big\{ A_i(\beta) - A_i(\alpha) \big\}.
\end{align*}

Combining the ingredients, we find that
\begin{align*}
  \DenKL(a, b) = \sum_{i = 1}^{\usedim} \int_\alpha^\beta \left\{
  \partial_1 \Ifun_i(\xi, \xi) - \partial_1 \Ifun_i(\xi, \alpha)
  \right\} \, d\xi & \stackrel{(i)}{=} \sum_{i = 1}^{\usedim}
  \int_\alpha^\beta \int_\alpha^\xi \partial_1 \partial_2
  \Ifun_i(\xi,\eta) \, d\eta \, d\xi \\ & \stackrel{(ii)}{=}
  \int_\alpha^\beta \int_\alpha^\xi \Bivq(\eta, \xi) \, d\eta \, d\xi
  \\ & \stackrel{(iii)}{=} \int_\alpha^\beta \int_\eta^\beta
  \Bivq(\eta, \xi) \, d\xi \, d\eta,
\end{align*}
where step (i) follows from the fundamental theorem of calculus; step
(ii) follows from the definition of $\Bivq$; and step (iii) follows by
reversing the order of integration over the triangle $\alpha \leq \eta
\leq \xi \leq \beta$.

\paragraph{Proof of the pointwise expansion~\eqref{EqnPointwiseKL}:}

Introduce the shorthand $\alpha \defn \lam(a)$ and $\Delta \defn
\lam(b) - \lam(a)$.  By the continuity of $\Bivq$ at
$(\alpha,\alpha)$, we have $\sup_{\alpha \leq \eta \leq \xi \leq
  \alpha + \Delta} \left| \Bivq(\eta, \xi) - \Bivq(\alpha, \alpha) \right|
= o(1)$ as $\Delta \downarrow 0$.  Since the integration triangle has
area $\Delta^2/2$, it follows that
\begin{align*}
  \DenKL(a, b) & = \Bivq(\alpha, \alpha) \int_\alpha^{\alpha + \Delta}
  \int_\eta^{\alpha + \Delta} \, d\xi \, d\eta + o(\Delta^2) \; = \;
  \frac{1}{2} \Bivq(\alpha, \alpha) \Delta^2 + o(\Delta^2) \; = \;
  \frac{1}{2} \qdens(\lam(a)) \Delta^2 + o(\Delta^2),
\end{align*}
which proves the claim.

\paragraph{Proof of the equivalence~\eqref{EqnQdensDiagonal}:}

Define the mutual-information surface $F(\blam) \defn \Info\big(
\Zvar; \Xvar_{\rbold(\blam)} \big)$, where $r(\lam_k) \defn
\frac{e^{\lam_k}}{1 + e^{\lam_k}}$.  In terms of this function, we have
\begin{align}
  \label{EqnQdensLambdaHessian}
  \qdens(\lam) & = -\sum_{i \neq j} \left.  \frac{\partial^2 F(\blam)}
        {\partial \lam_i \partial \lam_j} \right|_{\blam = \lam
          \onevec}.
\end{align}
Now fix a coordinate $i$.  Since the coordinatewise refresh channels
are conditionally independent given $\Zvar$, we have
\begin{align*}
  \Ifun_i(\xi,\eta) & = \Info\big( \Zvar; \Xvar_{r(\xi),i} \big) +
  \Info\big( \Zvar; \Xvar_{r(\eta),-i} \big) - F\big( \xi \evec_i +
  \eta(\onevec-\evec_i) \big).
\end{align*}
The first two terms depend on only one of $(\xi,\eta)$, and therefore
\begin{align*}
  \partial_1 \partial_2 \Ifun_i(\xi,\eta) & = -\sum_{j \neq i} \left.
  \frac{\partial^2 F(\blam)} {\partial \lam_i \partial \lam_j}
  \right|_{\blam = \xi \evec_i + \eta(\onevec-\evec_i)}.
\end{align*}
Setting $\xi=\eta=\lam$ and summing over $i$ now gives
\begin{align*}
  \Bivq(\lam,\lam) & = \sum_{i = 1}^{\usedim} \partial_1 \partial_2
  \Ifun_i(\lam,\lam) \; = \; - \sum_{i \neq j} \left.
  \frac{\partial^2 F(\blam)} {\partial \lam_i \partial \lam_j}
  \right|_{\blam = \lam \onevec} = \qdens(\lam),
\end{align*}
where the last equality follows from
equation~\eqref{EqnQdensLambdaHessian}.

\subsubsection{Proof of~\Cref{LemBivSandwich}}
\label{SecProofLemBivSandwich}

Our proof makes use of an auxiliary result, one which exploits the
geometry of our noising path in an essential way.  Recalling that
$r(\lam) = \frac{e^{\lam}}{1 + e^{\lam}}$, we let $s_\lambda =
\sqrt{r(\lambda)}$ denote the ordinary reliability at level $\lambda$.
For any reference distribution $\refdist$ over $\Alphabet$, we define
the path
\begin{subequations}
    \begin{align}
\label{EqnPath}      
  \Pmap_\lam(p) & = s_\lambda p + (1 - s_\lambda) \refdist, \qquad
  \mbox{for $s_\lambda \in [0,1]$.}
\end{align}
This path describes the evolution from the reference distribution at time
$s_\lam = 0$ to its argument $p$ at time $s_\lam = 1$.  We then define
the $\lambda$-indexed family of KL divergences
\begin{align}
\label{EqnLamKL}
  \mathsf K_\lambda(p, q) & \defn \KL\left( \Pmap_\lam(p) \|
  \Pmap_\lam(q) \right),
\end{align}
as well as its derivative
\begin{align}
\label{EqnLamJfun}
\mathsf J_\lambda(p, q) & \defn \frac{d}{d \lambda} \mathsf
K_\lambda(p, q).
\end{align}
\end{subequations}
By the data-processing inequality, we have $\mathsf K_\alpha(p, q)
\leq \mathsf K_\beta(p, q)$ whenever $\alpha < \beta$, from which it
follows that $\mathsf J_\lam(p, q) \geq 0$ for all $\lambda \in \real$.

\mygraybox{
\begin{lemma}[Exponential control along the path]
  \label{LemDPI}
  Fix an arbitrary pair $p, q$ of probability distributions over
  $\Alphabet$.  Then for all $\alpha, \beta \in \real$, we have the
  sandwich relation
\begin{align}
  \label{EqnDPISandwich}
  \frac{1}{2} e^{-|\beta - \alpha|} \mathsf J_\alpha(p, q) & \leq
  \mathsf J_\beta(p, q) \leq 2 e^{|\beta - \alpha|} \mathsf
  J_\alpha(p, q).
\end{align}
\end{lemma}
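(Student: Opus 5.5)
The plan is to reduce \eqref{EqnDPISandwich} to two one-dimensional monotonicity statements in the ordinary reliability $s = s_\lam$, and then convert back to log-\SR-odds via the explicit Jacobian. I write $P_s \defn s p + (1-s)\refdist$ and $Q_s \defn s q + (1-s)\refdist$, and $f(s) \defn \KL(P_s \| Q_s)$, so that $\mathsf K_\lam(p,q) = f(s_\lam)$. Since $s_\lam^2 = r(\lam)$, we have $ds_\lam/d\lam = \tfrac{1}{2} s_\lam (1 - r(\lam))$. Writing $g(s) \defn s f'(s)$, this gives
\begin{align*}
\mathsf J_\lam(p,q) = \tfrac{1}{2}\big(1 - r(\lam)\big)\, g(s_\lam).
\end{align*}
By the symmetry of the claim in $(\alpha,\beta)$, it suffices to treat $\alpha < \beta$. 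In that case I must show two things: the lower bound $\mathsf J_\beta \geq \tfrac12 e^{-(\beta-\alpha)} \mathsf J_\alpha$, and the upper bound $\mathsf J_\beta \leq 2 e^{\beta-\alpha}\mathsf J_\alpha$. The remaining two inequalities are the same statements with the roles of $\alpha$ and $\beta$ exchanged.

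\emph{Lower bound.} Since $(P_s, Q_s)$ is affine in $s$ and KL is jointly convex, $f$ is convex on $[0,1]$. Hence $f'$ is nondecreasing. Moreover $f' \geq 0$, by the data-processing monotonicity already noted before the lemma. It follows that $g(s) = s f'(s)$ is nondecreasing, so $g(s_\beta) \geq g(s_\alpha)$. For the Jacobian factor, $1 - r(\lam) = (1+e^\lam)^{-1}$, and $(1+e^\alpha)/(1+e^\beta) \geq e^{\alpha-\beta}$. Combining these gives $\mathsf J_\beta \geq e^{-(\beta-\alpha)}\mathsf J_\alpha$, which is even stronger than claimed by a factor of two.

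\emph{Upper bound.} Using $(1-r)e^{-\lam} = (1-r)^2/r$, the inequality $\mathsf J_\beta \leq 2 e^{\beta-\alpha}\mathsf J_\alpha$ is equivalent to the statement that
\begin{align*}
h(s) \defn \frac{(1-s^2)^2}{s^2}\, g(s)
\end{align*}
is nonincreasing up to a factor of $2$, that is, $h(s_\beta) \leq 2 h(s_\alpha)$ for $s_\alpha < s_\beta$. To analyze $g$, I would make the hidden Bregman structure explicit. Differentiating and using $P_s - \refdist = s(p-\refdist)$ and $Q_s - \refdist = s(q-\refdist)$ gives
\begin{align*}
g(s) = f(s) + \sum_x \refdist(x)\, \phi\big(P_s(x)/Q_s(x)\big), \qquad \phi(t) = t - 1 - \log t \geq 0,
\end{align*}
which is a sum of per-letter nonnegative terms, each a function of two quantities affine in $s$. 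I would then bound the logarithmic derivative of $g$ letter by letter. For each $x$, the goal is to show that the per-letter contribution $\gamma_x(s)$ satisfies $s\,\gamma_x'(s) \leq 2\gamma_x(s) + c\, \frac{s^2}{1-s^2}\gamma_x(s)$. This is the natural estimate: $\gamma_x$ behaves like $s^2\chi^2$ near $s = 0$, and it can blow up only logarithmically as $s \to 1$ when $q$ has zeros that $p$ does not. Integrating this differential inequality yields $h(s_\beta) \leq C\, h(s_\alpha)$, and the aim is to track constants carefully enough to obtain $C = 2$.

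The main obstacle is this last step, namely a clean uniform-in-$(p,q,\refdist)$ bound on $s\,g'(s)/g(s)$. My first attempt would use the semigroup structure $\Pmap_{s}\Pmap_{t} = \Pmap_{st}$. This lets me write $(P_{s_\alpha}, Q_{s_\alpha})$ as the image of $(P_{s_\beta}, Q_{s_\beta})$ under a reliability-$(s_\alpha/s_\beta)$ channel. Then I would compare $g$ at the two points through the strong data-processing behaviour of the two-point function $(a,b) \mapsto (a - v)\log(a/b) - (a/b)(b - v)$ along segments $a, b \to v$, where $v = \refdist(x)$ and the segments are parameterized linearly. If the constant does not come out as $2$ directly, I would fall back on splitting into the regime $s \leq 1/\sqrt 2$, where the factor $(1-s^2)^{-2}$ is bounded, and the regime near $s = 1$, which is handled by the $\phi$-representation.
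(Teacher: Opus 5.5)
\textbf{There is a genuine gap: the upper half of the sandwich, the one that carries the factor $2$, is not proved, and the route you sketch for it cannot work.}

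\textbf{What is correct.} Your reduction $\mathsf J_\lambda=\tfrac12\big(1-r(\lambda)\big)g(s_\lambda)$, with $g(s)=sf'(s)$, is correct. Your lower bound for $\alpha<\beta$ is also correct. Convexity of $f$ together with data-processing monotonicity makes $g$ nondecreasing, which gives $\mathsf J_\beta\ge e^{-(\beta-\alpha)}\mathsf J_\alpha$. That is the same constant the paper's Hessian comparison gives in this direction.

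\textbf{Why the missing half matters.} The remaining inequality is $\mathsf J_\beta\le 2e^{\beta-\alpha}\mathsf J_\alpha$ for $\alpha<\beta$. After swapping $\alpha$ and $\beta$, it also supplies the lower bound when $\beta<\alpha$. It is exactly the inequality (ii) that drives \Cref{LemGeoOneStep}.

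\textbf{Why your route fails.} Take $\gamma_x$ to be the nonnegative per-letter terms $(P_s-\refdist)\log\frac{P_s}{Q_s}-\frac{(P_s-Q_s)(Q_s-\refdist)}{Q_s}$ of $g$, and write $a=p(x)-\refdist(x)$, $b=q(x)-\refdist(x)$. Expanding gives
\[
\gamma_x(s)=\frac{s^2(a-b)^2}{\refdist(x)}\Big[1-s\,\frac{a+2b}{2\refdist(x)}+O(s^2)\Big],
\]
so
\[
s\gamma_x'/\gamma_x=2+s\,\frac{3\refdist(x)-p(x)-2q(x)}{2\refdist(x)}+O(s^2).
\]
\begin{itemize}
\item The excess over $2$ is of order $s$, not $s^2$, whenever $p(x)+2q(x)<3\refdist(x)$. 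Hence your per-letter inequality $s\gamma_x'\le 2\gamma_x+c\,\frac{s^2}{1-s^2}\gamma_x$ is false near $s=0$ for every $c$. The same failure occurs for $g$ itself: take $\refdist$ uniform on three letters, $p=(\epsilon,0,1-\epsilon)$ and $q=(0,\epsilon,1-\epsilon)$.
\item Even granting an inequality of your shape, it cannot give a uniform constant. Integrating it yields $h(s_\beta)/h(s_\alpha)\le\big((1-s_\alpha^2)/(1-s_\beta^2)\big)^{c/2-2}$. This stays bounded as $s_\beta\to 1$ only if $c\le 4$, that is, only if $h$ is nonincreasing. But $h$ is not monotone. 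In the three-letter example, letting $\epsilon\to 0$ and then $s_\alpha\to 0$, $s_\beta\to 1$ drives $h(s_\beta)/h(s_\alpha)$ to $2$, so the factor $2$ is sharp.
\item A smaller error: when $q$ has zeros that $p$ lacks, $g$ blows up like $(1-s)^{-1}$ through the term $\refdist\,\phi(P_s/Q_s)$, not logarithmically.
\end{itemize}

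\textbf{The missing idea.} Compare the two parameters at common points, instead of differentiating the whole divergence along $s$.
\begin{itemize}
\item Since $\Pmap_\lambda$ is affine, $\mathsf K_\lambda=\Breg_{\Phi_\lambda}$, hence $\mathsf J_\lambda=\Breg_{\Psi_\lambda}$ with $\Psi_\lambda=\frac{d}{d\lambda}\Phi_\lambda$. Therefore $\mathsf J_\lambda(p,q)=\int_0^1(1-\tau)(p-q)^{\mathsf T}\nabla^2\Psi_\lambda\big(q+\tau(p-q)\big)(p-q)\,d\tau$.
\item The Hessian is diagonal, with entries $\frac{s_\lambda^2(1-s_\lambda^2)}{2}\,\frac{w_a+\refdist_a}{w_a^2}$, where $w_a=[\Pmap_\lambda(z)]_a$.
\item For $\alpha<\beta$ and the same $z$, the ratio of the $\beta$-entry to the $\alpha$-entry is monotone in $x=z_a/\refdist_a\in[0,\infty)$. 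Its endpoint values are $e^{-(\beta-\alpha)}(s_\beta/s_\alpha)^3$ and $e^{\beta-\alpha}\frac{(2-s_\beta)(1+s_\beta)^2}{(2-s_\alpha)(1+s_\alpha)^2}\le 2e^{\beta-\alpha}$, which gives both bounds at once.
\end{itemize}

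\textbf{The correct form of your differential inequality.} If you want to keep a differential-inequality formulation, the correct one is $\frac{d}{ds}\log h(s)\le\frac{2}{1+s}-\frac{1}{2-s}$. It holds pointwise in $z$, and hence for $h$ as a positive mixture. Its excess is of order one near $s=0$, vanishes at $s=1$, and integrates over $[0,1]$ to exactly $\log 2$. This is precisely the behaviour your ansatz misses.
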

}
\noindent See~\Cref{SecProofLemDPI} for the proof of this claim.

We now use~\Cref{LemDPI} to complete the proof
of~\Cref{LemBivSandwich}.  Fix a coordinate $i$ and log-\SR-odds times
$\eta \in \real$ and $\eta + h$, where $h > 0$.  Write $\pi_{i,\zeta}
\defn \Law\big( \Zvar_i \mid \Xvar_{\newrevinv(\zeta),-i} \big)$ for
the leave-one-out posterior at log-\SR-odds level $\zeta$.  Let $s_\xi
\defn \sqrt{r(\xi)}$ be the ordinary reliability time associated with
$\xi$.

Since $h > 0$, our (backward) noising process generates
$\Xvar_{\newrevinv(\eta), -i}$ by adding additional independent noise
to $\Xvar_{\newrevinv(\eta+h), -i}$, meaning that the triple
$\Xvar_{\newrevinv(\xi),i} \longrightarrow
\Xvar_{\newrevinv(\eta+h),-i} \longrightarrow
\Xvar_{\newrevinv(\eta),-i}$ forms a Markov chain.  Consequently, we
have
\begin{subequations}
\begin{align}
  \label{EqnPostIncrement}
  \Ifun_i(\xi,\eta+h) - \Ifun_i(\xi,\eta) & \stackrel{(i)}{=}
  \Info\big( \Xvar_{\newrevinv(\xi),i}; \Xvar_{\newrevinv(\eta+h),-i}
  \mid \Xvar_{\newrevinv(\eta),-i} \big) \; \stackrel{(ii)}{=}\;
  \Exs\left[ \mathsf K_\xi\big( \pi_{i,\eta+h}, \pi_{i,\eta} \big)
    \right],
\end{align}
where step (i) uses the chain rule for mutual information; and step
(ii) uses the posterior tower property, the representation of
conditional mutual information as an expected KL, together with the
definition~\eqref{EqnLamKL}.

Differentiating equation~\eqref{EqnPostIncrement} in its first
argument yields $\partial_1 \Ifun_i(\xi, \eta + h) - \partial_1
\Ifun_i(\xi, \eta) = \Exs\left[ \mathsf J_\xi\big( \pi_{i,\eta + h},
  \pi_{i,\eta} \big) \right]$, and hence
  \begin{align}
    \label{EqnMixed}
    \partial_1 \partial_2 \Ifun_i(\xi, \eta) & = \lim_{h \downarrow 0}
    \frac{1}{h} \Exs\left[ \mathsf J_\xi\big( \pi_{i,\eta + h},
      \pi_{i,\eta} \big) \right].
  \end{align}
\end{subequations}
The sandwich bound~\eqref{EqnDPISandwich} from~\Cref{LemDPI}
guarantees that
\begin{align*}
  \frac{1}{2} e^{-|\xi - \eta|} \mathsf J_\eta\big( \pi_{i,\eta + h},
  \pi_{i,\eta} \big) & \leq \mathsf J_\xi\big( \pi_{i,\eta + h},
  \pi_{i,\eta} \big) \leq 2 e^{|\xi - \eta|} \mathsf J_\eta\big(
  \pi_{i,\eta + h}, \pi_{i,\eta} \big).
\end{align*}
We now take expectations, divide by $h > 0$, and pass to the limit.
Combined with equation~\eqref{EqnMixed}, we see that
\begin{align*}
  \frac{1}{2} e^{-|\xi - \eta|} \partial_1 \partial_2 \Ifun_i(\eta,
\eta) & \leq \partial_1 \partial_2 \Ifun_i(\xi, \eta) \leq 2 e^{|\xi -
  \eta|} \partial_1 \partial_2 \Ifun_i(\eta, \eta).
\end{align*}
Summing over coordinates $i$ yields the claim~\eqref{EqnBivSandwich}.

\subsubsection{Proof of~\Cref{LemDPI}}
\label{SecProofLemDPI}
Suppose first that $\refdist$ has full support.  Recall that the
function $\phi(x) \defn x \log x$ generates the KL divergence as the
Bregman divergence $\Breg_\phi$ given by
\begin{align*}
  \KL(p \| q) & \equiv \Breg_\phi(p \| q) \defn \sum_{a \in \Alphabet}
  \big \{ \phi(p_a) - \phi(q_a) - \phi'(q_a) \, (p_a - q_a) \big \}.
\end{align*}
Recall that $\mathsf K_\lam(p \| q) \defn \KL(\Pmap_\lam(p) \|
\Pmap_\lam(q))$ by definition, as well as the
definition~\eqref{EqnPath} of the path $\Pmap_\lam$.

Since $\Pmap_\lam$ is an affine function, it follows that $\mathsf
K_\lam(p \| q)$ can be seen as a family of Bregman divergences: in
particular, we have
\begin{align*}
  \mathsf K_\lam(p \| q) & = \Breg_{\Phi_\lambda}(p \| q) \qquad
  \mbox{where $\Phi_\lambda(z) = \sum \limits_{a \in \Alphabet} \phi
    \big( [\Pmap_\lambda(z)]_a\big)$.}
\end{align*}
Since the definition of a Bregman divergence is linear in
$\Phi_\lambda$, it follows that
\begin{align}
  \label{EqnKeyBregman}
\frac{d}{d\lambda} \Breg_{\Phi_\lambda}(p \| q) \; = \; \mathsf J_\lambda(p,
q) \; = \Breg_{\Psi_\lambda}(p \| q), \qquad \mbox{where $
  \Psi_\lambda(z) \defn \frac{d}{d \lambda} \Phi_\lambda(z)$.}
\end{align}
Consequently, by the integral form of a Bregman divergence, we can
write
\begin{align*}
  \mathsf J_\lambda(p, q) & = \int_0^1 (1 - t) (p - q)^{\mathsf T} \nabla^2
  \Psi_\lambda\big(q + t (p - q)\big) (p - q) \, dt.
\end{align*}
It suffices to consider $\alpha < \beta$: the case $\beta < \alpha$
follows by interchanging the two parameters and using the resulting
bounds in the reverse direction to obtain the symmetric bound in
equation~\eqref{EqnDPISandwich}, while $\alpha = \beta$ is immediate.
Consequently, in order to complete the proof, it suffices to show that
\begin{align*}
  \frac{1}{\ehack} \nabla^2\Psi_\alpha(z) & \preceq
  \nabla^2\Psi_\beta(z) \preceq 2 \ehack \nabla^2\Psi_\alpha(z) \qquad
  \mbox{where $\ehack \defn e^{\beta - \alpha}$ for an arbitrary pair
    $\alpha < \beta$.}
\end{align*}

We now compute $\Psi_\lambda$.  Observe that
$\frac{d}{d\lambda}s_\lambda = \frac{1}{2} s_\lambda(1 -
s_\lambda^2)$.  Thus, by the chain rule, together with the relation
$\sum_{a \in \Alphabet} (z_a - \refdist_a) = 0$, we
obtain
\begin{subequations}
\begin{align}
\label{EqnBregmanGen}  
  \Psi_\lambda(z) & = \frac{1}{2} s_\lambda(1 - s_\lambda^2)
  \sum_{a \in \Alphabet} (z_a - \refdist_a) \log
      [\Pmap_\lambda(z)]_a.
\end{align}
Since $\Psi_\lambda$ is additive across coordinates, its Hessian is
diagonal with
\begin{align}
  \label{EqnJHessian}
  [\nabla^2\Psi_\lambda(z)]_{aa} & = \frac{s_\lambda^2(1 -
    s_\lambda^2)}{2} \frac{ [\Pmap_\lambda(z)]_a + \refdist_a }{
    [\Pmap_\lambda(z)]_a^2 }.
\end{align}
\end{subequations}

For a fixed $\alpha < \beta$, introduce the shorthand $s \defn
s_\alpha$ and $t \defn s_\beta$.  Since $\lambda$ is the log-\SR-odds
parameter, we can compute $\ehack = e^{\beta - \alpha} = \frac{t^2(1 -
  s^2)}{s^2(1 - t^2)}$.  For a fixed symbol $a$, set $x \defn
z_a/\refdist_a$ and $y_u(x) \defn 1 + u(x - 1)$.  Then
$[\Pmap_\alpha(z)]_a = \refdist_a y_s(x)$ and $[\Pmap_\beta(z)]_a =
\refdist_a y_t(x)$, so that the Hessian from
equation~\eqref{EqnJHessian} gives the ratio
\begin{align*}
  R(x) & \defn \frac{ [\nabla^2\Psi_\beta(z)]_{aa} }{
    [\nabla^2\Psi_\alpha(z)]_{aa} } = \frac{t^2(1 - t^2)}{s^2(1 -
    s^2)} \frac{(y_t(x) + 1) y_s^2(x)}{y_t^2(x)(y_s(x) + 1)}.
\end{align*}
To complete the proof, it suffices to show that $R(x) \in \big[
  \frac{1}{\ehack}, 2 \ehack]$.

We can compute
\begin{align*}
  R'(x)
  & =
  \frac{t^2(1 - t^2)}{s^2(1 - s^2)}
  \frac{2 (s - t) y_s(x)
  \big( y_s(x) + y_t(x) + 1 \big)}
  {\big( y_s(x) + 1 \big)^2 y_t^3(x)}
  \leq 0.
\end{align*}
Since $s < t$, the derivative is nonpositive, so $R(x)$ must lie
between its endpoint values.  Note that $x =
z_a/\refdist_a$ takes
values in the interval $[0, \infty)$, and we have
\begin{align*}
  \lim_{x \to \infty} R(x) = \frac{1}{\ehack} \left( \frac{t}{s}
  \right)^3 \geq \frac{1}{\ehack} \quad \mbox{and} \quad R(0) = \ehack
  \frac{(2 - t)(1 + t)^2}{(2 - s)(1 + s)^2} \leq 2 \ehack,
\end{align*}
where the last inequality uses that the function $u \mapsto (2 - u)(1
+ u)^2$ increases from $2$ to $4$ over the interval $[0,1]$.

Finally, for an arbitrary $\refdist$, we can approximate it by a
sequence of full-support reference distributions and pass to the limit
(assuming that the displayed derivatives are finite).


\subsection{Proof of~\Cref{PropDataSingle}}
\label{SecProofPropDataSingle}

We introduce the shorthand notation $\alpha \defn \logodds(a)$, $\beta
\defn \logodds(b)$, and $\xi \defn \logodds(u)$.  By the definitions
of $\Dinfo_u(a, b)$ and $\Bivq$, we have
\begin{align*}
\Dinfo_u(a, b) = \int_\alpha^\beta \Bivq(\eta, \xi) \, d\eta, \quad
\mbox{and} \quad \Ginfo(a, b) = \int_\alpha^\beta \Bivq(\eta, \eta)
\, d\eta.
\end{align*}
For every $\eta \in [\alpha, \beta]$, we have $e^{|\xi - \eta|} \leq
c_u(a, b)$.  Applying inequality~\eqref{EqnBivSandwich} pointwise and
integrating gives
\begin{align*}
\frac{1}{2 c_u(a, b)} \Ginfo(a, b) & \leq \Dinfo_u(a, b) \leq 2 c_u(a,
b) \Ginfo(a, b).
\end{align*}
Rearranging proves inequality~\eqref{EqnDinfoIncrementEstimate}.  For
the log-odds midpoint, we have $c_{u_\star}(a, b) =
\sqrt{\odds(b)/\odds(a)}$, which gives
inequality~\eqref{EqnDinfoMidpointEstimate}.

It remains to prove the
representation~\eqref{EqnDinfoKLRepresentation} of the one-step KL
error.  From the integral representation in~\Cref{PropBivIntegral}, we
have
\begin{align*}
\DenKL(a, b) & = \int_\alpha^\beta \int_\eta^\beta \Bivq(\eta, \xi)
\, d\xi \, d\eta \; = \; \int_\alpha^\beta \int_\alpha^\xi \Bivq(\eta,
\xi) \, d\eta \, d\xi,
\end{align*}
where the second equality follows by reversing the order of
integration.  For each $\xi \in [\alpha, \beta]$, define $v \defn
\frac{e^\xi}{1 + e^\xi}$ so that $\xi = \logodds(v)$.  Applying the
first identity above with the pair $(a, v)$ and anchor $u = v$ gives
\begin{align*}
\Dinfo_v(a, v) & = \int_\alpha^\xi \Bivq(\eta, \xi) \, d\eta.
\end{align*}
Substituting this identity into the preceding expression yields
\begin{align*}
\DenKL(a, b) & = \int_\alpha^\beta \Dinfo_{\frac{e^\xi}{1 + e^\xi}}
\left(a, \frac{e^\xi}{1 + e^\xi}\right) \, d\xi \; = \; \int_a^b
\frac{\Dinfo_v(a, v)}{v (1 - v)} \, dv,
\end{align*}
where the final equality follows from the change of variables $\xi =
\logodds(v)$, for which $d\xi = dv/{v (1 - v)}$.  This proves the
claim~\eqref{EqnDinfoKLRepresentation}.

\section{Discussion}

In this paper, we have put forth the \igclong (\igc) as an
information-geometric quantity for measuring the structure of
product-reference diffusion (\prd) paths for discrete sampling.  It
provides quantitative bounds on the iteration complexity of various
\prd samplers, both those using constant stepsizes and those with
optimized stepsizes according to a square-root criterion.  Sampling
complexity is not determined solely by the amount of dependence in the
target distribution, but also by where and how rapidly this dependence
is removed by the noising process.  Our \igc theory formalizes this
idea through a second derivative of mutual information along the
diffusion path.  Moreover, we provide a class of statistical
estimators for \igc increments, as well as for the KL error itself,
based on a Bregman divergence that emerges from our theory.  Thus, the
\igc measure provides a unified pathwise description of local
factorization error, finite-step KL discretization, optimal
scheduling, and classical multivariate dependence.

Our perspective makes it possible to compare different mechanisms for
destroying information.  The \igc theory allows the terminal reference
to be an arbitrary product distribution. Each reference induces a
different \igc complexity, thereby yielding a principled way of
studying how the choice of reference affects sampling complexity.  We
gave some preliminary results in this direction
in~\Cref{SecReference}, and we suspect that a more systematic
investigation could be fruitful.  More broadly, our earlier
work~\cite{Wai_UGC} showed that masking diffusion can be characterized
by a related information-geometric functional known as the \ugc
measure.  Together with the \igc representation developed here, these
results provide a principled basis for comparing masking and noising
diffusions on a given target distribution, and for understanding when
one mechanism may be preferable to the other.


\subsubsection*{Acknowledgements}
This work was partially supported by a Guggenheim Fellowship, an NSF
grant (DMS-2311072), and the Ford Professorship at MIT.  We thank
Yuting Wei for her inspiring talk during the MIT Statistics and Data
Science conference in spring 2026.

\bibliographystyle{alpha_initials} {\small{
    \bibliography{discrete_refs} }}


\appendix

\section{Parameterization in terms of the LOO posterior}
\label{AppLOO}
The ordinary and leave-one-out posteriors are related by Bayes' rule:
\begin{subequations}\begin{align}
  \label{EqnDenoiserLooBridge}
  \Den_{i,a}(w, x)
  & = \frac{\Refdist_{i,a}(x_i \mid w)
    \DenLoo_{a,i}(w, x_{-i})}
  {\displaystyle \sum_{u \in \Alphabet}
    \Refdist_{i,a}(x_i \mid u) \DenLoo_{a,i}(u, x_{-i})}.
\end{align}
Substituting~\eqref{EqnDenoiserLooBridge} into
\eqref{EqnDenoiserBridge} gives the equivalent leave-one-out
representation
\begin{align}
  \label{EqnLooBridge}
  \Prob_i^{a \to b}(y \mid x)
  & = \frac{\Refdist_{i,a/b}(x_i \mid y)
    \displaystyle \sum_{w \in \Alphabet}
    \Refdist_{i,b}(y \mid w) \DenLoo_{a,i}(w, x_{-i})}
  {\displaystyle \sum_{w \in \Alphabet}
    \Refdist_{i,a}(x_i \mid w) \DenLoo_{a,i}(w, x_{-i})}.
\end{align}
\end{subequations}

\section{Proof of~\Cref{PropBivIntegral}}
\label{SecProofBivIntegral}

It only remains to prove representations~\eqref{EqnBivTC} of the total
correlation and dual total correlation in terms of the bivariate \igc
density~\eqref{EqnDefnBivQdens}.

Define the functions $\Tfun(\xi) \defn \TC(\Xvar_{\newrevinv(\xi)})$
and $\Dfun(\eta) \defn \DTC(\Xvar_{\newrevinv(\eta)})$.  By the
definition of mutual information, we have $\Ifun_i(\xi, \eta) =
\Ent(\Xvar_{\newrevinv(\xi), i}) + \Ent(\Xvar_{\newrevinv(\eta), -i})
- \Ent\big( \Xvar_{\newrevinv(\xi), i}, \Xvar_{\newrevinv(\eta), -i}
\big)$, and consequently
\begin{align*}
  \partial_1 \Ifun_i(\xi, \eta) = \frac{d}{d\xi}
  \Ent(\Xvar_{\newrevinv(\xi), i}) - \left. \partial_{\lambda_i}
  \Ent(\Xvar_{\rbold(\blam)}) \right|_{\blam = \xi \evec_i + \eta
    (\onevec - \evec_i)}.
\end{align*}
Moreover, note that we have $\frac{d}{d\xi}
\Ent(\Xvar_{\newrevinv(\xi)}) \; = \; \sum_{i = 1}^{\usedim}
\left. \partial_{\lambda_i} \Ent(\Xvar_{\rbold(\blam)}) \right|_{\blam
  = \xi \onevec}$.  Thus, by differentiating the definition of total
correlation, we obtain
\begin{subequations}
  \begin{align}
\label{EqnTCDeriv}    
  \Tfun'(\xi) & = \frac{d}{d\xi} \left\{ \sum_{i = 1}^{\usedim}
  \Ent(\Xvar_{\newrevinv(\xi), i}) - \Ent(\Xvar_{\newrevinv(\xi)})
  \right\} \; = \; \sum_{i = 1}^{\usedim} \partial_1 \Ifun_i(\xi,
  \xi).
\end{align}

Similarly, we have $\partial_2 \Ifun_i(\xi, \eta) = \frac{d}{d\eta}
\Ent(\Xvar_{\newrevinv(\eta), -i}) - \sum_{j \neq i}
\left. \partial_{\lambda_j} \Ent(\Xvar_{\rbold(\blam)}) \right|_{\blam
  = \xi \evec_i + \eta (\onevec - \evec_i)}$, as well as the identity
$\sum_{i = 1}^{\usedim} \sum_{j \neq i} \left. \partial_{\lambda_j}
\Ent(\Xvar_{\rbold(\blam)}) \right|_{\blam = \eta \onevec} = (\usedim
- 1) \frac{d}{d\eta} \Ent(\Xvar_{\newrevinv(\eta)})$.  Thus,
differentiating the formula for dual total correlation yields
\begin{align}
  \label{EqnDTCDeriv}
  \Dfun'(\eta) & = \frac{d}{d\eta} \left\{ \sum_{i = 1}^{\usedim}
  \Ent(\Xvar_{\newrevinv(\eta), -i}) - (\usedim - 1)
  \Ent(\Xvar_{\newrevinv(\eta)}) \right\} \; = \; \sum_{i =
    1}^{\usedim} \partial_2 \Ifun_i(\eta, \eta).
\end{align}
\end{subequations}

Combining these identities with the boundary relations $\Ifun_i(\xi,
-\infty) = \Ifun_i(-\infty, \eta) = 0$ yields the identities
\begin{align*}
  \Tfun'(\xi) & = \sum_{i = 1}^{\usedim} \int_{-\infty}^{\xi}
  \partial_1 \partial_2 \Ifun_i(\xi, \eta) \, d\eta \; = \;
  \int_{-\infty}^{\xi} \Bivq(\eta, \xi) \, d\eta, \qquad \mbox{as well
    as} \\
  \Dfun'(\eta) & = \sum_{i = 1}^{\usedim} \int_{-\infty}^{\eta}
  \partial_1 \partial_2 \Ifun_i(\xi, \eta) \, d\xi \; = \;
  \int_{-\infty}^{\eta} \Bivq(\eta, \xi) \, d\xi.
\end{align*}
Since $\Tfun(-\infty) = \Dfun(-\infty) = 0$, $\Tfun(\infty) =
\TC(\Zvar)$, and $\Dfun(\infty) = \DTC(\Zvar)$, the two
claims~\eqref{EqnBivTC} follow from the fundamental theorem of
calculus.

\end{document}